\pgfplotsset{compat=1.17}
\DeclareMathOperator{\bary}{Bary}
\DeclareMathOperator{\cov}{CoV}
\newcommand{\D}{\mathcal{D}}
\newcommand{\Pb}{\mathcal{P}}
\newcommand{\Y}{\mathbf{Y}}
\newcommand{\R}{\mathbb{R}}
\newcommand{\tm}{\Tilde{\mu}}
\newcommand{\tn}{\Tilde{\nu}}
\newcommand{\tpi}{\Tilde{\pi}}
\def\y{\mathbf{y}}
\def\x{\mathbf{x}}
\def\Lam{\mathbf{\Lambda}}
\def\lam{\boldsymbol{\lambda}}
\def\A{\mathbf{A}}
\newtheorem{theorem}{Theorem}
\newtheorem{proposition}{Proposition}
\newtheorem{corollary}{Corollary}
\newtheorem{definition}{Definition}
\DeclareMathOperator*{\argmin}{arg\,min}
\begin{document}

\title{Geometric Sparse Coding in Wasserstein Space}


        \author{\hspace{100pt}Marshall Mueller\footnote{Department of Mathematics, Tufts University, Medford, MA 02155, USA (\url{marshall.mueller@tufts.edu}, \url{jm.murphy@tufts.edu}, \url{abiy.tasissa@tufts.edu})}
	\and \quad   Shuchin Aeron\footnote{Department of Electrical and Computer Engineering, Tufts University, Medford, MA 02155, USA (\url{shuchin@ece.tufts.edu})} \hspace{95pt}\vspace{5pt}
 \and \hspace{-0.5em}James M. Murphy\footnotemark[1]  \footnote{Corresponding authors, listed alphabetically} \and \hspace{-20pt} \hspace{-0.3em}Abiy Tasissa\footnotemark[1]  \footnotemark[3]}
        \maketitle

\begin{abstract}
    Wasserstein dictionary learning is an unsupervised approach to learning a collection of probability distributions that generate observed distributions as Wasserstein barycentric combinations. Existing methods for Wasserstein dictionary learning optimize an objective that seeks a dictionary with sufficient representation capacity via barycentric interpolation to approximate the observed training data, but without imposing additional structural properties on the coefficients associated to the dictionary. This leads to dictionaries that densely represent the observed data, which makes interpretation of the coefficients challenging and may also lead to poor empirical performance when using the learned coefficients in downstream tasks. In contrast and motivated by sparse dictionary learning in Euclidean spaces, we propose a geometrically sparse regularizer for Wasserstein space that promotes representations of a data point using only nearby dictionary elements. We show this approach leads to sparse representations in Wasserstein space and addresses the problem of non-uniqueness of barycentric representation. Moreover, when data is generated as Wasserstein barycenters of fixed distributions, this regularizer facilitates the recovery of the generating distributions in cases that are ill-posed for unregularized Wasserstein dictionary learning. Through experimentation on synthetic and real data, we show that our geometrically regularized approach yields sparser and more interpretable dictionaries in Wasserstein space, which perform better in downstream applications.  
\end{abstract}
\section{Introduction}

A central goal of statistical signal processing is the discovery of latent
structures in complex data.  Indeed, although data often resides in a high-dimensional ambient space, the classical \emph{manifold hypothesis} posits that in fact, the data can be well approximated by low-dimensional manifolds or mixtures thereof, which circumvents the curse of dimensionality that plagues high-dimensional statistics. Linear dimensionality reduction methods such as principal component analysis (PCA) \citep{hotelling1933analysis} and non-linear manifold learning approaches that exploit local connectivity structure in the data 
\citep{scholkopf1997kernel, tenenbaum2000global, roweis2000nonlinear, belkin2003laplacian, coifman2006diffusion} rely on this assumption of intrinsically low-dimensional structure in high-dimensional data to glean insights. These techniques typically output a low-dimensional representation preserving local geometric structures such as pairwise distances or geodesic distances with respect to a Riemannian metric. 

An alternative perspective for efficiently representing complex data is the sparse coding and dictionary learning paradigm \citep{olshausen1996emergence, olshausen1997sparse, barlow1961possible,hromadka2008sparse}.  In the simplest setting when data is considered as elements of $\mathbb{R}^{d}$ (or more generally a normed vector space), the aim of sparse coding is to represent data $\{\y_{i}\}_{i=1}^{n}\subset\mathbb{R}^{d}$, stacked as rows in the matrix $\Y \in \R^{n\times d}$, as a linear combination of vectors $\{\mathbf{d}_j\}_{j=1}^{m}$, stacked as a \emph{dictionary} matrix $\mathbf{D}\in \R^{m\times d}$ such that $\Y\approx \mathbf{\Lambda}\mathbf{D}$ for some coefficients $\Lam\in \mathbb{R}^{n\times m}$, perhaps subject to constraints on $\Lam$.  When the dictionary $\mathbf{D}$ is fixed, this reduces to an optimization over $\mathbf{\Lambda}$ \citep{mallat1999wavelet, engan2000multi}.  More generally, $\mathbf{D}$ and $\Lam$ can be learned simultaneously  with some additional constraints on the dictionary or coefficients \citep{lee1999learning, aharon2006k}. This is typically formulated as an optimization problem 
\begin{equation}\label{eqn:DL}\argmin_{\mathbf{D},\Lam}\,\,\mathcal{L}(\Y,\Lam\mathbf{D})+\rho\,\mathcal{R}(\mathbf{D},\Lam),\end{equation}for some loss function $\mathcal{L}$ (e.g., $\mathcal{L}(\Y,\Lam\mathbf{D})=\|\Y-\Lam\mathbf{D}\|_{F}$) and regularization function $\mathcal{R}$ (e.g., $\mathcal{R}(\mathbf{D},\Lam)=\|\Lam\|_{1}$) balanced by a parameter $\rho>0$.  The regularizers ensure well-posedness of the problem and improve interpretability and robustness. 
 The problem (\ref{eqn:DL}) is the \emph{dictionary learning problem} in $\mathbb{R}^{d}$.

The imposed Euclidean structure is convenient computationally but limiting in practice, as many real data are better modeled as living in spaces with non-Euclidean geometry where instead $\Y \approx \mathcal{F}(\mathbf{D}, \mathbf{\Lambda})$ for some nonlinear reconstruction function $\mathcal{F}$ \citep{tuzel2006region,tuzel2007human, li2008visual, guo2010action, harandi2013dictionary, 
harandi2015sparse,cherian2016riemannian,yin2016kernel,maggioni2016multiscale, liu2018kernel, schmitz2018Wasserstein, tankala2020k}. 
 Important questions in this setting are what notion of reconstruction should take the place of linear combination (i.e. $\mathcal{F}$), how reconstruction quality is assessed without the use of a global norm (i.e. $\mathcal{L}$), and what constraints are natural on the coefficients in the nonlinear space (i.e. $\mathcal{R}$).

This paper focuses on nonlinear sparse coding and dictionary learning  for data that are modeled as \emph{probability distributions in Wasserstein space}. 
 This basic framework was pioneered by \citet{schmitz2018Wasserstein}, where the authors leverage the theory and algorithms of optimal transport to propose the \emph{Wasserstein dictionary learning (WDL)} algorithm, whereby a data point (interpreted as a probability distribution or histogram in $\mathbb{R}^{d}$) is approximated as a Wasserstein barycenter \citep{agueh2011barycenters} of the learned dictionary atoms.  The resulting framework is focused on learning a dictionary that reconstructs well, but neglects other desirable aspects of a dictionary such as sparsity of the induced coefficients.  Moreover, this classical scheme is ill-posed in two senses: for a fixed dictionary, unique coefficients are not assured;  moreover, there may be multiple dictionaries that enable perfect reconstruction of the observed data.

\textbf{Summary of Contributions:}  We generalize the classical WDL algorithm \citep{schmitz2018Wasserstein} by incorporating a novel Wasserstein \emph{geometric sparse regularizer}. Our regularizer encourages an observed data point to be reconstructed as a barycentric representation from \emph{nearby} (in the sense of Wasserstein distances) dictionary atoms. As we vary the balance parameter for this regularizer, the proposed method interpolates between classical WDL (no regularization) and Wasserstein $K$-means (strong regularization). Unlike the original formulation, the proposed regularizer learns dictionary atoms with geometric similarity to the training data.  Theoretically, we demonstrate the ability of the model to learn sparse coefficients and to overcome issues of non-uniqueness both at the level of learning coefficients for a fixed dictionary, and for the general WDL problem on an idealized generative model in Wasserstein space.  Empirically, we provide evidence that our regularized dictionary learning scheme yields more interpretable and useful coefficients for downstream classification tasks; the code to reproduce all experiments in this paper will be released on the authors' GitHub.

\textbf{Notations and Preliminaries:} Lowercase and uppercase boldface letters denote (column) vectors and matrices, respectively. We generally use Greek letters to denote measures, with the exception that $\mathcal{D}=\{\mathcal{D}\}_{j=1}^{m}$ denotes the dictionary when its elements are measures. We denote the Euclidean norm of a vector $\x$ as $||\x||_{2}$.  Let\[\Delta^m = \left\{\mathbf{x} \in \R^m\ \bigg| \ \sum_{i=1}^m x_i = 1, \forall i=1,\ldots, m,  x_i \geq 0\right\}\] denote the discrete probability simplex of dimension $m$. Softmax, as a change of variables, is defined as $\cov(\x) := \exp(\x) / \exp(\x)^T \mathbbm{1}_N$.  Here we take the exponential to be an elementwise operation on the vector and use $\mathbbm{1}_N$ to denote the ones vector of size $N$. When we write $\cov(\mathbf{X})$ for some matrix $\mathbf{X}\in \R^{n \times m}$ we take it to mean applying the change of variables to each row.

\section{Background and Related Work}

\textbf{Classical Dictionary Learning:}  In equation (\ref{eqn:DL}), using $\mathcal{L}(\Y,\Lam\mathbf{D})=\|\Y-\Lam\mathbf{D}\|_{F}$ and $\rho=0$ yields an optimization problem with optimal dictionary and coefficients given by the $m$ singular components with largest singular values \citep{eckart1936approximation}.  To promote interpretable, sparse coefficients that still realize $\Y\approx \Lam\mathbf{D}$, the prototypical regularized dictionary learning problem is $\underset{\mathbf{D},\Lam}{\min}\, ||\Y- \Lam\mathbf{D}||_F^2+\rho||\Lam||_{1}$ where $||\Lam||_{1}=\sum_{i=1}^{n}\sum_{j=1}^{m}|\Lambda_{ij}|$ is a sparsity-promoting regularizer \citep{donoho2006compressed, elad2010sparse}.  Beyond enhancing interpretability, efficiency, and uniqueness of representation, sparse representations improve generalization of efficient supervised learning \citep{mehta2013sparsity,mairal2011task}.  In the non-negative matrix factorization (NMF) paradigm, non-negativity constraints are imposed on the atoms and coefficients, which increases their interpretability and effectiveness in downstream applications \citep{lee1999learning, lee2000algorithms,berry2007algorithms}.

\textbf{Optimal Transport:}  We provide basic background on optimal transport; for more general treatments and theory, see \citep{ambrosio2005gradient, villani2021topics,santambrogio2015optimal,peyre2019computational}. 
Let $\Pb(\R^d)$ be the space of probability measures in $\mathbb{R}^{d}$.  Let $\mu, \nu  \in \Pb(\R^d)$.  Let \begin{align*}\Pi(\mu, \nu)=&\{\gamma:\mathbb{R}^{d}\times\mathbb{R}^{d}\rightarrow\mathbb{R} \ | \ \text{for all } A,B \text{ Borel},
 \ \gamma(A\times\mathbb{R}^{d})=\mu(A), \ \gamma(\mathbb{R}^{d}\times B)=\nu(B) \}\end{align*}be the set of joint distributions with marginals $\mu$ and $\nu$. The squared \emph{Wasserstein$-2$ distance} is defined as: 
\begin{equation}\label{eq:W2}
    W_2^2(\mu, \nu) := \min_{\pi \in \Pi(\mu, \nu)} \int_{\R^d \times \R^d} \Vert \x - \y \Vert_2^2\ \pi(\x,\y).
\end{equation}
Given measures $\{\D_j\}_{j=1}^m \subset  \Pb(\R^d)$ that have finite second moments, along with a vector $\boldsymbol{\lambda}\in\Delta^{m}$, the Wasserstein-(2) barycenter \citep{agueh2011barycenters} is defined as:
\begin{equation}
\label{eq:barycenter}
    \bary(\D, \boldsymbol{\lambda}) := \argmin_{\mu \in \Pb(\R^d)} \sum_{j=1}^m \lambda_j W_2^2(\D_j, \mu).
\end{equation}
The measure $\bary(\D, \boldsymbol{\lambda})$ can be interpreted as a weighted average of the $\{\D_j\}_{j=1}^m$, with the impact of $\D_{j}$ proportional to $\lambda_{j}$.  Wasserstein barycenters have proven useful in a range of applications, and are in a precise sense the ``correct" way to combine measures, in that $\bary(\D,\lam)$ preserves the geometric properties of $\D=\{\D_j\}_{j=1}^m$ in a way that linear mixtures do not \citep{agueh2011barycenters, rabin2011wasserstein, cuturi2014fast, bonneel2016wasserstein}.  Wasserstein barycenters are intimately connected to geodesics in Wasserstein space, in the following sense.  For $\pi^{*}$ optimizing (\ref{eq:W2}), the \emph{McCann interpolation} of $\mu,\nu$ is $(P_t)_\# \pi^{*}$ where $P_{t}(\mathbf{x},\mathbf{y}) = (1-t)\mathbf{x} + t \mathbf{y}$ for $t \in [0,1]$ and where $(P_{t})_{\#}$ denotes the pushforward by $P_{t}$.  The McCann interpolation is the constant-speed geodesic between $\mu,\nu$ in the Wasserstein-2 space \citep{mccann1997convexity, ambrosio2005gradient} and coincides with the Wasserstein barycenter with weight $\lam=(1-t, t)$ on $\mu, \nu$.

\section{Geometric Sparse Regularization for Wasserstein Dictionary Learning}
\label{sec:GWDL}
WDL \citep{schmitz2018Wasserstein} aims to find a dictionary of probability distributions $\D = \{\D_{j}\}_{j=1}^{m}\subset \Pb(\R^d)$ such that observed data $\{\mu_i \}_{i=1}^n\subset\Pb(\R^d)$ can be represented as Wasserstein barycenters of the collection $\D$.  The precise optimization problem is
\begin{equation}
\label{eq:WDL}
    \argmin_{\substack{\D \in \Pb(\R^d)^m\\\Lam \in (\Delta^m)^n}} \,\,\sum_{i=1}^n \mathcal{L}(\bary(\D, \lam_i), \mu_i),
\end{equation}
where the loss function $\mathcal{L}$ is typically taken to be $W_2^2$ and $\lam_{i}\in\Delta^{m}$ is a (column) vector of size $m$, corresponding to a row of $\Lam$. In other words, solving this problem finds the dictionary of probability distributions that finds best approximations to each data point $\mu_i$ using barycentric combinations of $\D$.  WDL was proposed in part as an alternative to geodesic principal component analysis in Wasserstein space \citep{boissard2015distribution, seguy2015principal, bigot2017geodesic}, and has proven highly effective in terms of producing meaningful atoms for representing probability distributions.  However, it may yield non-sparse coefficients. 
 Moreover, as we will establish, it is ill-posed both at the level of having non-unique coefficients for a fixed dictionary $\D$ and at the level of having multiple dictionaries that can reconstruct the data perfectly.

Thus, one might consider adding the $\ell_1$ regularizer to the WDL objective to induce desirable sparsity of the representation.  However, this fails because all coefficients of a Wasserstein barycenter lie on $\Delta^{m}$.  Methods to promote sparsity of coefficients on the simplex can be done with entropy, projections, and suitable use of the the $\ell^{2}$ norm \citep{donoho1992maximum, shashanka2007sparse, larsson2011concave, kyrillidis2013sparse, li2020methods}, but we focus instead on geometric regularization.  An analogous problem has been studied in the linear setting in a situation where the weights are on $\Delta^{m}$ \citep{tankala2020k}.  In that context, the \emph{geometric sparse regularizer} $\displaystyle\sum_{j=1}^m \lambda_j ||\y - \mathbf{d}_j ||_2^2$ (for an individual data point $\y$ with representation coefficient $\boldsymbol{\lambda}$) has been proven to promote sparsity by favoring \emph{local representations}, namely reconstructing using nearby (with respect to Euclidean distances) dictionary atoms.  

We propose to regularize (\ref{eq:WDL}) with a novel \emph{Wasserstein geometric sparse regularizer}:
\begin{equation}\label{eqn:R}
    \mathcal{R}(\D, \Lam) := \sum_{i=1}^n \sum_{j=1}^m (\lam_i)_j W_2^2(\D_j, \mu_i).
\end{equation}
This yields a new, regularized objective:
\begin{align}
\label{eqn:GWDL}    \mathcal{F}(\D, \Lam, \{\mu_i\}_{i=1}^n) :=& \min_{\substack{\D \in \Pb(\R^d)^m\\\Lam \in (\Delta^m)^n}} \,\,\sum_{i=1}^n W_2^2(\bary(\D, \lam_i), \mu_i) + \rho \sum_{i=1}^n \sum_{j=1}^m (\lam_i)_j W_2^2(\D_j, \mu_i), 
\end{align}
where $\rho>0$ is a tuneable balance parameter.  

Note that the geometric sparse regularizer (\ref{eqn:R}) resembles the objective in the definition of the Wasserstein barycenter in \eqref{eq:barycenter}. Crucially, in the unregularized formulation of the WDL problem, the distance from the atoms to the data points is only indirectly minimized, because the WDL formulation focuses solely on the reconstruction accuracy of the generated barycenters and not directly about how similar those atoms are to the data points. As such, the atoms may be arbitrarily far from the data provided that the reconstruction accuracy is low. As discussed below, this may cause the existence of arbitrarily many solutions to the WDL problem (\ref{eq:WDL}), which in general is ill-posed.

\textbf{Interpretations of $\mathcal{R}(\D, \Lam)$:}  The regularization term $\mathcal{R}(\D, \Lam)$ is analogous to Laplacian smoothing in Euclidean space \citep{cai2010graph, dornaika2019sparse} and can be interpreted as non-linear archetypal learning  \citep{cutler1994archetypal} in Wasserstein space.  As previously mentioned, there is no notion of sparse coding in the unregularized WDL formulation. The geometric sparse regularizer promotes sparsity by penalizing the use of atoms that are far from the data to be represented and thus acts as a weighted $\ell_1$ norm on the coefficients \citep{tasissa2021weighed}.

\textbf{Connection with Wasserstein $K$-means:} In this problem \citep{domazakis2019clustering, verdinelli2019hybrid, zhuang2022wasserstein}, given observed measures $\{\mu_{i}\}_{i=1}^{n}\subset\Pb(\mathbb{R}^{d})$ we want to find ``centers" $\D=\{\D_{j}\}_{j=1}^{m}\subset\Pb(\mathbb{R}^{d})$ solving the optimization problem
\[\underset{\mathbf{C}, \D}{\min}\,\,\sum_{i=1}^{n} \sum_{j=1}^{m}    C_{ij} W_2^2(\D_j, \mu_i),\]
where $\mathbf{C}\in\mathbb{R}^{n\times m}$ is such that $C_{ij} \in \{0,1\}$ is a binary assignment variable satisfying $\sum_{j=1}^{m} C_{ij}=1$ for all $i=1,\dots,n$.  Suppose $\D^* \in \Pb(\R^d)^m$ and $\Lam^* \in \R^{n\times m}$ are the optimizers of (\ref{eqn:GWDL}). Note that for \emph{any} feasible $\Lam$ and with dictionary fixed at $\D^{*}$, we have
\begin{align*}
\mathcal{R}(\D^*, \Lam) =&\sum_{i=1}^n \sum_{j=1}^m (\lam_i)_j W_2^2(\D^*_j, \mu_i)\\
\geq &\sum_{i=1}^n \sum_{j=1}^m (\lam_i)_j \underset{1\le p\le m}{\min} W_2^2(\D^*_p, \mu_i)\\ =& \sum_{i=1}^n  \underset{1\le p\le m}{\min} W_2^2(\D^*_p, \mu_i).
\end{align*}
Thus, for fixed $\D^{*}$, coefficients that minimize $\mathcal{R}(\D^*, \Lam)$ have the property that the $i^{th}$ row is all zeros except for a $1$ at the $i^*$ position where $i^* = \underset{1\le p\le m}{\argmin}\,\,W_2^2(\D^*_p, \mu_i)$. In this sense, for a fixed dictionary $\D^{*}$ and with each observation $\mu_{1},\dots,\mu_{n}$ having a unique nearest neighbor in $\D^{*}$, the optimal solution $\Lam^*$ is a matrix whose rows are binary and $1$-sparse, which is exactly of the same form as the binary assignment in Wasserstein $K$-means. When the aforementioned assumption does not hold, uniqueness is not guaranteed but the $1$-sparse solution is in the family of optimal solutions. 

In this sense, incorporating the geometric sparse regularizer (\ref{eqn:R}) into the main objective in \eqref{eq:WDL} with a scaling parameter $\rho$ enables interpolation between learning a dictionary for pure reconstruction $(\rho=0)$ and one with sparsity promoted via $K$-means $(\rho\gg 0)$.  Indeed, the optimization \eqref{eqn:GWDL} is like a \emph{soft Wasserstein $K$-means}, in that it promotes assigning coefficient energy to a single, closest atom.

\textbf{Sparse Coding for Fixed Dictionary:}  For a fixed dictionary $\{\D_j\}_{j=1}^m \subset  \Pb(\R^d)$ and a target measure $\mu$ , we consider the following problem:
\begin{align}\label{eqn:SparseCode}
    &\underset{\lam \in \Delta^m}{\arg\min}\,\, \sum_{j=1}^{m} \lambda_{j} W_2^2(\D_j, \mu) \,\, \text{subject to}  \,\,\mu = \bary(\D, \lam).
\end{align}
The above problem is a sparse coding problem under the constraint that $\mu$ is \emph{exactly reconstructed} in the sense of Wasserstein barycenters. The \emph{barycentric coding model} analyzed in \citet{werenski2022measure} gives a characterization of when $\mu = \bary(\D, \boldsymbol{\lambda})$, which can be leveraged to characterize the sparse coding step of our Wasserstein dictionary learning problem as follows; a precise statement with explicit regularity assumptions and proof appear in the Supplementary Materials.

\begin{proposition}\label{prop:LP}Let $\mu$ be fixed and let $\{\D_j\}_{j=1}^m \subset  \Pb(\R^d)$ be a fixed dictionary.  Under suitable regularity assumptions on $\mu$ and $\{\D_j\}_{j=1}^m$, the solution to (\ref{eqn:SparseCode}) is given by \begin{equation}\label{eqn:SparseCodeLP}
    \underset{\lam \in \Delta^m}{\arg\min}\,\,  \lam^T\mathbf{c} \,\, \,\emph{subject to} \,\,\, \A\lam = \mathbf{0}, 
    \end{equation}
where $\mathbf{c}$ and $\A\in\mathbb{R}^{m\times m}$ are uniquely determined by $\mu,\{\D_j\}_{j=1}^m$.
    
\end{proposition}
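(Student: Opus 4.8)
The plan is to turn both pieces of (\ref{eqn:SparseCode}) --- the objective and the exact-reconstruction constraint --- into linear data in $\lam$, using the first-order characterization of Wasserstein barycenters, and then to observe that the resulting (a priori infinite family of) constraints collapses to a single $m\times m$ matrix equation. The objective is immediate: with $\mathbf{c} := (W_2^2(\D_1,\mu),\dots,W_2^2(\D_m,\mu))^T$, which depends only on $\mu$ and $\{\D_j\}_{j=1}^m$, the regularizer $\sum_{j=1}^m \lambda_j W_2^2(\D_j,\mu)$ equals $\lam^T\mathbf{c}$. For the constraint, I would state the regularity hypotheses as: $\mu$ is absolutely continuous with finite second moment and each $\D_j$ has finite second moment, so that by Brenier's theorem the optimal transport map $T_j$ from $\mu$ to $\D_j$ exists and is $\mu$-a.e.\ unique. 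Under these hypotheses I invoke the barycentric coding characterization of \citet{werenski2022measure}: for $\lam\in\Delta^m$, $\mu=\bary(\D,\lam)$ if and only if $\sum_{j=1}^m \lambda_j T_j(x)=x$ for $\mu$-a.e.\ $x$. (Necessity is the Agueh--Carlier optimality condition for barycenters; sufficiency is the substantive direction furnished by \citet{werenski2022measure}, and is precisely what the regularity assumptions secure.)

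The next step is to reduce this pointwise, infinite-dimensional condition to one equation. Since $\sum_j\lambda_j=1$, the condition $\sum_j\lambda_j T_j(x)=x$ is equivalent to $\sum_j\lambda_j\bigl(T_j(x)-x\bigr)=0$ for $\mu$-a.e.\ $x$. Define $\A\in\R^{m\times m}$ by $\A_{jk}:=\int_{\R^d}\langle T_j(x)-x,\,T_k(x)-x\rangle\,d\mu(x)=\langle T_j-\Id,\,T_k-\Id\rangle_{L^2(\mu)}$; this is finite because the $T_j$ lie in $L^2(\mu)$ by the finite-second-moment assumption, it is symmetric positive semidefinite, and it is uniquely determined by $\mu$ and $\{\D_j\}_{j=1}^m$ (the $T_j$ being $\mu$-a.e.\ unique). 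The identity $\lam^T\A\lam=\bigl\|\sum_{j=1}^m\lambda_j(T_j-\Id)\bigr\|_{L^2(\mu)}^2$ shows that $\sum_j\lambda_j(T_j(x)-x)=0$ $\mu$-a.e.\ $\iff\lam^T\A\lam=0\iff\A\lam=\mathbf{0}$, where the last equivalence uses $\A\succeq 0$ (if $\A=LL^T$ then $\lam^T\A\lam=0\iff L^T\lam=0\iff\A\lam=0$). Hence, for $\lam\in\Delta^m$, the constraint $\mu=\bary(\D,\lam)$ in (\ref{eqn:SparseCode}) is equivalent to $\A\lam=\mathbf{0}$.

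Putting these together, (\ref{eqn:SparseCode}) and (\ref{eqn:SparseCodeLP}) have the same feasible set $\{\lam\in\Delta^m:\A\lam=\mathbf{0}\}$ and the same objective $\lam^T\mathbf{c}$, hence the same set of minimizers; and since the objective is linear and the constraints ($\A\lam=\mathbf{0}$ together with $\lam\in\Delta^m$) are linear, (\ref{eqn:SparseCodeLP}) is a linear program, with $\mathbf{c}$ and $\A$ determined by the data as claimed. I expect the main obstacle to be stating and justifying the cited characterization at the right level of generality --- in particular, identifying exactly which regularity on $\mu$ (and possibly on the $\D_j$) makes the \emph{sufficiency} direction valid, since necessity alone would only yield a relaxation of (\ref{eqn:SparseCode}) rather than an equivalent problem. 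The remaining ingredients (the objective rewrite, well-definedness of $\A$, and the positive-semidefinite collapse of the constraint) are routine given Brenier's theorem and finite second moments.
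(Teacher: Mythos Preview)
Your proposal is correct and follows essentially the same route as the paper: define $\mathbf{c}$ by $c_j=W_2^2(\D_j,\mu)$, invoke the barycentric characterization of \citet{werenski2022measure} to rewrite $\mu=\bary(\D,\lam)$ as $\lam^T\A\lam=0$ with $\A$ the Gram matrix of $\{T_j-\Id\}$ in $L^2(\mu)$, and then use $\A\succeq 0$ to collapse this to $\A\lam=\mathbf{0}$. Your flagged concern about regularity is on point: the paper imposes stronger hypotheses than mere absolute continuity of $\mu$ (shared open convex support, bounded densities, and local H\"older continuity or densities bounded away from zero) precisely to secure the sufficiency direction of the cited characterization.
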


Importantly, $\mathbf{c}$ and $\A$ are determined by $\{\D\}_{j=1}^{m}$ for a fixed $\mu$, so that (\ref{eqn:SparseCodeLP}) is a linear program in $\lam$.  In general for fixed $\D$ and $\mu$, the problem of solving for $\lam$ satisfying \[\A\lam=\mathbf{0} \, \text{ such that } \lam\in\Delta^{m}\] may have multiple solutions \citep{werenski2022measure}.  Among all the possible barycentric representations of $\mu$, (\ref{eqn:SparseCodeLP}) chooses the one ``closest" to the dictionary atoms themselves, and thereby promotes uniqueness at the level of sparse coding and under the hard reconstruction constraint.

\textbf{$\mathcal{R}(\D,\Lam)$ Promotes Unique Solutions to WDL:}
The unregularized WDL problem (\ref{eq:WDL}) does not in general have a unique solution.  This can be seen intuitively in the case where the data are generated as barycenters of two measures $\mu, \nu\in\Pb(\mathbb{R}^{d})$. In this case, any barycenter coincides with a point along the McCann interpolation: $\bary(\{\mu,\nu\},(1-t,t))=(P_t)_{\#} \pi^{*}$ where $\pi^{*}$ is the optimal transport plan between $\mu,\nu$.  Then any measures $\tm, \tn$ whose McCann interpolation passes through $\mu,\nu$
 will also generate any barycenters of $\mu,\nu$.  This is visualized in Figure \ref{fig:locality_visualization}.  We will show that in this special case, our geometric sparse regularizer (\ref{prop:LP}) addresses this ill-posedness of WDL.

Note, this is an issue of \emph{non-uniqueness over dictionaries} $\D$; the simpler issue of \emph{non-uniqueness for a fixed $\D$} is analyzed in Proposition \ref{prop:LP}.  Indeed, the non-uniqueness for a fixed $\D$ is characterized \citep{werenski2022measure} by the solution space to $\A\lam=\mathbf{0}$ intersecting $\Delta^{m}$ in multiple places.  On the other hand, our analysis of non-uniqueness over dictionaries requires an analysis of McCann interpolations.

\begin{definition}\label{def:geodesic-containment}
Let $\mu, \nu \in \Pb(\mathbb{R}^{d})$ have optimal transportation plan $\pi^{*}$ and $\tm, \tn\in \Pb(\mathbb{R}^{d})$ have optimal transportation plan $\tpi^{*}$.  The measures $\tm, \tn$ are said to \emph{contain the McCann interpolation} $\{(P_t)_{\#}\pi\}_{t\in [0,1]}$ between $\mu$ and $\nu$ if there exists an interval $[a,b]\subset[0,1]$ such that $\forall t \in [0,1],\ \exists s\in [a,b]$ such that $  (P_{s})_\#\tpi = (P_t)_\# \pi$. 
\end{definition}
We define the set of all pairs of measures $(\tm, \tn)$ that contain the McCann interpolation between $\mu$ and $\nu$ as $M(\mu, \nu)$.  Pairs of measures in $M(\mu, \nu)$ can be thought of as generators of ``extensions" of the McCann interpolation between $\mu, \nu$.  In this sense, barycenters of $(\tm,\tn)$ can perfectly reconstruct any barycenter of $(\mu,\nu)$ if and only if $(\tm,\tn)\in M(\mu,\nu)$.  We show that any ``extension'' of the McCann interpolation from one side results in an increase in the geometric sparse regularizer for any measure in the original interpolation.  The proof, which depends on the geodesic properties of McCann interpolation \citep{ambrosio2005gradient}, appears in the Supplementary Materials.

\begin{figure}[t!]
    \centering
    \includegraphics[width=0.6\linewidth]{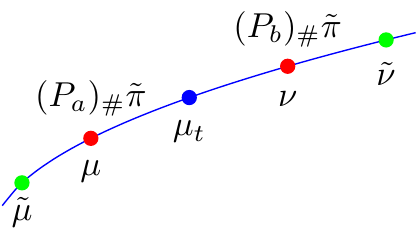}
    \caption{The measures $\tm$ and $\tn$ have the capacity to represent any barycenter $\mu_{t}$ of $\mu=(P_{a})_{\#}\tilde{\pi}$, $\nu=(P_{b})_{\#}\tilde{\pi}$, but they do it in a manner that our geometric sparse regularizer penalizes.}
    \label{fig:locality_visualization}
\end{figure}

\begin{theorem}\label{thm:geodesic-extension}
Consider measures $\mu, \nu, \tn$ with respective optimal transportation plans $\pi$ and $\tpi$ and suppose $(\mu, \tn)\in M(\mu,\nu)$. Let $\mu_{t}=(P_t)_{\#}\pi$ be in the McCann interpolation between $\mu$ and $\nu$, and let $s$ be the associated time coordinate such that $(P_{s})_{\#}\tilde{\pi}=(P_t)_{\#}\pi$. Then
\begin{align*} \label{eqn:geodesic-extension}
    (1-t) W_2^2(\mu, \mu_t) + t W_2^2(\nu, \mu_t)
    \leq (1-s)W_2^2(\mu, \mu_t) + s W_2^2(\tn, \mu_t).
\end{align*}
\end{theorem}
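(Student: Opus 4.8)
The plan is to strip the statement down to the constant-speed geodesic structure of McCann interpolation, after which the claimed inequality becomes equivalent to the trivial fact that a geodesic segment is no longer than the geodesic containing it.

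\emph{Step 1 (reductions and the geodesic toolkit).} If $\mu=\nu$ then $\mu_t=\mu$ and both sides vanish; if $t=0$ then again $\mu_t=\mu$ and both sides equal $0$. So I assume $\mu\neq\nu$ and $t\in(0,1]$, and set $W:=W_2(\mu,\nu)>0$. I would then invoke the geodesic property of McCann interpolation \citep{mccann1997convexity,ambrosio2005gradient}: for an optimal plan $\sigma$ between $\alpha=(P_0)_\#\sigma$ and $\beta=(P_1)_\#\sigma$, the curve $a\mapsto(P_a)_\#\sigma$ is a constant-speed geodesic, so $W_2\big((P_a)_\#\sigma,(P_b)_\#\sigma\big)=|a-b|\,W_2(\alpha,\beta)$ for all $a,b\in[0,1]$; in particular this curve is injective when $\alpha\neq\beta$, so the reparametrizing coordinate below is well defined.

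\emph{Step 2 (identifying the reparametrization).} Since $(\mu,\tn)\in M(\mu,\nu)$, evaluating the containment condition of Definition \ref{def:geodesic-containment} at $t=0$ and $t=1$ produces parameters $s_0,r\in[0,1]$ with $(P_{s_0})_\#\tpi=\mu=(P_0)_\#\tpi$ and $(P_r)_\#\tpi=\nu$; injectivity of the $\mu$--$\tn$ geodesic forces $s_0=0$. For general $t$, if $s$ is the coordinate with $(P_s)_\#\tpi=(P_t)_\#\pi=\mu_t$, then computing $W_2(\mu,\mu_t)$ two ways --- as $tW$ along the $\mu$--$\nu$ geodesic and as $s\,W_2(\mu,\tn)$ along the $\mu$--$\tn$ geodesic --- gives $s\,W_2(\mu,\tn)=tW$. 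Taking $t=1$ yields $W=r\,W_2(\mu,\tn)$, hence $s=rt$ and $W_2(\mu,\tn)=W/r$, and crucially $r\le1$ since $r\in[0,1]$: the point $\nu$ sits at fractional arclength $r\le1$ along the $\mu$--$\tn$ geodesic.

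\emph{Step 3 (computation).} By the constant-speed identity,
\[ W_2^2(\mu,\mu_t)=t^2W^2,\qquad W_2^2(\nu,\mu_t)=(1-t)^2W^2,\qquad W_2^2(\tn,\mu_t)=(1-s)^2\,W_2^2(\mu,\tn)=\frac{(1-rt)^2}{r^2}\,W^2. \]
Substituting into the left side of the claim gives $(1-t)t^2W^2+t(1-t)^2W^2=t(1-t)W^2$, and substituting $s=rt$ into the right side gives $(1-rt)t^2W^2+rt\cdot\frac{(1-rt)^2}{r^2}W^2=\frac{t(1-rt)}{r}W^2$. Dividing the desired inequality by $tW^2>0$ reduces it to $(1-t)\le(1-rt)/r$, i.e. $r(1-t)\le1-rt$, i.e. $r\le1$, which holds; equality occurs iff $r=1$, i.e. iff $\tn=\nu$.

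\emph{Main obstacle.} There is no hard estimate here; the delicate part is Step 2, where one must use the right facts about Wasserstein geodesics --- constant-speed parametrization, injectivity, and that a sub-arc of a geodesic is again a constant-speed geodesic --- to justify that the containment interval can be taken as $[0,r]$ with $r=W/W_2(\mu,\tn)\le1$ and that the time correspondence $t\mapsto s$ is exactly the linear map $s=rt$. I would also be explicit that we work with the fixed optimal plans $\pi,\tpi$ named in the statement, so that $\mu_t$ and the reparametrization are unambiguous.
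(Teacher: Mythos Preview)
Your proof is correct and rests on the same key ingredient as the paper's --- the constant-speed property of McCann interpolation --- but the execution is genuinely cleaner. The paper rearranges the inequality, drops the nonnegative term $(t-s)W_2^2(\mu,\mu_t)$, and then reduces $tW_2^2(\mu_t,\nu)\le sW_2^2(\mu_t,\tilde\nu)$ by repeatedly substituting the additive relations $W_2(\mu,\tilde\nu)=W_2(\mu,\nu)+W_2(\nu,\tilde\nu)$ and $W_2(\mu_t,\tilde\nu)=W_2(\mu_t,\nu)+W_2(\nu,\tilde\nu)$, eventually landing on $W_2^2(\mu_t,\nu)\le W_2(\mu,\nu)\big(2W_2(\mu_t,\nu)+W_2(\nu,\tilde\nu)\big)$. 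Your route introduces the single dimensionless ratio $r=W_2(\mu,\nu)/W_2(\mu,\tilde\nu)\in(0,1]$, derives the exact reparametrization $s=rt$, and then computes both sides in closed form as $t(1-t)W^2$ versus $t(1-rt)W^2/r$; the whole inequality collapses to $r\le 1$. This buys you a sharper conclusion (you identify the equality case $r=1\iff\tilde\nu=\nu$) and avoids the paper's somewhat ad hoc step of discarding the $(t-s)W_2^2(\mu,\mu_t)$ term. One minor slip: in the degenerate case $\mu=\nu$ you say ``both sides vanish,'' but the right side is $sW_2^2(\tilde\nu,\mu)$, which need not be zero a priori --- it is zero only after you note (as in Step~2) that injectivity forces $s=0$; the inequality is of course unaffected.
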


With this, we can establish that subject to the constraint of perfect reconstruction (quantified by $M(\mu,\nu)$), minimizing the geometric sparse regularizer yields a unique solution that coincides with the true generating atoms in the case of all observed data lying on a McCann interpolation.

\begin{corollary}\label{cor}
Let $\mu \neq \nu$ be two measures with optimal transport plan $\pi$.  For any $(\tm, \tn)\in M(\mu,\nu)$, let $\tilde{\pi}$ be the associated optimal transport plan.  Then for any barycenter $\mu_{t}=(P_{t})_{\#}\pi$ generated by $\mu, \nu$,
\begin{align*}
   (\mu,\nu)=\argmin_{\substack{(\tm, \tn) \in M(\mu, \nu),\\ s \ s.t. (P_{s})_{\#}\tilde{\pi}=\mu_{t}}}(1-s) W_2^2(\tm, \mu_t) + s W_2^2(\tn, \mu_t).
\end{align*}

\end{corollary}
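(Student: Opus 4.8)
The plan is to reduce Corollary~\ref{cor} to two applications of Theorem~\ref{thm:geodesic-extension} — one extending the interpolation past $\nu$, the other past $\mu$ — and then to pin down uniqueness of the minimizer by an elementary computation on a single constant‑speed geodesic.

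First I would unwind Definition~\ref{def:geodesic-containment} for a fixed $(\tm,\tn)\in M(\mu,\nu)$ with optimal plan $\tpi$. Then $\tm=(P_0)_\#\tpi$, $\tn=(P_1)_\#\tpi$, and the containment forces $\mu$ and $\nu$ to lie on the single constant‑speed geodesic $\{(P_r)_\#\tpi\}_{r\in[0,1]}$; after possibly reversing the orientation of this geodesic (which harmlessly swaps the roles of $\tm,\tn$ and of $s,1-s$, leaving the objective invariant) I may write $\mu=(P_a)_\#\tpi$, $\nu=(P_b)_\#\tpi$ with $0\le a<b\le 1$ (strict since $\mu\neq\nu$). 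Because a Wasserstein barycenter of $\mu,\nu$ with weights $(1-t,t)$ is the McCann interpolant, $\mu_t=(P_c)_\#\tpi$ with $c=(1-t)a+tb$, and this $c$ is exactly the coordinate $s$ in the statement. So it suffices to prove
\[(1-t)W_2^2(\mu,\mu_t)+tW_2^2(\nu,\mu_t)\ \le\ (1-c)W_2^2(\tm,\mu_t)+cW_2^2(\tn,\mu_t),\]
with equality only when $a=0$ and $b=1$, i.e. $(\tm,\tn)=(\mu,\nu)$.

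Next I would introduce the intermediate pair $(\mu,\tn)$. Its McCann interpolation is the sub‑segment of $\{(P_r)_\#\tpi\}$ over $r\in[a,1]$, which still contains $\nu$, so $(\mu,\tn)\in M(\mu,\nu)$ and Theorem~\ref{thm:geodesic-extension} gives $(1-t)W_2^2(\mu,\mu_t)+tW_2^2(\nu,\mu_t)\le (1-s_1)W_2^2(\mu,\mu_t)+s_1W_2^2(\tn,\mu_t)$ with $s_1=(c-a)/(1-a)$ the coordinate of $\mu_t$ on the $\mu\to\tn$ interpolation. Similarly the interpolation from $\tn$ to $\mu$ contains $\tm$, so $(\tn,\tm)\in M(\tn,\mu)$, and applying the $\mu\leftrightarrow\nu$, $t\mapsto 1-t$ form of Theorem~\ref{thm:geodesic-extension} (valid since Definition~\ref{def:geodesic-containment} is symmetric under reversing geodesics) upgrades $(\mu,\tn)$ to $(\tm,\tn)$; one checks the intermediate time coordinates are affinely related so that its left‑hand side matches the right‑hand side of the first inequality, and chaining yields the required bound.

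To get the equality case — and hence that $(\mu,\nu)$ is the \emph{unique} minimizer — I would rerun the estimate directly on the geodesic: writing $L=W_2(\tm,\tn)$, the constant‑speed property gives $W_2\big((P_{r_1})_\#\tpi,(P_{r_2})_\#\tpi\big)=|r_1-r_2|L$, so the right‑hand side equals $L^2c(1-c)$ and the left‑hand side equals $L^2(b-a)^2t(1-t)$; with $u=b-a>0$ the inequality becomes $u^2t(1-t)\le c(1-c)$, whose difference simplifies to $a(1-b)+a(1-t)u+tu(1-b)$. Every term is nonnegative, and for $t\in(0,1)$ the sum vanishes iff $a=0$ and $b=1$, i.e. $(\tm,\tn)=(\mu,\nu)$. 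The main obstacle I anticipate is bookkeeping rather than depth: keeping the three affine reparametrizations ($\mu\to\nu$, $\mu\to\tn$, $\tm\to\tn$) consistent, justifying the orientation reduction to $a<b$, and — if one wishes to argue purely through Theorem~\ref{thm:geodesic-extension} — extracting a strict version of it for nontrivial extensions (the direct geodesic computation above sidesteps this). The boundary cases $t\in\{0,1\}$, where $\mu_t\in\{\mu,\nu\}$, are immediate.
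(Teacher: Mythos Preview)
Your core approach --- apply Theorem~\ref{thm:geodesic-extension} once to pass from $(\mu,\nu)$ to $(\mu,\tn)$, then again after reversing parameterization to pass to $(\tm,\tn)$ --- is exactly the paper's proof, which reads in full: ``Apply Theorem~\ref{thm:geodesic-extension} twice, the second time after reversing parameterization.'' Your explicit coordinate bookkeeping and the direct geodesic computation $c(1-c)-u^2t(1-t)=a(1-b)+a(1-t)u+tu(1-b)\ge 0$ are a genuine addition: the paper's two applications of Theorem~\ref{thm:geodesic-extension} only yield the non-strict inequality and hence show $(\mu,\nu)$ is \emph{a} minimizer, whereas your computation pins down uniqueness for $t\in(0,1)$ as the $\argmin$ notation demands. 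One small caveat: your claim that the boundary cases $t\in\{0,1\}$ are ``immediate'' is true for the inequality but not for uniqueness --- at $t=0$ the objective vanishes for every $(\tm,\tn)$ with $\tm=\mu$, regardless of $\tn$ --- so the uniqueness part of the corollary should really be read for $t\in(0,1)$.
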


\begin{proof}
Apply Theorem \ref{thm:geodesic-extension} twice, the second time after reversing parameterization.
\end{proof}
In other words, among all pairs of measures that contain the geodesic between the original data generators $\mu$ and $\nu$, the geometric sparse regularizer for every data point is minimized by $(\mu,\nu)$.  

\section{Proposed Algorithm}
Optimization in Wasserstein space has been infeasible outside of problems with low-dimensional distributions due to the computational complexities of solving the transport problems \citep{bonneel2016wasserstein}. We make two primary design choices to make a tractable algorithm: 

\paragraph{Shared Fixed Support:} We assume all measures lie on the same fixed finite support $\{\x_i\}_{i=1}^N \subset \R^d$. So, each distribution $\mu$ can be represented as a probability distribution $\mathbf{a} \in \Delta^N$ via $\mu = \sum_{i=1}^N a_i \delta_{\x_i}$. We will abuse notation and write $\mu$ in place of $\mathbf{a}$ when referring to discrete measures. Having a fixed support enables us to compute the pairwise costs $\| \x_{i} - \x_{j} \|_2^2$ upfront, which are used repeatedly in the transport and barycenter computations. 

\paragraph{Entropic Regularization:} We use the entropically regularized Wasserstein distance \citep{cuturi2013sinkhorn,peyre2019computational}. Simply stated, we can get simple and relatively cheap estimates of the Wasserstein distance by a few iterations of the Sinkhorn matrix scaling algorithm. We refer the reader to the aforementioned references for detailed discussion of the entropic regularization and it's effect on computation; in brief it is repeated matrix-vector multiplications done with the kernel matrix $\mathbf{K} := \exp(-\mathbf{C}/\varepsilon)$ where $\mathbf{C} \in \R^{N \times N}$ is the pairwise cost matrix formed from the support. Unsurprisingly, entropic regularization lends to a similar Sinkhorn-like iterative method to compute barycenters \citep{benamou2015iterative}; both methods can be derived as 
Bregman iterations. We make extensive use of automatic differentiation \citep{paszke2017automatic} to handle variable updates. For all transport computations one could also use the unbiased Sinkhorn divergences instead \citep{feydy2019interpolating}; we choose not to in order to compare directly to WDL. 

Our main algorithm, which we call \emph{Geometrically Sparse Wasserstein Dictionary Learning (GeoSWDL)}, is detailed in Algorithm \ref{alg:WDL-l}. 
Following the original WDL formulation of \citet{schmitz2018Wasserstein}, we optimize over arbitrary vectors in Euclidean space that each represent a unique probability distribution (both for atoms and barycentric weights) via softmax as a change of variables. 

\begin{algorithm}
    \caption{Geometrically Sparse Wasserstein Dictionary Learning (GeoSWDL)}
    \label{alg:WDL-l}
    \begin{algorithmic}[1] 
        \State \textbf{Input:} Training data $\{\mu_i\}_{i=1}^n\subset \Delta^N$, $L$, $\verb|optimizer|$
        \State Initialize variables $\boldsymbol{\alpha}^{(0)} \in \R^{n \times m},\ \boldsymbol{\beta}^{(0)} \in \R^{m \times N}$ \Comment{Use any initialization method}
        \For{$k\gets 1,\ldots,  L$}
            \State \verb|loss| $\gets 0$ \Comment{Set loss to zero for each iteration}
            \State $\D^{(k)} \gets \cov(\boldsymbol{\alpha})$, $\Lam^{(k)} \gets \cov(\boldsymbol{\beta})$ \Comment{Get updated dictionary/weights}
            \State \verb|loss| $\gets \mathcal{F}(\D^{(k)}, \Lam^{(k)}, \{\mu_i\}_{i=1}^n )$ \Comment{Compute the objective function}
            \State \verb|loss.backward()| \Comment{Compute the gradients with automatic differentiation}
            \State Update $\boldsymbol{\alpha}^{(k)}, \boldsymbol{\beta}^{(k)}$ with \verb|optimizer.step()| \Comment{Update variables}
        \EndFor
        \State \textbf{Output:} $\D^{(L)}, \Lam^{(L)}$

    \end{algorithmic}
\end{algorithm}

We briefly describe initialization options for the algorithm:

\paragraph{Atom Initialization:} We consider 3 methods of initialization for the atoms: (i) uniform at random samples from $\Delta^N$; (ii) uniform at random data samples: pick $m$ of the data used to learn the dictionary as the initialization; (iii) $k$-means++ initialization: follow the initialization procedure of $k$-means++ algorithm using Wasserstein distances \citep{arthur2006k} and use those choices as the initial atoms\footnote{To do the initialization properly one needs distances to be nonnegative - Sinkhorn ``distances'', as approximated by the value of the entropically regularized problem, may be negative (the more accurate estimate, computing the transport cost without the entropy term, does not have this problem, but is more expensive to compute ($O(N^2)$ compared to $O(N)$; entropy provides the efficient computation via the dual problem). We crudely work around this by adding the smallest number to the distances to make them positive.}.  We expect the data-based initialization schemes to converge faster and to better solutions, particularly with geometric sparse regularizer since the probability distributions that resemble the data will be favored, as we assume generating probability distributions should resemble the data to some degree. 

\paragraph{Weight Initialization:} We consider 3 methods of initialization for the weights: (i) uniform at random samples from $\Delta^m$; (ii) Wasserstein histogram regression \citep{bonneel2016wasserstein} to match each data point to the initialized atoms; (iii) estimating weights using the quadratic program described by \cite{werenski2022measure}; details of this approach are in the Supplementary Materials.  Empirically, atom initialization was more important than the choice of weight initialization; we use method (i) for all experiments in Section \ref{sec:Experiments}.
\paragraph{Runtime Complexity:} To understand the complexity of our algorithm we focus on the complexity of a single iteration in the loop of Algorithm \ref{alg:WDL-l}. We run a fixed number of Sinkhorn iterations, $L_s$, for both entropic distance computations as well as entropic barycenter computations. Thus, for a single data point the cost to compute the barycenter is $O(L_sN^2m)$ as each iteration of the Bregman projection is a matrix multiplication with the kernel. Similarly the computational cost of the loss is $O(L_sN^2 + L_sN^2m) = O(L_sN^2m)$ (entropic distance between barycenter and true data point plus geometric sparse regularizer). With automatic differentiation the cost is the same as computing the values. Thus when considering all the data the cost per iteration is $O(nL_sN^2m)$. One could batch the data for simpler iterations, but we do not evaluate the effects of such a choice. The bottleneck cost is the matrix-vector operations in the Sinkhorn iterations; to circumvent this, particularly for larger problems, one could form a Nystr\"om approximation of the kernel matrix \citep{altschuler2019massively}. Geometric structure in the support, as is the case with data supported on a grid (e.g., images), can be leveraged to reduce computational burdens \citep{solomon2015convolutional}. 

\section{Experiments}
\label{sec:Experiments}
This section summarizes experiments on image and NLP data; further discussion is in the Supplementary Materials.
\subsection{Identifying Generating Distributions With Synthetic MNIST Data} 

We demonstrate the utility of our geometric sparse regularizer in identifying the generating probability distributions in a generative model.  In this experiment, we randomly select 3 samples from each MNIST \citep{lecun1998mnist} data class \{0,1,2,\dots,9\}.  For each of the classes, we use the 3 samples to generate 50 synthetic samples by forming barycenters constructed with weights sampled uniformly from $\Delta^3$.  This yields 500 total training points, each of which is a synthetic MNIST digit generated from a total of $3 \times 10=30$ real MNIST digits.  We then train a dictionary to learn 30 atoms.  An optimal solution is to learn the original generating set of 30 distributions.  

We run Algorithm \ref{alg:WDL-l} for $L=250$ iterations and use the Adam optimizer learning rate of 0.25 (other parameters left as default in PyTorch). For all Sinkhorn computations, we used 50 iterations. We compare the effects of geometric sparse regularization by varying the regularization parameter $\rho\in\{10^{-3},10^{-1},10^{1}\}$.  Atoms are initialized using the K-means++ initialization.  After learning the dictionary, we match the learned atoms to the true atoms by finding the assignment that minimizes transport cost between learned and true atoms.  We solve the non-entropically regularized transport problem to ensure non-negative assignment costs. We visualize the learned vs. true atoms as well as a confusion matrix that demonstrates how well learned coefficients and atoms align with the true class in Figure \ref{fig:mnist}. 

As observed empirically in Figure \ref{fig:mnist} (b), increasing $\rho$ only helps the learned coefficients to be placed more correctly on their class after matching the atoms. This illustrates a trade-off associated with increasing $\rho$: more geometric sparse regularization may help the data be classified correctly, but the learned atoms may not resemble the true generating atoms as well. Indeed, in Figure \ref{fig:mnist} (c) we see that increasing $\rho$ concentrates the coefficients, as we would expect given the connection between GeoSWDL with large $\rho$ and Wasserstein $K$-means discussed in Section \ref{sec:GWDL}.  We also note that as seen in Figure \ref{fig:mnist} (a), some of learned atoms for classes $\{1,2,6\}$ are less visually interpretable; this is due in part to the fact that during the optimization process, none of the training data use these atoms for their barycentric reconstruction. As such, the algorithm loses information about how to optimize those data points---they are effectively never updated because they are never used in reconstruction. A simple workaround we use is to restart the algorithm several times and select an output that places some weight on each atom (i.e. $\forall j, \exists i$ such that  $(\lam_i)_j \gg 0$).

\begin{figure}[htbp!]
\begin{subfigure}{\textwidth}
        \includegraphics[width=\linewidth]{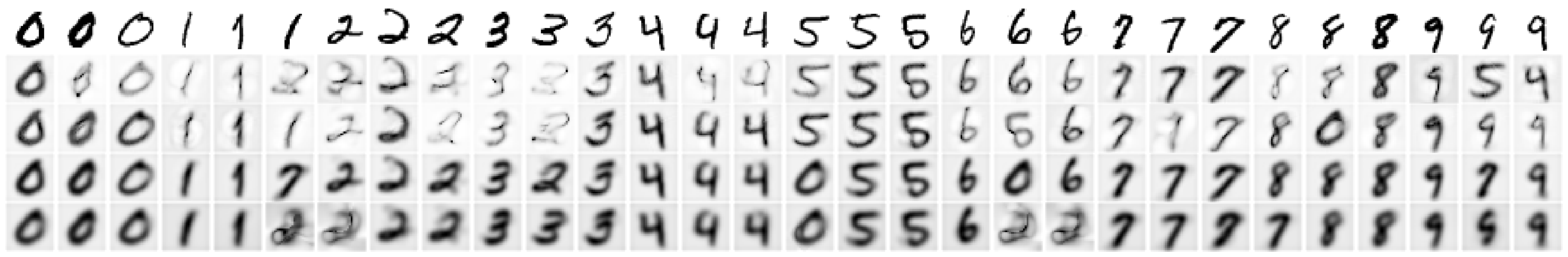}
        \subcaption{Top row shows the true generating probability distributions.  Subsequent rows show learned atoms with increasing $\rho$, after alignment.}
    \end{subfigure}
    \begin{subfigure}{\textwidth}
        \includegraphics[width=\linewidth]{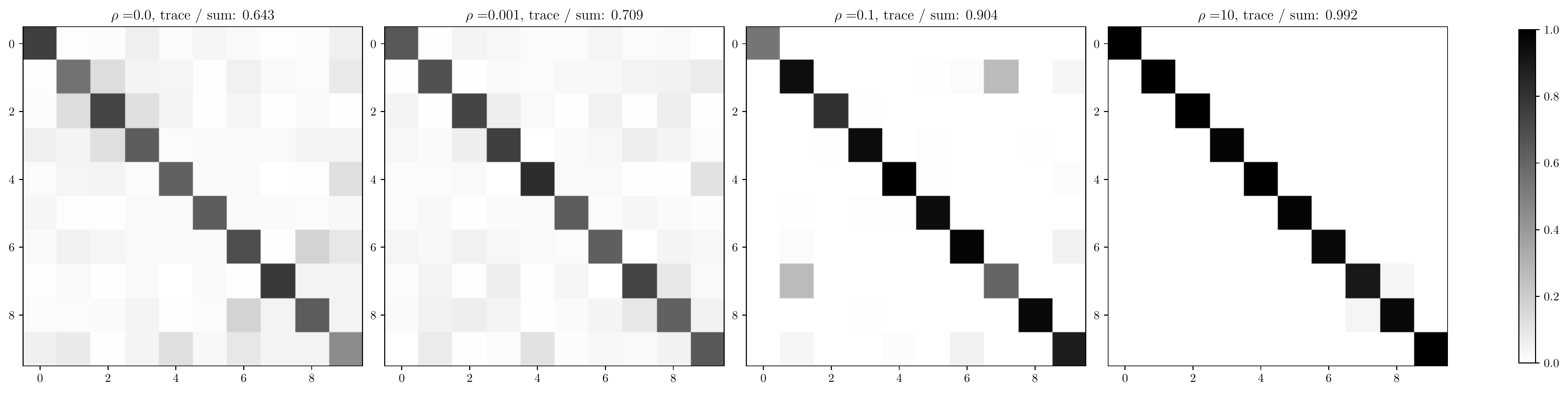}
        \subcaption{The amount of mass assigned to weights of class $i$ to atoms of class $j$}
    \end{subfigure}
    \begin{subfigure}{\textwidth}
        \includegraphics[width=\linewidth]{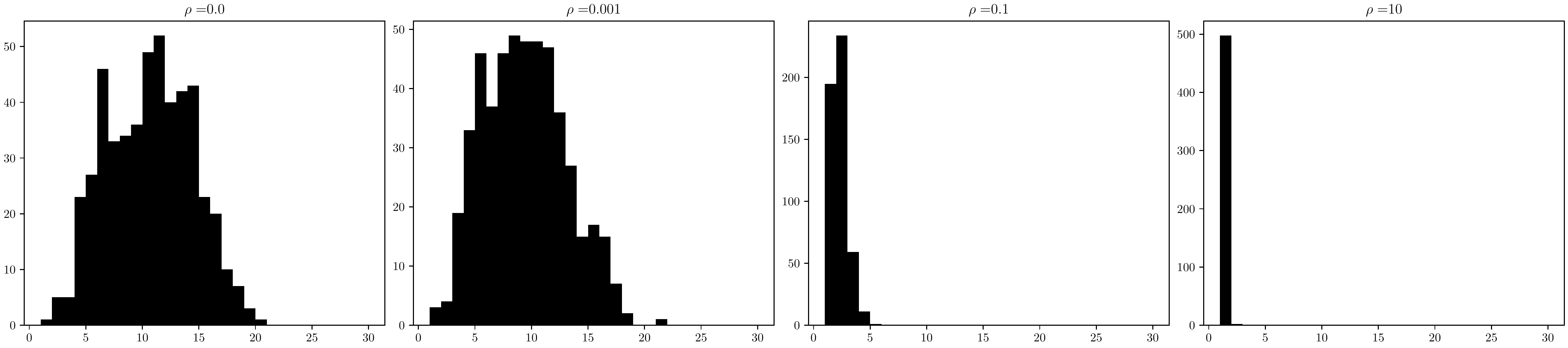}
        \subcaption{The counts of minimal number of atoms needed to assign at least 95\% of the mass in the barycenter as a proxy for sparsity}
    \end{subfigure}
    \caption{The learned digits shown are most interpretable in the moderate regularization regime (e.g., $\rho>0$ but not too large).  As $\rho$ increases, (b) shows the concentration of coefficient energy on atoms that belong to the same class as the test data point, with atoms assigned by finding their closest training data.  Moreover, we see in (c) that increasing $\rho$ increases the sparsity of the coefficients learned.}
    \label{fig:mnist}
\end{figure}

\subsection{Learned Coefficients as Features for Document Classification}

Here we demonstrate the effectiveness of the geometric sparse regularizer when used for a classification task comparing word documents. We represent documents as probability distributions with a bag-of-words (i.e. a vector of counts for each word in the document normalized to lie in the probability simplex) approach where we use learned embeddings of the words from \cite{huang2016supervised} as the support. The documents considered come from the BBCSPORT dataset which consists of 737 documents with 5 classes overall.  We follow the experimental setup of \cite{werenski2022measure}, where a fixed number of \emph{random} reference documents from each class were used.  We instead \emph{learn} a fixed size dictionary for each reference class. 
The documents used for training the dictionary are disjoint from the randomly selected documents used for comparison. We train each dictionary with a number of samples proportional to the size of the dictionary; for a dictionary of $m$ atoms we use $4m$ training samples. Since the dictionary atoms must have fixed support, we fix the support as the union of words present in the training documents. 

For evaluation, we sample 100 random documents disjoint from the dictionary training sets and the randomly selected reference documents for baseline comparison, and classify them using the methods described below. Both the dictionaries and sets of random reference documents have the same number of elements per class and this number is varied from 1 to 12. We repeat this test 30 times and report results averaged across these trials. The results are visualized in Figures and \ref{fig:nlp_fixedmethod} and \ref{fig:nlp_fixedlocality}.  

We note that these plots compare against two baselines in terms of what generating measures to use: (i) random samples from the data; (ii)  WDL, which corresponds to $\rho=0$ in the GeoSWDL framework.  

\textbf{Classification Methods:}  We state the 5 classification methods we consider on the learned coefficients.
\begin{enumerate}[itemsep=-2pt]
\item \textbf{1-Nearest Neighbor (1NN):} classifies based on the class of the nearest reference document in $W_2^2$; 

\item \textbf{Minimum Average Distance (MAD):} selects the class with reference documents on average $W_2^2$-closest to the test document;

\item \textbf{Minimum Barycentric Loss (MBL):} we learn the barycentric weights to represent the test document by solving the quadratic program in \cite{werenski2022measure} for each reference class. We then compute the barycentric representation for each class and classify with the one that minimizes $W_2^2$ to the test document;

\item \textbf{Minimum Barycenter Loss (MBL-QP):} selects the class that minimizes the aforementioned quadratic program's objective value (a proxy for the $W_2^2$ distance between the barycentric representation and test document); 

\item \textbf{Maximum Coordinate (MC):} also solves the aforementioned quadratic program to estimate barycentric weights when using the reference documents \emph{of all} classes to represent the test document. The class is then assigned based on the class whose total portion of the estimated weights is largest.

\end{enumerate}
We notice that at all levels of geometric sparse regularizer tested, learned dictionaries enable the barycenter focused methods to outperform all methods that use random samples of the data. Perhaps more interestingly, we note that increasing levels of geometric sparse regularization increases the performance of the simpler methods of 1NN and MAD. This suggests that the geometric sparse regularizer is promoting dictionary atoms that are more informative generally \emph{as individual atoms} as opposed to the information contained in their collective representational capacity. As mentioned in the discussion of the geometric sparse regularizer in Section \ref{sec:GWDL}, the unregularized WDL objective is minimized by the dictionary probability distributions only with respect of their representative ability. On the other hand, as demonstrated here, GeoSWDL encourages the learned atoms to be informative themselves about the data they model, even generalizing to unseen data.

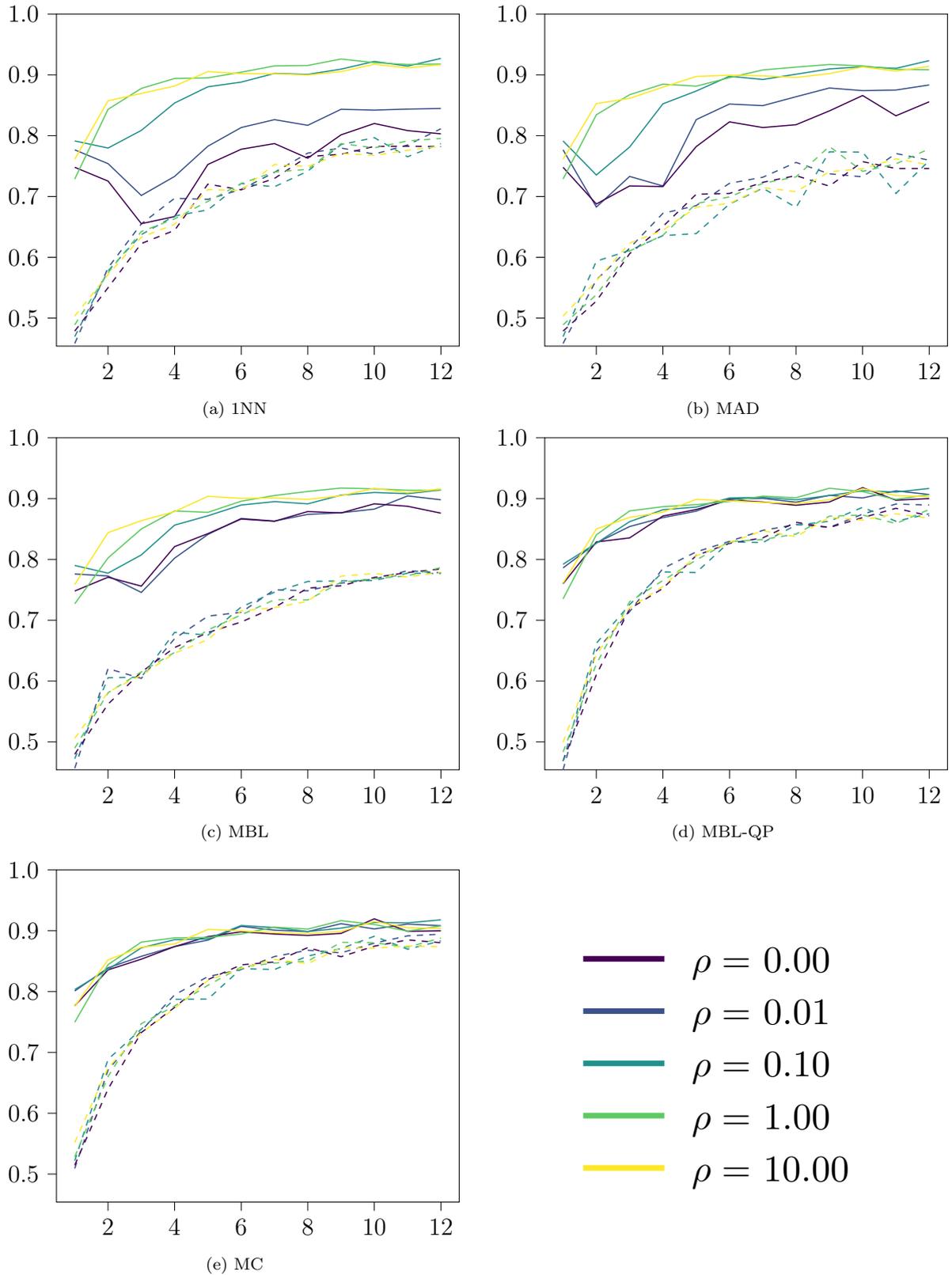
\begin{figure}[htbp!]
    \begin{subfigure}{0.49\linewidth}
    \centering
    \resizebox{\linewidth}{!}{
\begin{tikzpicture}

\definecolor{darkcyan32144140}{RGB}{32,144,140}
\definecolor{darkgray176}{RGB}{176,176,176}
\definecolor{darkslateblue5882139}{RGB}{58,82,139}
\definecolor{gold25323136}{RGB}{253,231,36}
\definecolor{indigo68184}{RGB}{68,1,84}
\definecolor{mediumseagreen9420197}{RGB}{94,201,97}

\begin{axis}[
tick align=outside,
tick pos=left,
x grid style={darkgray176},
xmin=0.45, xmax=12.55,
xtick style={color=black},
xtick={2,4,6,8,10,12},
xticklabels={
  \(\displaystyle {2}\),
  \(\displaystyle {4}\),
  \(\displaystyle {6}\),
  \(\displaystyle {8}\),
  \(\displaystyle {10}\),
  \(\displaystyle {12}\)
},
y grid style={darkgray176},
ymin=0.453999996185303, ymax=1,
ytick style={color=black},
ytick={0.5,0.6,0.7,0.8,0.9,1},
yticklabels={
  \(\displaystyle {0.5}\),
  \(\displaystyle {0.6}\),
  \(\displaystyle {0.7}\),
  \(\displaystyle {0.8}\),
  \(\displaystyle {0.9}\),
  \(\displaystyle {1.0}\)
}
]
\addplot [semithick, darkslateblue5882139, dashed]
table {%
1 0.458333343267441
2 0.582666635513306
3 0.654999971389771
4 0.696999967098236
5 0.695333302021027
6 0.711666643619537
7 0.739666640758514
8 0.771333336830139
9 0.779666662216187
10 0.76966667175293
11 0.784666657447815
12 0.811333358287811
};
\addplot [semithick, darkslateblue5882139]
table {%
1 0.776666700839996
2 0.754000008106232
3 0.701666653156281
4 0.732999980449677
5 0.782666623592377
6 0.813333332538605
7 0.826333224773407
8 0.816999971866608
9 0.84333336353302
10 0.842000007629395
11 0.843666672706604
12 0.844666659832001
};
\addplot [semithick, indigo68184, dashed]
table {%
1 0.478666663169861
2 0.550000011920929
3 0.622666597366333
4 0.644333302974701
5 0.720333218574524
6 0.711000025272369
7 0.729666650295258
8 0.765666663646698
9 0.769333302974701
10 0.782333314418793
11 0.782666623592377
12 0.781999945640564
};
\addplot [semithick, indigo68184]
table {%
1 0.747999966144562
2 0.725333273410797
3 0.655333340167999
4 0.666999936103821
5 0.752666652202606
6 0.777666628360748
7 0.786999940872192
8 0.762999951839447
9 0.801333367824554
10 0.819999933242798
11 0.808333337306976
12 0.803000032901764
};
\addplot [semithick, darkcyan32144140, dashed]
table {%
1 0.469666659832001
2 0.577666640281677
3 0.637333273887634
4 0.668000042438507
5 0.678000032901764
6 0.72133332490921
7 0.715999960899353
8 0.741666674613953
9 0.785333335399628
10 0.796666741371155
11 0.765333354473114
12 0.787000000476837
};
\addplot [semithick, darkcyan32144140]
table {%
1 0.791333377361298
2 0.779666662216187
3 0.80866664648056
4 0.853666663169861
5 0.880333244800568
6 0.887999951839447
7 0.902333319187164
8 0.90066659450531
9 0.909333288669586
10 0.921999931335449
11 0.914333343505859
12 0.926999986171722
};
\addplot [semithick, mediumseagreen9420197, dashed]
table {%
1 0.488999992609024
2 0.574666678905487
3 0.642333269119263
4 0.663333296775818
5 0.691999971866608
6 0.721333265304565
7 0.740333318710327
8 0.744666695594788
9 0.787000000476837
10 0.779999971389771
11 0.791333317756653
12 0.79533326625824
};
\addplot [semithick, mediumseagreen9420197]
table {%
1 0.728666663169861
2 0.843333303928375
3 0.877666652202606
4 0.893999934196472
5 0.894999921321869
6 0.904333353042603
7 0.914666593074799
8 0.915333330631256
9 0.925999939441681
10 0.920000016689301
11 0.916999995708466
12 0.917666614055634
};
\addplot [semithick, gold25323136, dashed]
table {%
1 0.503333330154419
2 0.571666657924652
3 0.633000016212463
4 0.654333293437958
5 0.711333274841309
6 0.709999978542328
7 0.752666592597961
8 0.749333322048187
9 0.770666658878326
10 0.767666637897491
11 0.775999963283539
12 0.782666623592377
};
\addplot [semithick, gold25323136]
table {%
1 0.761333346366882
2 0.857333302497864
3 0.869333267211914
4 0.881666600704193
5 0.905333280563354
6 0.90200001001358
7 0.901666641235352
8 0.899666607379913
9 0.905333340167999
10 0.91733330488205
11 0.91133326292038
12 0.916666626930237
};
\end{axis}

\end{tikzpicture}}
    \caption{1NN}
\end{subfigure}
\hfill
    \begin{subfigure}{0.49\linewidth}
    \centering
    \resizebox{\linewidth}{!}{
\begin{tikzpicture}

\definecolor{darkcyan32144140}{RGB}{32,144,140}
\definecolor{darkgray176}{RGB}{176,176,176}
\definecolor{darkslateblue5882139}{RGB}{58,82,139}
\definecolor{gold25323136}{RGB}{253,231,36}
\definecolor{indigo68184}{RGB}{68,1,84}
\definecolor{mediumseagreen9420197}{RGB}{94,201,97}

\begin{axis}[
tick align=outside,
tick pos=left,
x grid style={darkgray176},
xmin=0.45, xmax=12.55,
xtick style={color=black},
xtick={2,4,6,8,10,12},
xticklabels={
  \(\displaystyle {2}\),
  \(\displaystyle {4}\),
  \(\displaystyle {6}\),
  \(\displaystyle {8}\),
  \(\displaystyle {10}\),
  \(\displaystyle {12}\)
},
y grid style={darkgray176},
ymin=0.453999996185303, ymax=1,
ytick style={color=black},
ytick={0.5,0.6,0.7,0.8,0.9,1},
yticklabels={
  \(\displaystyle {0.5}\),
  \(\displaystyle {0.6}\),
  \(\displaystyle {0.7}\),
  \(\displaystyle {0.8}\),
  \(\displaystyle {0.9}\),
  \(\displaystyle {1.0}\)
}
]
\addplot [semithick, darkslateblue5882139, dashed]
table {%
1 0.458333343267441
2 0.562999963760376
3 0.614999949932098
4 0.672333359718323
5 0.685000002384186
6 0.721666634082794
7 0.731666684150696
8 0.75599992275238
9 0.737333357334137
10 0.732333302497864
11 0.770666658878326
12 0.759666681289673
};
\addplot [semithick, darkslateblue5882139]
table {%
1 0.776666700839996
2 0.682666659355164
3 0.733000040054321
4 0.717333376407623
5 0.826333224773407
6 0.852000057697296
7 0.849333345890045
8 0.864000022411346
9 0.878333330154419
10 0.873999953269958
11 0.875
12 0.883333325386047
};
\addplot [semithick, indigo68184, dashed]
table {%
1 0.478666663169861
2 0.527999997138977
3 0.606000006198883
4 0.65066659450531
5 0.703666627407074
6 0.704999923706055
7 0.723333358764648
8 0.734333276748657
9 0.71699994802475
10 0.757333338260651
11 0.745999932289124
12 0.745999991893768
};
\addplot [semithick, indigo68184]
table {%
1 0.747999966144562
2 0.687666714191437
3 0.717333376407623
4 0.716333329677582
5 0.781999886035919
6 0.822666704654694
7 0.81333327293396
8 0.817999958992004
9 0.840666651725769
10 0.86599999666214
11 0.83266669511795
12 0.855666697025299
};
\addplot [semithick, darkcyan32144140, dashed]
table {%
1 0.469666659832001
2 0.593666672706604
3 0.611333310604095
4 0.63566666841507
5 0.638999938964844
6 0.687666654586792
7 0.713666617870331
8 0.681999981403351
9 0.773333311080933
10 0.773000001907349
11 0.704999983310699
12 0.75900000333786
};
\addplot [semithick, darkcyan32144140]
table {%
1 0.791333377361298
2 0.735333383083344
3 0.781333327293396
4 0.852333307266235
5 0.873333334922791
6 0.897666692733765
7 0.892333328723907
8 0.901000022888184
9 0.909666657447815
10 0.913333296775818
11 0.910666644573212
12 0.923333287239075
};
\addplot [semithick, mediumseagreen9420197, dashed]
table {%
1 0.488999992609024
2 0.538333296775818
3 0.610333323478699
4 0.636333346366882
5 0.688333332538605
6 0.699333310127258
7 0.721666634082794
8 0.732333302497864
9 0.782333314418793
10 0.741333246231079
11 0.753666639328003
12 0.778333306312561
};
\addplot [semithick, mediumseagreen9420197]
table {%
1 0.728666663169861
2 0.83433336019516
3 0.867333292961121
4 0.884666621685028
5 0.881333291530609
6 0.895333290100098
7 0.907999932765961
8 0.912666618824005
9 0.916999995708466
10 0.914666593074799
11 0.908999979496002
12 0.908333361148834
};
\addplot [semithick, gold25323136, dashed]
table {%
1 0.503333330154419
2 0.561333358287811
3 0.623333334922791
4 0.644333243370056
5 0.682333290576935
6 0.688666641712189
7 0.714666604995728
8 0.708000004291534
9 0.740666687488556
10 0.745666563510895
11 0.762000024318695
12 0.751999974250793
};
\addplot [semithick, gold25323136]
table {%
1 0.761333346366882
2 0.852666676044464
3 0.861333250999451
4 0.8796666264534
5 0.897333264350891
6 0.899333298206329
7 0.898333311080933
8 0.895666658878326
9 0.90200001001358
10 0.912999987602234
11 0.905999958515167
12 0.913999915122986
};
\end{axis}

\end{tikzpicture}}
    \caption{MAD}
\end{subfigure}

    \begin{subfigure}{0.49\linewidth}
    \centering
    \resizebox{\linewidth}{!}{
\begin{tikzpicture}

\definecolor{darkcyan32144140}{RGB}{32,144,140}
\definecolor{darkgray176}{RGB}{176,176,176}
\definecolor{darkslateblue5882139}{RGB}{58,82,139}
\definecolor{gold25323136}{RGB}{253,231,36}
\definecolor{indigo68184}{RGB}{68,1,84}
\definecolor{mediumseagreen9420197}{RGB}{94,201,97}

\begin{axis}[
tick align=outside,
tick pos=left,
x grid style={darkgray176},
xmin=0.45, xmax=12.55,
xtick style={color=black},
xtick={2,4,6,8,10,12},
xticklabels={
  \(\displaystyle {2}\),
  \(\displaystyle {4}\),
  \(\displaystyle {6}\),
  \(\displaystyle {8}\),
  \(\displaystyle {10}\),
  \(\displaystyle {12}\)
},
y grid style={darkgray176},
ymin=0.453999996185303, ymax=1,
ytick style={color=black},
ytick={0.5,0.6,0.7,0.8,0.9,1},
yticklabels={
  \(\displaystyle {0.5}\),
  \(\displaystyle {0.6}\),
  \(\displaystyle {0.7}\),
  \(\displaystyle {0.8}\),
  \(\displaystyle {0.9}\),
  \(\displaystyle {1.0}\)
}
]
\addplot [semithick, darkslateblue5882139, dashed]
table {%
1 0.456666648387909
2 0.6203333735466
3 0.604333341121674
4 0.669666647911072
5 0.706666648387909
6 0.712999999523163
7 0.75
8 0.748333334922791
9 0.760999917984009
10 0.767666637897491
11 0.781333267688751
12 0.777999997138977
};
\addplot [semithick, darkslateblue5882139]
table {%
1 0.776000022888184
2 0.772666692733765
3 0.74566662311554
4 0.801999986171722
5 0.841000020503998
6 0.867333352565765
7 0.862999975681305
8 0.873999893665314
9 0.876666665077209
10 0.882666707038879
11 0.904333293437958
12 0.897999942302704
};
\addplot [semithick, indigo68184, dashed]
table {%
1 0.479666650295258
2 0.561999976634979
3 0.614666700363159
4 0.654999971389771
5 0.679666638374329
6 0.696999967098236
7 0.720999956130981
8 0.752666711807251
9 0.756666600704193
10 0.770333290100098
11 0.778333365917206
12 0.784333288669586
};
\addplot [semithick, indigo68184]
table {%
1 0.747999966144562
2 0.770666658878326
3 0.755999982357025
4 0.820999979972839
5 0.842333257198334
6 0.866333305835724
7 0.862666726112366
8 0.878666698932648
9 0.876333296298981
10 0.891333341598511
11 0.887333273887634
12 0.875999987125397
};
\addplot [semithick, darkcyan32144140, dashed]
table {%
1 0.472333312034607
2 0.60533332824707
3 0.606333374977112
4 0.680000007152557
5 0.675333321094513
6 0.721666574478149
7 0.745999991893768
8 0.763666689395905
9 0.764666616916656
10 0.765333294868469
11 0.777333319187164
12 0.776666641235352
};
\addplot [semithick, darkcyan32144140]
table {%
1 0.790000021457672
2 0.777333319187164
3 0.807333290576935
4 0.856333315372467
5 0.871666669845581
6 0.889333367347717
7 0.894999921321869
8 0.891333401203156
9 0.905666589736938
10 0.909999966621399
11 0.907999992370605
12 0.914333343505859
};
\addplot [semithick, mediumseagreen9420197, dashed]
table {%
1 0.489666670560837
2 0.580666661262512
3 0.614999949932098
4 0.647000014781952
5 0.684000015258789
6 0.708666622638702
7 0.733666598796844
8 0.733333349227905
9 0.762333273887634
10 0.768333315849304
11 0.772999882698059
12 0.786666631698608
};
\addplot [semithick, mediumseagreen9420197]
table {%
1 0.726999998092651
2 0.802333295345306
3 0.850000023841858
4 0.879666686058044
5 0.877333343029022
6 0.895666658878326
7 0.904999971389771
8 0.911666691303253
9 0.917333364486694
10 0.916000008583069
11 0.913666665554047
12 0.913666665554047
};
\addplot [semithick, gold25323136, dashed]
table {%
1 0.505666673183441
2 0.581666648387909
3 0.609666585922241
4 0.645666718482971
5 0.667999982833862
6 0.715333342552185
7 0.720666646957397
8 0.731000006198883
9 0.772333264350891
10 0.776666581630707
11 0.77099996805191
12 0.778999984264374
};
\addplot [semithick, gold25323136]
table {%
1 0.758666634559631
2 0.843999981880188
3 0.863666653633118
4 0.878666698932648
5 0.90366667509079
6 0.900333285331726
7 0.901000022888184
8 0.898666620254517
9 0.904666662216187
10 0.916999936103821
11 0.90966659784317
12 0.916000008583069
};
\end{axis}

\end{tikzpicture}}
    \caption{MBL}
\end{subfigure}
    \begin{subfigure}{0.49\linewidth}
    \centering
    \resizebox{\linewidth}{!}{
\begin{tikzpicture}

\definecolor{darkcyan32144140}{RGB}{32,144,140}
\definecolor{darkgray176}{RGB}{176,176,176}
\definecolor{darkslateblue5882139}{RGB}{58,82,139}
\definecolor{gold25323136}{RGB}{253,231,36}
\definecolor{indigo68184}{RGB}{68,1,84}
\definecolor{mediumseagreen9420197}{RGB}{94,201,97}

\begin{axis}[
tick align=outside,
tick pos=left,
x grid style={darkgray176},
xmin=0.45, xmax=12.55,
xtick style={color=black},
xtick={2,4,6,8,10,12},
xticklabels={
  \(\displaystyle {2}\),
  \(\displaystyle {4}\),
  \(\displaystyle {6}\),
  \(\displaystyle {8}\),
  \(\displaystyle {10}\),
  \(\displaystyle {12}\)
},
y grid style={darkgray176},
ymin=0.453999996185303, ymax=1,
ytick style={color=black},
ytick={0.5,0.6,0.7,0.8,0.9,1},
yticklabels={
  \(\displaystyle {0.5}\),
  \(\displaystyle {0.6}\),
  \(\displaystyle {0.7}\),
  \(\displaystyle {0.8}\),
  \(\displaystyle {0.9}\),
  \(\displaystyle {1.0}\)
}
]
\addplot [semithick, darkslateblue5882139, dashed]
table {%
1 0.453999996185303
2 0.649666607379913
3 0.71399998664856
4 0.784999966621399
5 0.812333285808563
6 0.829666674137115
7 0.847666680812836
8 0.857333302497864
9 0.852666676044464
10 0.874333322048187
11 0.890999972820282
12 0.889333367347717
};
\addplot [semithick, darkslateblue5882139]
table {%
1 0.78600001335144
2 0.827000021934509
3 0.853999972343445
4 0.868666589260101
5 0.879333317279816
6 0.899666726589203
7 0.900333344936371
8 0.893999993801117
9 0.905333340167999
10 0.900999963283539
11 0.91266667842865
12 0.90666663646698
};
\addplot [semithick, indigo68184, dashed]
table {%
1 0.468666672706604
2 0.610333323478699
3 0.718666732311249
4 0.752333343029022
5 0.806333303451538
6 0.826333224773407
7 0.835333406925201
8 0.861000001430511
9 0.852333247661591
10 0.868666589260101
11 0.883999943733215
12 0.871666610240936
};
\addplot [semithick, indigo68184]
table {%
1 0.759999990463257
2 0.828666687011719
3 0.835333228111267
4 0.871333301067352
5 0.881999969482422
6 0.89766663312912
7 0.894333302974701
8 0.888999998569489
9 0.894333302974701
10 0.917999982833862
11 0.897333323955536
12 0.899999916553497
};
\addplot [semithick, darkcyan32144140, dashed]
table {%
1 0.468333303928375
2 0.661999940872192
3 0.725666701793671
4 0.779333293437958
5 0.778333365917206
6 0.828666627407074
7 0.827666580677032
8 0.856333374977112
9 0.862666666507721
10 0.885666608810425
11 0.862666666507721
12 0.874999940395355
};
\addplot [semithick, darkcyan32144140]
table {%
1 0.791999995708466
2 0.826666653156281
3 0.862000048160553
4 0.881666600704193
5 0.885999977588654
6 0.901000022888184
7 0.90200001001358
8 0.898000001907349
9 0.904999971389771
10 0.912999987602234
11 0.910666644573212
12 0.916666626930237
};
\addplot [semithick, mediumseagreen9420197, dashed]
table {%
1 0.483333319425583
2 0.628000020980835
3 0.73033332824707
4 0.765333294868469
5 0.799000024795532
6 0.829333364963531
7 0.832666635513306
8 0.841666638851166
9 0.871000051498413
10 0.873000025749207
11 0.859000027179718
12 0.881999969482422
};
\addplot [semithick, mediumseagreen9420197]
table {%
1 0.73499995470047
2 0.840333342552185
3 0.8796666264534
4 0.886666655540466
5 0.88999992609024
6 0.895333290100098
7 0.904000043869019
8 0.901333272457123
9 0.916999936103821
10 0.911333322525024
11 0.898666620254517
12 0.906333267688751
};
\addplot [semithick, gold25323136, dashed]
table {%
1 0.499333322048187
2 0.643333315849304
3 0.720333278179169
4 0.755333304405212
5 0.805999994277954
6 0.827666640281677
7 0.847000002861023
8 0.83733332157135
9 0.869000017642975
10 0.864666640758514
11 0.875
12 0.867666602134705
};
\addplot [semithick, gold25323136]
table {%
1 0.760999977588654
2 0.849999964237213
3 0.869000017642975
4 0.877666652202606
5 0.898666620254517
6 0.89599996805191
7 0.893666625022888
8 0.891999959945679
9 0.897999942302704
10 0.916333258152008
11 0.904999971389771
12 0.902999937534332
};
\end{axis}

\end{tikzpicture}}
    \caption{MBL-QP}
\end{subfigure}
\begin{subfigure}{0.49\linewidth}
    \centering
    \resizebox{\linewidth}{!}{
\begin{tikzpicture}

\definecolor{darkcyan32144140}{RGB}{32,144,140}
\definecolor{darkgray176}{RGB}{176,176,176}
\definecolor{darkslateblue5882139}{RGB}{58,82,139}
\definecolor{gold25323136}{RGB}{253,231,36}
\definecolor{indigo68184}{RGB}{68,1,84}
\definecolor{mediumseagreen9420197}{RGB}{94,201,97}

\begin{axis}[
tick align=outside,
tick pos=left,
x grid style={darkgray176},
xmin=0.45, xmax=12.55,
xtick style={color=black},
xtick={2,4,6,8,10,12},
xticklabels={
  \(\displaystyle {2}\),
  \(\displaystyle {4}\),
  \(\displaystyle {6}\),
  \(\displaystyle {8}\),
  \(\displaystyle {10}\),
  \(\displaystyle {12}\)
},
y grid style={darkgray176},
ymin=0.453999996185303, ymax=1,
ytick style={color=black},
ytick={0.5,0.6,0.7,0.8,0.9,1},
yticklabels={
  \(\displaystyle {0.5}\),
  \(\displaystyle {0.6}\),
  \(\displaystyle {0.7}\),
  \(\displaystyle {0.8}\),
  \(\displaystyle {0.9}\),
  \(\displaystyle {1.0}\)
}
]
\addplot [semithick, darkslateblue5882139, dashed]
table {%
1 0.509333312511444
2 0.670999944210052
3 0.735666632652283
4 0.794666647911072
5 0.824666678905487
6 0.836333334445953
7 0.857666671276093
8 0.868000030517578
9 0.863666594028473
10 0.8796666264534
11 0.891666650772095
12 0.894333302974701
};
\addplot [semithick, darkslateblue5882139]
table {%
1 0.800999939441681
2 0.83899998664856
3 0.857999980449677
4 0.874000012874603
5 0.884666621685028
6 0.907333314418793
7 0.900999963283539
8 0.898666560649872
9 0.911666631698608
10 0.902999937534332
11 0.910999953746796
12 0.908333361148834
};
\addplot [semithick, indigo68184, dashed]
table {%
1 0.514666676521301
2 0.64000004529953
3 0.732666671276093
4 0.773999989032745
5 0.819666683673859
6 0.843999922275543
7 0.848666608333588
8 0.872333288192749
9 0.857333302497864
10 0.874666750431061
11 0.884999990463257
12 0.880333304405212
};
\addplot [semithick, indigo68184]
table {%
1 0.776666641235352
2 0.835666656494141
3 0.853333413600922
4 0.874000012874603
5 0.890333235263824
6 0.898333311080933
7 0.89466667175293
8 0.892333328723907
9 0.895666658878326
10 0.919666647911072
11 0.898999989032745
12 0.899999976158142
};
\addplot [semithick, darkcyan32144140, dashed]
table {%
1 0.521999955177307
2 0.688666641712189
3 0.738333344459534
4 0.787333309650421
5 0.78766655921936
6 0.837666630744934
7 0.836333334445953
8 0.858333349227905
9 0.87066662311554
10 0.890999972820282
11 0.869666635990143
12 0.883333325386047
};
\addplot [semithick, darkcyan32144140]
table {%
1 0.803333342075348
2 0.836333274841309
3 0.871999979019165
4 0.885333299636841
5 0.886666595935822
6 0.908666670322418
7 0.905333340167999
8 0.898666620254517
9 0.904333353042603
10 0.913666665554047
11 0.912999927997589
12 0.917999923229218
};
\addplot [semithick, mediumseagreen9420197, dashed]
table {%
1 0.528333306312561
2 0.660333395004272
3 0.747333347797394
4 0.776333391666412
5 0.810333251953125
6 0.84033328294754
7 0.847000002861023
8 0.849999964237213
9 0.88099992275238
10 0.880333304405212
11 0.872999966144562
12 0.887999951839447
};
\addplot [semithick, mediumseagreen9420197]
table {%
1 0.749666631221771
2 0.844333350658417
3 0.881333291530609
4 0.888333380222321
5 0.888999998569489
6 0.894666612148285
7 0.906000018119812
8 0.902666687965393
9 0.916666626930237
10 0.910333335399628
11 0.899333298206329
12 0.907999932765961
};
\addplot [semithick, gold25323136, dashed]
table {%
1 0.551999986171722
2 0.675000011920929
3 0.732333242893219
4 0.771999955177307
5 0.818333387374878
6 0.838333308696747
7 0.852999985218048
8 0.845999956130981
9 0.871999979019165
10 0.871333301067352
11 0.875
12 0.873666644096375
};
\addplot [semithick, gold25323136]
table {%
1 0.775666654109955
2 0.852333307266235
3 0.872999966144562
4 0.878333270549774
5 0.902333319187164
6 0.899999976158142
7 0.897000014781952
8 0.895333290100098
9 0.898666679859161
10 0.915333330631256
11 0.904999971389771
12 0.903999984264374
};
\end{axis}

\end{tikzpicture}}
    \caption{MC}
\end{subfigure}
\begin{subfigure}{0.49\linewidth}
    \centering
    \resizebox{\linewidth}{!}{
\begingroup%
\makeatletter%
\begin{pgfpicture}%
\pgfpathrectangle{\pgfpointorigin}{\pgfqpoint{1.725315in}{1.625315in}}%
\pgfusepath{use as bounding box, clip}%
\begin{pgfscope}%
\pgfsetbuttcap%
\pgfsetmiterjoin%
\definecolor{currentfill}{rgb}{1.000000,1.000000,1.000000}%
\pgfsetfillcolor{currentfill}%
\pgfsetlinewidth{0.000000pt}%
\definecolor{currentstroke}{rgb}{1.000000,1.000000,1.000000}%
\pgfsetstrokecolor{currentstroke}%
\pgfsetdash{}{0pt}%
\pgfpathmoveto{\pgfqpoint{0.000000in}{0.000000in}}%
\pgfpathlineto{\pgfqpoint{1.725315in}{0.000000in}}%
\pgfpathlineto{\pgfqpoint{1.725315in}{1.625315in}}%
\pgfpathlineto{\pgfqpoint{0.000000in}{1.625315in}}%
\pgfpathclose%
\pgfusepath{fill}%
\end{pgfscope}%
\begin{pgfscope}%
\pgfsetrectcap%
\pgfsetroundjoin%
\pgfsetlinewidth{1.505625pt}%
\definecolor{currentstroke}{rgb}{0.267004,0.004874,0.329415}%
\pgfsetstrokecolor{currentstroke}%
\pgfsetdash{}{0pt}%
\pgfpathmoveto{\pgfqpoint{0.399246in}{1.205380in}}%
\pgfpathlineto{\pgfqpoint{0.677024in}{1.205380in}}%
\pgfusepath{stroke}%
\end{pgfscope}%
\begin{pgfscope}%
\definecolor{textcolor}{rgb}{0.000000,0.000000,0.000000}%
\pgfsetstrokecolor{textcolor}%
\pgfsetfillcolor{textcolor}%
\pgftext[x=0.788135in,y=1.156768in,left,base]{\color{textcolor}\fontsize{10.000000}{12.000000}\selectfont \(\displaystyle \rho=\) 0.00}%
\end{pgfscope}%
\begin{pgfscope}%
\pgfsetrectcap%
\pgfsetroundjoin%
\pgfsetlinewidth{1.505625pt}%
\definecolor{currentstroke}{rgb}{0.229739,0.322361,0.545706}%
\pgfsetstrokecolor{currentstroke}%
\pgfsetdash{}{0pt}%
\pgfpathmoveto{\pgfqpoint{0.399246in}{1.011707in}}%
\pgfpathlineto{\pgfqpoint{0.677024in}{1.011707in}}%
\pgfusepath{stroke}%
\end{pgfscope}%
\begin{pgfscope}%
\definecolor{textcolor}{rgb}{0.000000,0.000000,0.000000}%
\pgfsetstrokecolor{textcolor}%
\pgfsetfillcolor{textcolor}%
\pgftext[x=0.788135in,y=0.963096in,left,base]{\color{textcolor}\fontsize{10.000000}{12.000000}\selectfont \(\displaystyle \rho=\) 0.01}%
\end{pgfscope}%
\begin{pgfscope}%
\pgfsetrectcap%
\pgfsetroundjoin%
\pgfsetlinewidth{1.505625pt}%
\definecolor{currentstroke}{rgb}{0.127568,0.566949,0.550556}%
\pgfsetstrokecolor{currentstroke}%
\pgfsetdash{}{0pt}%
\pgfpathmoveto{\pgfqpoint{0.399246in}{0.818034in}}%
\pgfpathlineto{\pgfqpoint{0.677024in}{0.818034in}}%
\pgfusepath{stroke}%
\end{pgfscope}%
\begin{pgfscope}%
\definecolor{textcolor}{rgb}{0.000000,0.000000,0.000000}%
\pgfsetstrokecolor{textcolor}%
\pgfsetfillcolor{textcolor}%
\pgftext[x=0.788135in,y=0.769423in,left,base]{\color{textcolor}\fontsize{10.000000}{12.000000}\selectfont \(\displaystyle \rho=\) 0.10}%
\end{pgfscope}%
\begin{pgfscope}%
\pgfsetrectcap%
\pgfsetroundjoin%
\pgfsetlinewidth{1.505625pt}%
\definecolor{currentstroke}{rgb}{0.369214,0.788888,0.382914}%
\pgfsetstrokecolor{currentstroke}%
\pgfsetdash{}{0pt}%
\pgfpathmoveto{\pgfqpoint{0.399246in}{0.624361in}}%
\pgfpathlineto{\pgfqpoint{0.677024in}{0.624361in}}%
\pgfusepath{stroke}%
\end{pgfscope}%
\begin{pgfscope}%
\definecolor{textcolor}{rgb}{0.000000,0.000000,0.000000}%
\pgfsetstrokecolor{textcolor}%
\pgfsetfillcolor{textcolor}%
\pgftext[x=0.788135in,y=0.575750in,left,base]{\color{textcolor}\fontsize{10.000000}{12.000000}\selectfont \(\displaystyle \rho=\) 1.00}%
\end{pgfscope}%
\begin{pgfscope}%
\pgfsetrectcap%
\pgfsetroundjoin%
\pgfsetlinewidth{1.505625pt}%
\definecolor{currentstroke}{rgb}{0.993248,0.906157,0.143936}%
\pgfsetstrokecolor{currentstroke}%
\pgfsetdash{}{0pt}%
\pgfpathmoveto{\pgfqpoint{0.399246in}{0.430688in}}%
\pgfpathlineto{\pgfqpoint{0.677024in}{0.430688in}}%
\pgfusepath{stroke}%
\end{pgfscope}%
\begin{pgfscope}%
\definecolor{textcolor}{rgb}{0.000000,0.000000,0.000000}%
\pgfsetstrokecolor{textcolor}%
\pgfsetfillcolor{textcolor}%
\pgftext[x=0.788135in,y=0.382077in,left,base]{\color{textcolor}\fontsize{10.000000}{12.000000}\selectfont \(\displaystyle \rho=\) 10.00}%
\end{pgfscope}%
\end{pgfpicture}%
\makeatother%
\endgroup
\end{subfigure}
\caption{Accuracy vs. number of reference documents. Solid lines and dashed lines correspond to the reference documents being learned dictionary atoms or random documents from each class, respectively.  Here, we group by method and show the impact of different levels of regularization and learning versus random atoms. Except for small $\rho$ in 1NN, using learned reference documents significantly outperforms the use randomly sampled documents across all number of references.}
\label{fig:nlp_fixedmethod}
\end{figure}

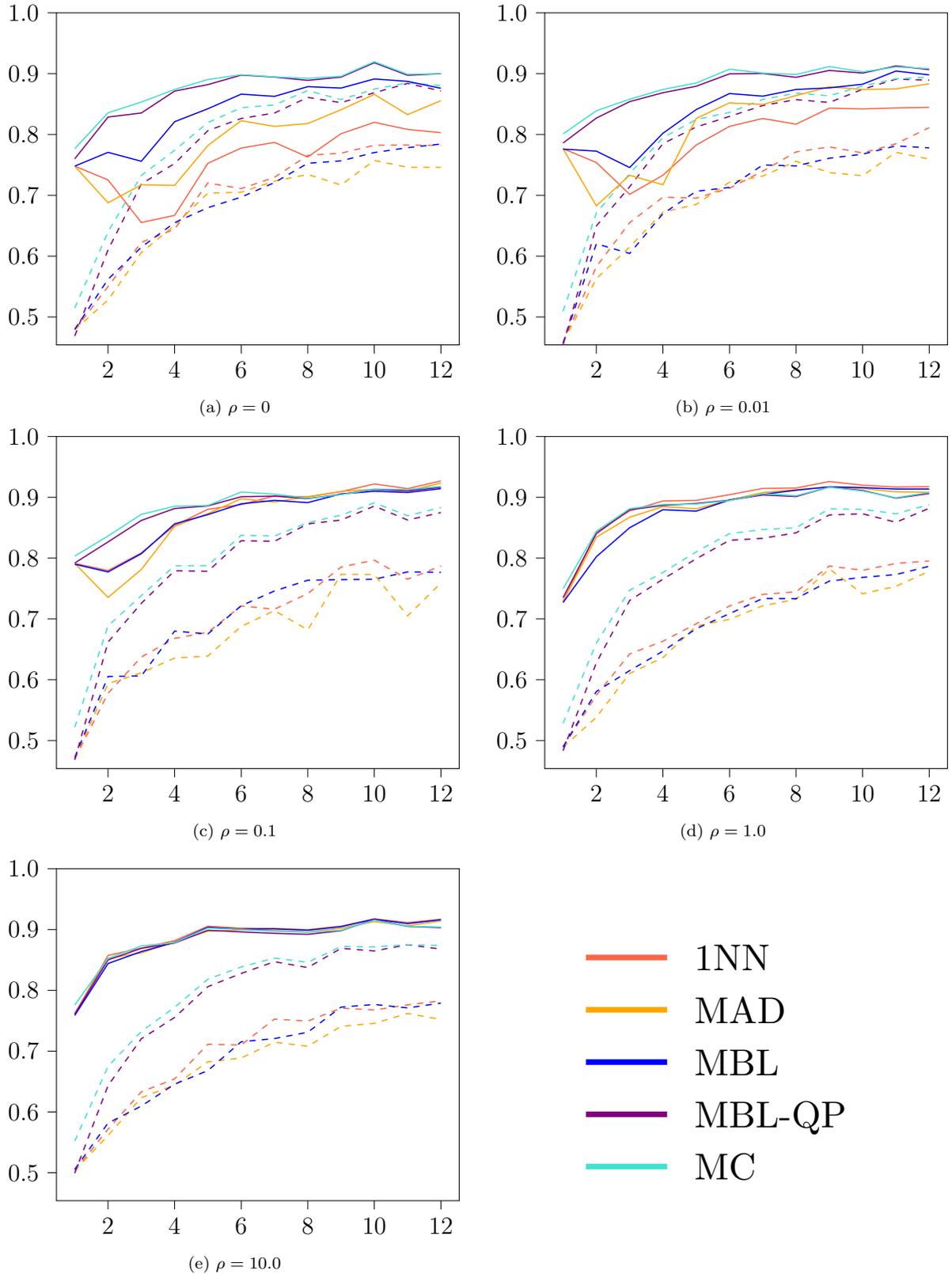
\begin{figure}[htbp!]
    \begin{subfigure}{0.49\linewidth}
    \centering
    \resizebox{\linewidth}{!}{
\begin{tikzpicture}

\definecolor{darkgray176}{RGB}{176,176,176}
\definecolor{orange}{RGB}{255,165,0}
\definecolor{purple}{RGB}{128,0,128}
\definecolor{tomato}{RGB}{255,99,71}
\definecolor{turquoise}{RGB}{64,224,208}

\begin{axis}[
tick align=outside,
tick pos=left,
x grid style={darkgray176},
xmin=0.45, xmax=12.55,
xtick style={color=black},
xtick={2,4,6,8,10,12},
xticklabels={
  \(\displaystyle {2}\),
  \(\displaystyle {4}\),
  \(\displaystyle {6}\),
  \(\displaystyle {8}\),
  \(\displaystyle {10}\),
  \(\displaystyle {12}\)
},
y grid style={darkgray176},
ymin=0.453999996185303, ymax=1,
ytick style={color=black},
ytick={0.5,0.6,0.7,0.8,0.9,1},
yticklabels={
  \(\displaystyle {0.5}\),
  \(\displaystyle {0.6}\),
  \(\displaystyle {0.7}\),
  \(\displaystyle {0.8}\),
  \(\displaystyle {0.9}\),
  \(\displaystyle {1.0}\)
}
]
\addplot [semithick, tomato, dashed]
table {%
1 0.478666663169861
2 0.550000011920929
3 0.622666597366333
4 0.644333302974701
5 0.720333218574524
6 0.711000025272369
7 0.729666650295258
8 0.765666663646698
9 0.769333302974701
10 0.782333314418793
11 0.782666623592377
12 0.781999945640564
};
\addplot [semithick, orange, dashed]
table {%
1 0.478666663169861
2 0.527999997138977
3 0.606000006198883
4 0.65066659450531
5 0.703666627407074
6 0.704999923706055
7 0.723333358764648
8 0.734333276748657
9 0.71699994802475
10 0.757333338260651
11 0.745999932289124
12 0.745999991893768
};
\addplot [semithick, blue, dashed]
table {%
1 0.479666650295258
2 0.561999976634979
3 0.614666700363159
4 0.654999971389771
5 0.679666638374329
6 0.696999967098236
7 0.720999956130981
8 0.752666711807251
9 0.756666600704193
10 0.770333290100098
11 0.778333365917206
12 0.784333288669586
};
\addplot [semithick, purple, dashed]
table {%
1 0.468666672706604
2 0.610333323478699
3 0.718666732311249
4 0.752333343029022
5 0.806333303451538
6 0.826333224773407
7 0.835333406925201
8 0.861000001430511
9 0.852333247661591
10 0.868666589260101
11 0.883999943733215
12 0.871666610240936
};
\addplot [semithick, turquoise, dashed]
table {%
1 0.514666676521301
2 0.64000004529953
3 0.732666671276093
4 0.773999989032745
5 0.819666683673859
6 0.843999922275543
7 0.848666608333588
8 0.872333288192749
9 0.857333302497864
10 0.874666750431061
11 0.884999990463257
12 0.880333304405212
};
\addplot [semithick, tomato]
table {%
1 0.747999966144562
2 0.725333273410797
3 0.655333340167999
4 0.666999936103821
5 0.752666652202606
6 0.777666628360748
7 0.786999940872192
8 0.762999951839447
9 0.801333367824554
10 0.819999933242798
11 0.808333337306976
12 0.803000032901764
};
\addplot [semithick, orange]
table {%
1 0.747999966144562
2 0.687666714191437
3 0.717333376407623
4 0.716333329677582
5 0.781999886035919
6 0.822666704654694
7 0.81333327293396
8 0.817999958992004
9 0.840666651725769
10 0.86599999666214
11 0.83266669511795
12 0.855666697025299
};
\addplot [semithick, blue]
table {%
1 0.747999966144562
2 0.770666658878326
3 0.755999982357025
4 0.820999979972839
5 0.842333257198334
6 0.866333305835724
7 0.862666726112366
8 0.878666698932648
9 0.876333296298981
10 0.891333341598511
11 0.887333273887634
12 0.875999987125397
};
\addplot [semithick, purple]
table {%
1 0.759999990463257
2 0.828666687011719
3 0.835333228111267
4 0.871333301067352
5 0.881999969482422
6 0.89766663312912
7 0.894333302974701
8 0.888999998569489
9 0.894333302974701
10 0.917999982833862
11 0.897333323955536
12 0.899999916553497
};
\addplot [semithick, turquoise]
table {%
1 0.776666641235352
2 0.835666656494141
3 0.853333413600922
4 0.874000012874603
5 0.890333235263824
6 0.898333311080933
7 0.89466667175293
8 0.892333328723907
9 0.895666658878326
10 0.919666647911072
11 0.898999989032745
12 0.899999976158142
};
\end{axis}

\end{tikzpicture}}
    \caption{$\rho=0$}
\end{subfigure}
\hfill
    \begin{subfigure}{0.49\linewidth}
    \centering
    \resizebox{\linewidth}{!}{
\begin{tikzpicture}

\definecolor{darkgray176}{RGB}{176,176,176}
\definecolor{orange}{RGB}{255,165,0}
\definecolor{purple}{RGB}{128,0,128}
\definecolor{tomato}{RGB}{255,99,71}
\definecolor{turquoise}{RGB}{64,224,208}

\begin{axis}[
tick align=outside,
tick pos=left,
x grid style={darkgray176},
xmin=0.45, xmax=12.55,
xtick style={color=black},
xtick={2,4,6,8,10,12},
xticklabels={
  \(\displaystyle {2}\),
  \(\displaystyle {4}\),
  \(\displaystyle {6}\),
  \(\displaystyle {8}\),
  \(\displaystyle {10}\),
  \(\displaystyle {12}\)
},
y grid style={darkgray176},
ymin=0.453999996185303, ymax=1,
ytick style={color=black},
ytick={0.5,0.6,0.7,0.8,0.9,1},
yticklabels={
  \(\displaystyle {0.5}\),
  \(\displaystyle {0.6}\),
  \(\displaystyle {0.7}\),
  \(\displaystyle {0.8}\),
  \(\displaystyle {0.9}\),
  \(\displaystyle {1.0}\)
}
]
\addplot [semithick, tomato, dashed]
table {%
1 0.458333343267441
2 0.582666635513306
3 0.654999971389771
4 0.696999967098236
5 0.695333302021027
6 0.711666643619537
7 0.739666640758514
8 0.771333336830139
9 0.779666662216187
10 0.76966667175293
11 0.784666657447815
12 0.811333358287811
};
\addplot [semithick, orange, dashed]
table {%
1 0.458333343267441
2 0.562999963760376
3 0.614999949932098
4 0.672333359718323
5 0.685000002384186
6 0.721666634082794
7 0.731666684150696
8 0.75599992275238
9 0.737333357334137
10 0.732333302497864
11 0.770666658878326
12 0.759666681289673
};
\addplot [semithick, blue, dashed]
table {%
1 0.456666648387909
2 0.6203333735466
3 0.604333341121674
4 0.669666647911072
5 0.706666648387909
6 0.712999999523163
7 0.75
8 0.748333334922791
9 0.760999917984009
10 0.767666637897491
11 0.781333267688751
12 0.777999997138977
};
\addplot [semithick, purple, dashed]
table {%
1 0.453999996185303
2 0.649666607379913
3 0.71399998664856
4 0.784999966621399
5 0.812333285808563
6 0.829666674137115
7 0.847666680812836
8 0.857333302497864
9 0.852666676044464
10 0.874333322048187
11 0.890999972820282
12 0.889333367347717
};
\addplot [semithick, turquoise, dashed]
table {%
1 0.509333312511444
2 0.670999944210052
3 0.735666632652283
4 0.794666647911072
5 0.824666678905487
6 0.836333334445953
7 0.857666671276093
8 0.868000030517578
9 0.863666594028473
10 0.8796666264534
11 0.891666650772095
12 0.894333302974701
};
\addplot [semithick, tomato]
table {%
1 0.776666700839996
2 0.754000008106232
3 0.701666653156281
4 0.732999980449677
5 0.782666623592377
6 0.813333332538605
7 0.826333224773407
8 0.816999971866608
9 0.84333336353302
10 0.842000007629395
11 0.843666672706604
12 0.844666659832001
};
\addplot [semithick, orange]
table {%
1 0.776666700839996
2 0.682666659355164
3 0.733000040054321
4 0.717333376407623
5 0.826333224773407
6 0.852000057697296
7 0.849333345890045
8 0.864000022411346
9 0.878333330154419
10 0.873999953269958
11 0.875
12 0.883333325386047
};
\addplot [semithick, blue]
table {%
1 0.776000022888184
2 0.772666692733765
3 0.74566662311554
4 0.801999986171722
5 0.841000020503998
6 0.867333352565765
7 0.862999975681305
8 0.873999893665314
9 0.876666665077209
10 0.882666707038879
11 0.904333293437958
12 0.897999942302704
};
\addplot [semithick, purple]
table {%
1 0.78600001335144
2 0.827000021934509
3 0.853999972343445
4 0.868666589260101
5 0.879333317279816
6 0.899666726589203
7 0.900333344936371
8 0.893999993801117
9 0.905333340167999
10 0.900999963283539
11 0.91266667842865
12 0.90666663646698
};
\addplot [semithick, turquoise]
table {%
1 0.800999939441681
2 0.83899998664856
3 0.857999980449677
4 0.874000012874603
5 0.884666621685028
6 0.907333314418793
7 0.900999963283539
8 0.898666560649872
9 0.911666631698608
10 0.902999937534332
11 0.910999953746796
12 0.908333361148834
};
\end{axis}

\end{tikzpicture}}
    \caption{$\rho=0.01$}
\end{subfigure}

    \begin{subfigure}{0.49\linewidth}
    \centering
    \resizebox{\linewidth}{!}{
\begin{tikzpicture}

\definecolor{darkgray176}{RGB}{176,176,176}
\definecolor{orange}{RGB}{255,165,0}
\definecolor{purple}{RGB}{128,0,128}
\definecolor{tomato}{RGB}{255,99,71}
\definecolor{turquoise}{RGB}{64,224,208}

\begin{axis}[
tick align=outside,
tick pos=left,
x grid style={darkgray176},
xmin=0.45, xmax=12.55,
xtick style={color=black},
xtick={2,4,6,8,10,12},
xticklabels={
  \(\displaystyle {2}\),
  \(\displaystyle {4}\),
  \(\displaystyle {6}\),
  \(\displaystyle {8}\),
  \(\displaystyle {10}\),
  \(\displaystyle {12}\)
},
y grid style={darkgray176},
ymin=0.453999996185303, ymax=1,
ytick style={color=black},
ytick={0.5,0.6,0.7,0.8,0.9,1},
yticklabels={
  \(\displaystyle {0.5}\),
  \(\displaystyle {0.6}\),
  \(\displaystyle {0.7}\),
  \(\displaystyle {0.8}\),
  \(\displaystyle {0.9}\),
  \(\displaystyle {1.0}\)
}
]
\addplot [semithick, tomato, dashed]
table {%
1 0.469666659832001
2 0.577666640281677
3 0.637333273887634
4 0.668000042438507
5 0.678000032901764
6 0.72133332490921
7 0.715999960899353
8 0.741666674613953
9 0.785333335399628
10 0.796666741371155
11 0.765333354473114
12 0.787000000476837
};
\addplot [semithick, orange, dashed]
table {%
1 0.469666659832001
2 0.593666672706604
3 0.611333310604095
4 0.63566666841507
5 0.638999938964844
6 0.687666654586792
7 0.713666617870331
8 0.681999981403351
9 0.773333311080933
10 0.773000001907349
11 0.704999983310699
12 0.75900000333786
};
\addplot [semithick, blue, dashed]
table {%
1 0.472333312034607
2 0.60533332824707
3 0.606333374977112
4 0.680000007152557
5 0.675333321094513
6 0.721666574478149
7 0.745999991893768
8 0.763666689395905
9 0.764666616916656
10 0.765333294868469
11 0.777333319187164
12 0.776666641235352
};
\addplot [semithick, purple, dashed]
table {%
1 0.468333303928375
2 0.661999940872192
3 0.725666701793671
4 0.779333293437958
5 0.778333365917206
6 0.828666627407074
7 0.827666580677032
8 0.856333374977112
9 0.862666666507721
10 0.885666608810425
11 0.862666666507721
12 0.874999940395355
};
\addplot [semithick, turquoise, dashed]
table {%
1 0.521999955177307
2 0.688666641712189
3 0.738333344459534
4 0.787333309650421
5 0.78766655921936
6 0.837666630744934
7 0.836333334445953
8 0.858333349227905
9 0.87066662311554
10 0.890999972820282
11 0.869666635990143
12 0.883333325386047
};
\addplot [semithick, tomato]
table {%
1 0.791333377361298
2 0.779666662216187
3 0.80866664648056
4 0.853666663169861
5 0.880333244800568
6 0.887999951839447
7 0.902333319187164
8 0.90066659450531
9 0.909333288669586
10 0.921999931335449
11 0.914333343505859
12 0.926999986171722
};
\addplot [semithick, orange]
table {%
1 0.791333377361298
2 0.735333383083344
3 0.781333327293396
4 0.852333307266235
5 0.873333334922791
6 0.897666692733765
7 0.892333328723907
8 0.901000022888184
9 0.909666657447815
10 0.913333296775818
11 0.910666644573212
12 0.923333287239075
};
\addplot [semithick, blue]
table {%
1 0.790000021457672
2 0.777333319187164
3 0.807333290576935
4 0.856333315372467
5 0.871666669845581
6 0.889333367347717
7 0.894999921321869
8 0.891333401203156
9 0.905666589736938
10 0.909999966621399
11 0.907999992370605
12 0.914333343505859
};
\addplot [semithick, purple]
table {%
1 0.791999995708466
2 0.826666653156281
3 0.862000048160553
4 0.881666600704193
5 0.885999977588654
6 0.901000022888184
7 0.90200001001358
8 0.898000001907349
9 0.904999971389771
10 0.912999987602234
11 0.910666644573212
12 0.916666626930237
};
\addplot [semithick, turquoise]
table {%
1 0.803333342075348
2 0.836333274841309
3 0.871999979019165
4 0.885333299636841
5 0.886666595935822
6 0.908666670322418
7 0.905333340167999
8 0.898666620254517
9 0.904333353042603
10 0.913666665554047
11 0.912999927997589
12 0.917999923229218
};
\end{axis}

\end{tikzpicture}}
    \caption{$\rho=0.1$}
\end{subfigure}
    \begin{subfigure}{0.49\linewidth}
    \centering
    \resizebox{\linewidth}{!}{
\begin{tikzpicture}

\definecolor{darkgray176}{RGB}{176,176,176}
\definecolor{orange}{RGB}{255,165,0}
\definecolor{purple}{RGB}{128,0,128}
\definecolor{tomato}{RGB}{255,99,71}
\definecolor{turquoise}{RGB}{64,224,208}

\begin{axis}[
tick align=outside,
tick pos=left,
x grid style={darkgray176},
xmin=0.45, xmax=12.55,
xtick style={color=black},
xtick={2,4,6,8,10,12},
xticklabels={
  \(\displaystyle {2}\),
  \(\displaystyle {4}\),
  \(\displaystyle {6}\),
  \(\displaystyle {8}\),
  \(\displaystyle {10}\),
  \(\displaystyle {12}\)
},
y grid style={darkgray176},
ymin=0.453999996185303, ymax=1,
ytick style={color=black},
ytick={0.5,0.6,0.7,0.8,0.9,1},
yticklabels={
  \(\displaystyle {0.5}\),
  \(\displaystyle {0.6}\),
  \(\displaystyle {0.7}\),
  \(\displaystyle {0.8}\),
  \(\displaystyle {0.9}\),
  \(\displaystyle {1.0}\)
}
]
\addplot [semithick, tomato, dashed]
table {%
1 0.488999992609024
2 0.574666678905487
3 0.642333269119263
4 0.663333296775818
5 0.691999971866608
6 0.721333265304565
7 0.740333318710327
8 0.744666695594788
9 0.787000000476837
10 0.779999971389771
11 0.791333317756653
12 0.79533326625824
};
\addplot [semithick, orange, dashed]
table {%
1 0.488999992609024
2 0.538333296775818
3 0.610333323478699
4 0.636333346366882
5 0.688333332538605
6 0.699333310127258
7 0.721666634082794
8 0.732333302497864
9 0.782333314418793
10 0.741333246231079
11 0.753666639328003
12 0.778333306312561
};
\addplot [semithick, blue, dashed]
table {%
1 0.489666670560837
2 0.580666661262512
3 0.614999949932098
4 0.647000014781952
5 0.684000015258789
6 0.708666622638702
7 0.733666598796844
8 0.733333349227905
9 0.762333273887634
10 0.768333315849304
11 0.772999882698059
12 0.786666631698608
};
\addplot [semithick, purple, dashed]
table {%
1 0.483333319425583
2 0.628000020980835
3 0.73033332824707
4 0.765333294868469
5 0.799000024795532
6 0.829333364963531
7 0.832666635513306
8 0.841666638851166
9 0.871000051498413
10 0.873000025749207
11 0.859000027179718
12 0.881999969482422
};
\addplot [semithick, turquoise, dashed]
table {%
1 0.528333306312561
2 0.660333395004272
3 0.747333347797394
4 0.776333391666412
5 0.810333251953125
6 0.84033328294754
7 0.847000002861023
8 0.849999964237213
9 0.88099992275238
10 0.880333304405212
11 0.872999966144562
12 0.887999951839447
};
\addplot [semithick, tomato]
table {%
1 0.728666663169861
2 0.843333303928375
3 0.877666652202606
4 0.893999934196472
5 0.894999921321869
6 0.904333353042603
7 0.914666593074799
8 0.915333330631256
9 0.925999939441681
10 0.920000016689301
11 0.916999995708466
12 0.917666614055634
};
\addplot [semithick, orange]
table {%
1 0.728666663169861
2 0.83433336019516
3 0.867333292961121
4 0.884666621685028
5 0.881333291530609
6 0.895333290100098
7 0.907999932765961
8 0.912666618824005
9 0.916999995708466
10 0.914666593074799
11 0.908999979496002
12 0.908333361148834
};
\addplot [semithick, blue]
table {%
1 0.726999998092651
2 0.802333295345306
3 0.850000023841858
4 0.879666686058044
5 0.877333343029022
6 0.895666658878326
7 0.904999971389771
8 0.911666691303253
9 0.917333364486694
10 0.916000008583069
11 0.913666665554047
12 0.913666665554047
};
\addplot [semithick, purple]
table {%
1 0.73499995470047
2 0.840333342552185
3 0.8796666264534
4 0.886666655540466
5 0.88999992609024
6 0.895333290100098
7 0.904000043869019
8 0.901333272457123
9 0.916999936103821
10 0.911333322525024
11 0.898666620254517
12 0.906333267688751
};
\addplot [semithick, turquoise]
table {%
1 0.749666631221771
2 0.844333350658417
3 0.881333291530609
4 0.888333380222321
5 0.888999998569489
6 0.894666612148285
7 0.906000018119812
8 0.902666687965393
9 0.916666626930237
10 0.910333335399628
11 0.899333298206329
12 0.907999932765961
};
\end{axis}

\end{tikzpicture}}
    \caption{$\rho=1.0$}
\end{subfigure}
\begin{subfigure}{0.49\linewidth}
    \centering
    \resizebox{\linewidth}{!}{
\begin{tikzpicture}

\definecolor{darkgray176}{RGB}{176,176,176}
\definecolor{orange}{RGB}{255,165,0}
\definecolor{purple}{RGB}{128,0,128}
\definecolor{tomato}{RGB}{255,99,71}
\definecolor{turquoise}{RGB}{64,224,208}

\begin{axis}[
tick align=outside,
tick pos=left,
x grid style={darkgray176},
xmin=0.45, xmax=12.55,
xtick style={color=black},
xtick={2,4,6,8,10,12},
xticklabels={
  \(\displaystyle {2}\),
  \(\displaystyle {4}\),
  \(\displaystyle {6}\),
  \(\displaystyle {8}\),
  \(\displaystyle {10}\),
  \(\displaystyle {12}\)
},
y grid style={darkgray176},
ymin=0.453999996185303, ymax=1,
ytick style={color=black},
ytick={0.5,0.6,0.7,0.8,0.9,1},
yticklabels={
  \(\displaystyle {0.5}\),
  \(\displaystyle {0.6}\),
  \(\displaystyle {0.7}\),
  \(\displaystyle {0.8}\),
  \(\displaystyle {0.9}\),
  \(\displaystyle {1.0}\)
}
]
\addplot [semithick, tomato, dashed]
table {%
1 0.503333330154419
2 0.571666657924652
3 0.633000016212463
4 0.654333293437958
5 0.711333274841309
6 0.709999978542328
7 0.752666592597961
8 0.749333322048187
9 0.770666658878326
10 0.767666637897491
11 0.775999963283539
12 0.782666623592377
};
\addplot [semithick, orange, dashed]
table {%
1 0.503333330154419
2 0.561333358287811
3 0.623333334922791
4 0.644333243370056
5 0.682333290576935
6 0.688666641712189
7 0.714666604995728
8 0.708000004291534
9 0.740666687488556
10 0.745666563510895
11 0.762000024318695
12 0.751999974250793
};
\addplot [semithick, blue, dashed]
table {%
1 0.505666673183441
2 0.581666648387909
3 0.609666585922241
4 0.645666718482971
5 0.667999982833862
6 0.715333342552185
7 0.720666646957397
8 0.731000006198883
9 0.772333264350891
10 0.776666581630707
11 0.77099996805191
12 0.778999984264374
};
\addplot [semithick, purple, dashed]
table {%
1 0.499333322048187
2 0.643333315849304
3 0.720333278179169
4 0.755333304405212
5 0.805999994277954
6 0.827666640281677
7 0.847000002861023
8 0.83733332157135
9 0.869000017642975
10 0.864666640758514
11 0.875
12 0.867666602134705
};
\addplot [semithick, turquoise, dashed]
table {%
1 0.551999986171722
2 0.675000011920929
3 0.732333242893219
4 0.771999955177307
5 0.818333387374878
6 0.838333308696747
7 0.852999985218048
8 0.845999956130981
9 0.871999979019165
10 0.871333301067352
11 0.875
12 0.873666644096375
};
\addplot [semithick, tomato]
table {%
1 0.761333346366882
2 0.857333302497864
3 0.869333267211914
4 0.881666600704193
5 0.905333280563354
6 0.90200001001358
7 0.901666641235352
8 0.899666607379913
9 0.905333340167999
10 0.91733330488205
11 0.91133326292038
12 0.916666626930237
};
\addplot [semithick, orange]
table {%
1 0.761333346366882
2 0.852666676044464
3 0.861333250999451
4 0.8796666264534
5 0.897333264350891
6 0.899333298206329
7 0.898333311080933
8 0.895666658878326
9 0.90200001001358
10 0.912999987602234
11 0.905999958515167
12 0.913999915122986
};
\addplot [semithick, blue]
table {%
1 0.758666634559631
2 0.843999981880188
3 0.863666653633118
4 0.878666698932648
5 0.90366667509079
6 0.900333285331726
7 0.901000022888184
8 0.898666620254517
9 0.904666662216187
10 0.916999936103821
11 0.90966659784317
12 0.916000008583069
};
\addplot [semithick, purple]
table {%
1 0.760999977588654
2 0.849999964237213
3 0.869000017642975
4 0.877666652202606
5 0.898666620254517
6 0.89599996805191
7 0.893666625022888
8 0.891999959945679
9 0.897999942302704
10 0.916333258152008
11 0.904999971389771
12 0.902999937534332
};
\addplot [semithick, turquoise]
table {%
1 0.775666654109955
2 0.852333307266235
3 0.872999966144562
4 0.878333270549774
5 0.902333319187164
6 0.899999976158142
7 0.897000014781952
8 0.895333290100098
9 0.898666679859161
10 0.915333330631256
11 0.904999971389771
12 0.903999984264374
};
\end{axis}

\end{tikzpicture}}
    \caption{$\rho=10.0$}
\end{subfigure}
\begin{subfigure}{0.49\linewidth}
    \centering
    \resizebox{\linewidth}{!}{
\begingroup%
\makeatletter%
\begin{pgfpicture}%
\pgfpathrectangle{\pgfpointorigin}{\pgfqpoint{1.725315in}{1.625315in}}%
\pgfusepath{use as bounding box, clip}%
\begin{pgfscope}%
\pgfsetbuttcap%
\pgfsetmiterjoin%
\definecolor{currentfill}{rgb}{1.000000,1.000000,1.000000}%
\pgfsetfillcolor{currentfill}%
\pgfsetlinewidth{0.000000pt}%
\definecolor{currentstroke}{rgb}{1.000000,1.000000,1.000000}%
\pgfsetstrokecolor{currentstroke}%
\pgfsetdash{}{0pt}%
\pgfpathmoveto{\pgfqpoint{0.000000in}{0.000000in}}%
\pgfpathlineto{\pgfqpoint{1.725315in}{0.000000in}}%
\pgfpathlineto{\pgfqpoint{1.725315in}{1.625315in}}%
\pgfpathlineto{\pgfqpoint{0.000000in}{1.625315in}}%
\pgfpathclose%
\pgfusepath{fill}%
\end{pgfscope}%
\begin{pgfscope}%
\pgfsetrectcap%
\pgfsetroundjoin%
\pgfsetlinewidth{1.505625pt}%
\definecolor{currentstroke}{rgb}{1.000000,0.388235,0.278431}%
\pgfsetstrokecolor{currentstroke}%
\pgfsetdash{}{0pt}%
\pgfpathmoveto{\pgfqpoint{0.409108in}{1.205380in}}%
\pgfpathlineto{\pgfqpoint{0.686885in}{1.205380in}}%
\pgfusepath{stroke}%
\end{pgfscope}%
\begin{pgfscope}%
\definecolor{textcolor}{rgb}{0.000000,0.000000,0.000000}%
\pgfsetstrokecolor{textcolor}%
\pgfsetfillcolor{textcolor}%
\pgftext[x=0.797996in,y=1.156768in,left,base]{\color{textcolor}\fontsize{10.000000}{12.000000}\selectfont 1NN}%
\end{pgfscope}%
\begin{pgfscope}%
\pgfsetrectcap%
\pgfsetroundjoin%
\pgfsetlinewidth{1.505625pt}%
\definecolor{currentstroke}{rgb}{1.000000,0.647059,0.000000}%
\pgfsetstrokecolor{currentstroke}%
\pgfsetdash{}{0pt}%
\pgfpathmoveto{\pgfqpoint{0.409108in}{1.011707in}}%
\pgfpathlineto{\pgfqpoint{0.686885in}{1.011707in}}%
\pgfusepath{stroke}%
\end{pgfscope}%
\begin{pgfscope}%
\definecolor{textcolor}{rgb}{0.000000,0.000000,0.000000}%
\pgfsetstrokecolor{textcolor}%
\pgfsetfillcolor{textcolor}%
\pgftext[x=0.797996in,y=0.963096in,left,base]{\color{textcolor}\fontsize{10.000000}{12.000000}\selectfont MAD}%
\end{pgfscope}%
\begin{pgfscope}%
\pgfsetrectcap%
\pgfsetroundjoin%
\pgfsetlinewidth{1.505625pt}%
\definecolor{currentstroke}{rgb}{0.000000,0.000000,1.000000}%
\pgfsetstrokecolor{currentstroke}%
\pgfsetdash{}{0pt}%
\pgfpathmoveto{\pgfqpoint{0.409108in}{0.818034in}}%
\pgfpathlineto{\pgfqpoint{0.686885in}{0.818034in}}%
\pgfusepath{stroke}%
\end{pgfscope}%
\begin{pgfscope}%
\definecolor{textcolor}{rgb}{0.000000,0.000000,0.000000}%
\pgfsetstrokecolor{textcolor}%
\pgfsetfillcolor{textcolor}%
\pgftext[x=0.797996in,y=0.769423in,left,base]{\color{textcolor}\fontsize{10.000000}{12.000000}\selectfont MBL}%
\end{pgfscope}%
\begin{pgfscope}%
\pgfsetrectcap%
\pgfsetroundjoin%
\pgfsetlinewidth{1.505625pt}%
\definecolor{currentstroke}{rgb}{0.501961,0.000000,0.501961}%
\pgfsetstrokecolor{currentstroke}%
\pgfsetdash{}{0pt}%
\pgfpathmoveto{\pgfqpoint{0.409108in}{0.624361in}}%
\pgfpathlineto{\pgfqpoint{0.686885in}{0.624361in}}%
\pgfusepath{stroke}%
\end{pgfscope}%
\begin{pgfscope}%
\definecolor{textcolor}{rgb}{0.000000,0.000000,0.000000}%
\pgfsetstrokecolor{textcolor}%
\pgfsetfillcolor{textcolor}%
\pgftext[x=0.797996in,y=0.575750in,left,base]{\color{textcolor}\fontsize{10.000000}{12.000000}\selectfont MBL-QP}%
\end{pgfscope}%
\begin{pgfscope}%
\pgfsetrectcap%
\pgfsetroundjoin%
\pgfsetlinewidth{1.505625pt}%
\definecolor{currentstroke}{rgb}{0.250980,0.878431,0.815686}%
\pgfsetstrokecolor{currentstroke}%
\pgfsetdash{}{0pt}%
\pgfpathmoveto{\pgfqpoint{0.409108in}{0.430688in}}%
\pgfpathlineto{\pgfqpoint{0.686885in}{0.430688in}}%
\pgfusepath{stroke}%
\end{pgfscope}%
\begin{pgfscope}%
\definecolor{textcolor}{rgb}{0.000000,0.000000,0.000000}%
\pgfsetstrokecolor{textcolor}%
\pgfsetfillcolor{textcolor}%
\pgftext[x=0.797996in,y=0.382077in,left,base]{\color{textcolor}\fontsize{10.000000}{12.000000}\selectfont MC}%
\end{pgfscope}%
\end{pgfpicture}%
\makeatother%
\endgroup
\end{subfigure}
\caption{Accuracy vs. number of reference documents. Solid lines and dashed lines correspond to the reference documents being learned dictionary atoms or random documents from each class respectively.  Here, we group by regularization parameter and show the impact of different methods and learning versus random atoms. Increasing $\rho$ brings the non-barycentric based methods to performance parity with the barycentric based classification approaches.}
\label{fig:nlp_fixedlocality}
\end{figure}

\section{Conclusion}

We have extended the WDL framework by introducing a geometric sparse regularizer that interpolates between WDL and Wasserstein $K$-means according to a tuneable parameter $\rho$. We have shown the geometric sparse regularizer itself is useful in solving uniqueness and identifiability problems relating to the dictionary and weights, by leveraging characterizations of the nonlinear problem of exact barycentric reconstruction as well as geometric properties of Wasserstein geodesics. Additionally, we have shown the usefulness of our extension in providing improvements to classification methods on real data.  In particular our regularized framework improves over classical WDL in terms of atom interpretability and performance in classification.

\textbf{Future Work:}  Computational runtime remains a burden for both GeoSWDL and classical WDL.  While automatic differentiation provides for simple implementations, there may exist more specific algorithms to the dictionary learning framework; in particular the use of the geometric sparse regularizer may enable faster algorithms as is done in the linear case \citep{mallat1993matching}.  Relatedly, notions of linear optimal transport have important computational potential \citep{wang2013linear, moosmuller2020linear, hamm2022wassmap} in speeding up runtime of $W_{2}^{2}$ pairwise calculations for certain classes of measures. 

The analysis of the sparse coding step in Section \ref{sec:GWDL} does not immediately extend to case when the dictionary $\D$ is changing (which causes $\A$ to change).  Understanding how the matrix $\A$ changes with $\D$ is a topic of ongoing research, and may allow for a closed-form solution to (\ref{eqn:GWDL}).  Relatedly, Theorem \ref{thm:geodesic-extension} applies only to the case when the original data lie exactly on a Wasserstein barycenter between $m=2$ distributions; extending to $m\ge 3$ is a topic of ongoing research.

\textbf{\large Acknowledgements}: We thank Demba Ba, JinCheng Wang, and Matt Werenski for insightful discussions.  MM was partially supported was partially supported by NSF DMS 1924513 and CCF-1934553.  SA was partially supported by NSF CCF 1553075, NSF DRL 1931978, NSF EEC 1937057, and AFOSR FA9550-18-1-0465.  JMM was partially supported by NSF DMS 1912737, NSF DMS 1924513, and The Camille \& Henry Dreyfus Foundation.  AT was partially supported by DMS 2208392. The authors acknowledge the Tufts University High Performance Compute Cluster (\url{https://it.tufts.edu/high-performance-computing}) which was utilized for the research reported in this paper.

\bibliography{bib}

\newpage

\appendix

\section{Precise Statement and Proof of Proposition 1}

In order to establish this result, it is essential to note that if measures $\mu$ and $\nu$ satisfy the constraints that they have finite second moments and do not give mass to small sets (e.g., are absolutely continuous) \citep{villani2021topics}, their optimal plan $\pi^{*}$ concentrates on the graph of $T^{*}=\nabla\phi$ for a strictly convex $\phi$, so that \[W_{2}^{2}(\mu,\nu)=\displaystyle\int_{\mathbb{R}^{d}}\|T^{*}(\mathbf{x})-\mathbf{x}\|_{2}^{2}d\mu(\mathbf{x})\] where $T^{*}$ satisfies the pushforward constraint $T^{*}_{\#}\mu=\nu$ and is called the \emph{optimal transport map} \citep{smith1987note, brenier1991polar}.  

In order to precisely state Proposition 1, we require a few regularity assumptions \textbf{A1}-\textbf{A3} on the dictionary $\D=\{\D_{j}\}_{j=1}^{m}$ and measure $\mu$. These are required to invoke Theorem 1 in \cite{werenski2022measure}, which characterizes exactly when \[\mu=\bary(\D,\lam)\] for some $\lam\in\Delta^{m}$.

\textbf{A1:}\,\,The measures $\{\D_j\}_{j=1}^m$ and $\mu$ are absolutely continuous and supported on either all of $\mathbb{R}^d$ or a bounded open convex subset. Call this shared support set $\Omega$.

\textbf{A2:}\,\,The measures $\{\D_j\}_{j=1}^m$ 
 and $\mu$ have respective densities $\{g_j\}_{j=1}^m$ and $g$ which are bounded above and $g_1,...,g_m$ are strictly positive on $\Omega$.

\textbf{A3:}\,\,If $\Omega = \mathbb{R}^d$ then $\{g_j\}_{j=1}^m$ and $g$ are locally H\"older continuous. Otherwise $\{g_j\}_{j=1}^m$ and $g$ are bounded away from zero on $\Omega$.

\begin{proposition}Let $\mu$ be fixed and let $\{\D_j\}_{j=1}^m \subset  \Pb(\R^d)$ be a fixed dictionary.  Consider\begin{align}\label{eqn:SparseCodeAppdx}
    &\underset{\lam \in \Delta^m}{\arg\min}\,\, \sum_{j=1}^{m} \lambda_{j} W_2^2(\D_j, \mu) \,\,\emph{subject to}  \,\,\mu = \bary(\D, \lam).
\end{align}
If $\D$ and $\mu$ satisfy the assumptions \textbf{A1}-\textbf{A3}, the solution to (\ref{eqn:SparseCodeAppdx}) is given by \begin{equation}
    \underset{\lam \in \Delta^m}{\arg\min}\,\,  \lam^T\mathbf{c} \,\, \emph{subject to} \,\, \A\lam = \mathbf{0}, 
    \end{equation}
where $\mathbf{c}$ and $\A\in\mathbb{R}^{m\times m}$ are uniquely determined by $\mu,\{\D_j\}_{j=1}^m$.
    
\end{proposition}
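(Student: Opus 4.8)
The plan is to reduce the exact-reconstruction constraint $\mu=\bary(\D,\lam)$ to a finite-dimensional linear constraint on $\lam$ by invoking the characterization of the barycentric coding model from Theorem 1 of \citet{werenski2022measure}, and then to observe that the objective is already linear in $\lam$. First I would record the consequences of \textbf{A1}--\textbf{A3}: by Brenier's theorem, for each $j$ the optimal transport from $\mu$ to $\D_j$ is realized by a unique map $T_j=\nabla\phi_j$ (gradient of a convex potential) with $(T_j)_\#\mu=\D_j$ and
\[
W_2^2(\D_j,\mu)=\int_\Omega\|T_j(\x)-\x\|_2^2\,d\mu(\x).
\]
In particular the numbers $c_j:=W_2^2(\D_j,\mu)$ are fixed once $\mu$ and $\D$ are fixed, so the objective $\sum_j\lambda_j W_2^2(\D_j,\mu)=\lam^T\mathbf{c}$ is a linear functional of $\lam$ with $\mathbf{c}=(c_1,\dots,c_m)^T$ determined by $\mu,\{\D_j\}_{j=1}^m$.

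Next I would invoke Theorem 1 of \citet{werenski2022measure}: under exactly the hypotheses \textbf{A1}--\textbf{A3}, for $\lam\in\Delta^m$ one has $\mu=\bary(\D,\lam)$ if and only if $\sum_{j=1}^m\lambda_j(T_j-\Id)=0$ as an element of $L^2(\mu;\R^d)$. This is the key step and essentially the only place the regularity assumptions are used; it rests on the Agueh--Carlier fixed-point characterization of the barycenter together with uniqueness of Brenier maps, and I would cite rather than reprove it.

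Then I would linearize this $L^2$ identity. Define $\A\in\R^{m\times m}$ by $\A_{kj}:=\langle T_j-\Id,\,T_k-\Id\rangle_{L^2(\mu)}=\int_\Omega\langle T_j(\x)-\x,\,T_k(\x)-\x\rangle\,d\mu(\x)$, which is finite by Cauchy--Schwarz and the finite-second-moment assumption, is symmetric positive semidefinite (being a Gram matrix), and is determined by $\mu,\{\D_j\}_{j=1}^m$. Since $\A$ is the Gram matrix of $\{T_j-\Id\}_{j=1}^m$, for any $\lam\in\R^m$ we have $\lam^T\A\lam=\big\|\sum_{j=1}^m\lambda_j(T_j-\Id)\big\|_{L^2(\mu)}^2$, whence $\sum_j\lambda_j(T_j-\Id)=0$ in $L^2(\mu)$ iff $\lam^T\A\lam=0$ iff $\A\lam=\mathbf{0}$ (the last equivalence using positive semidefiniteness). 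Combining with the previous step, for $\lam\in\Delta^m$ the constraint $\mu=\bary(\D,\lam)$ is equivalent to $\A\lam=\mathbf{0}$, so (\ref{eqn:SparseCodeAppdx}) is exactly the linear program $\argmin_{\lam\in\Delta^m}\lam^T\mathbf{c}$ subject to $\A\lam=\mathbf{0}$.

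The main obstacle is the careful invocation of \citet{werenski2022measure}: one must check that \textbf{A1}--\textbf{A3} are precisely the assumptions under which their characterization holds, that it is stated with the transport maps $T_{\mu\to\D_j}$ from $\mu$ to each atom (not the reverse), and that the "if" direction — that $\sum_j\lambda_j T_j=\Id$ actually forces $\mu$ to be the barycenter — is in force (it is, since a $\lam$-weighted sum of gradients of convex potentials equal to $\Id=\nabla(\|\cdot\|_2^2/2)$ certifies the fixed-point optimality condition). A minor additional point is verifying integrability of the entries of $\A$ and that nothing is lost in passing from the pointwise/$L^2$ identity to $\A\lam=\mathbf{0}$, both routine given finite second moments.
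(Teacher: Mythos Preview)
Your proposal is correct and follows essentially the same approach as the paper: define $c_j = W_2^2(\D_j,\mu)$, define $\A$ as the Gram matrix of the displacements $T_j-\Id$ in $L^2(\mu)$, invoke Theorem~1 of \citet{werenski2022measure} (valid under \textbf{A1}--\textbf{A3}) to equate $\mu=\bary(\D,\lam)$ with $\lam^T\A\lam=0$, and use positive semidefiniteness of $\A$ to reduce this to $\A\lam=\mathbf{0}$. The only cosmetic difference is that you state the cited characterization as the $L^2$ identity $\sum_j\lambda_j(T_j-\Id)=0$ and then derive the quadratic form, whereas the paper quotes the quadratic form directly.
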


\begin{proof}  Let $\{T_{j}\}_{j=1}^{m}$ be the optimal transport maps between $\mu$ and $\D_{j}$.  Define $\A\in\mathbb{R}^{m\times m}$ by \[A_{j\ell}=\int_{\mathbb{R}^{d}}\langle T_{j}(\mathbf{x})-\mathbf{x},T_{\ell}(\mathbf{x})-\mathbf{x}\rangle d\mu(\mathbf{x}).\]  Then by Theorem 1 in \cite{werenski2022measure}, which holds because \textbf{A1}-\textbf{A3} hold, \[\mu=\bary(\D,\lam) \Longleftrightarrow \lam^T\A\lam=0.\]  Since $\A$ is symmetric and positive semidefinite (it is in fact a Gram matrix), $\lam^T\A\lam = 0$ is equivalent to $\A\lam = \mathbf{0}$.  Letting $\mathbf{c}$ be defined as $c_{j}=W_{2}^{2}(\D_{j},\mu)$ gives the result.
\end{proof}

\section{Proof of Theorem 1}

\begin{proof}  Rearranging, we aim to show 
\[0\le (t-s)W_{2}^{2}(\mu,\mu_{t})+sW_{2}^{2}(\mu_{t},\tilde{\nu})-tW_{2}^{2}(\mu_{t},\nu).
\]Noting that McCann interpolants are in fact constant-speed geodesics in Wasserstein space \citep{ambrosio2005gradient}, we have that 
\begin{align*}t=\frac{W_{2}(\mu,\mu_{t})}{W_{2}(\mu,\nu)}, \quad s= \frac{W_{2}(\mu,\mu_{t})}{W_{2}(\mu,\tilde{\nu})}
\end{align*}
and 
\begin{align*}W_{2}(\mu,\tilde{\nu})&=W_{2}(\mu,\nu)+W_2(\nu,\tilde{\nu}),\\
W_{2}(\mu_{t},\tilde{\nu})&=W_{2}(\mu_{t},\nu)+W_2(\nu,\tilde{\nu}).\nonumber
\end{align*}In particular, $s<t$ and so it suffices to show 
\begin{align*}
tW_{2}^{2}(\mu_{t},\nu)&\le sW_{2}^{2}(\mu_{t},\tilde{\nu})\\
\Longleftrightarrow \frac{W_{2}(\mu,\mu_{t})}{W_{2}(\mu,\nu)}W_{2}^{2}(\mu_{t},\nu)&\le \frac{W_{2}(\mu,\mu_{t})}{W_{2}(\mu,\tilde{\nu})}W_{2}^{2}(\mu_{t},\tilde{\nu})\\
\Longleftrightarrow \frac{W_{2}^{2}(\mu_{t},\nu)}{W_{2}(\mu,\nu)}&\le 
\frac{W_{2}^{2}(\mu_{t},\tilde{\nu})}{W_{2}(\mu,\tilde{\nu})}\\
\Longleftrightarrow \frac{W_{2}^{2}(\mu_{t},\nu)}{W_{2}(\mu,\nu)}&\le \frac{(W_{2}(\mu_{t},\nu)+W_{2}(\nu,\tilde{\nu}))^{2}}{W_{2}(\mu,\nu)+W_{2}(\nu,\tilde{\nu})}\\
\Longleftrightarrow W_{2}^{2}(\mu_{t},\nu)(W_{2}(\mu,\nu)+W_{2}(\nu,\tilde{\nu}))&\le W_{2}(\mu,\nu)(W_{2}(\mu_{t},\nu)+W_{2}(\nu,\tilde{\nu}))^{2}\\
\Longleftrightarrow W_{2}^{2}(\mu_{t},\nu)(W_{2}(\mu,\nu)+W_{2}(\nu,\tilde{\nu}))&\le W_{2}(\mu,\nu)(W_{2}^{2}(\mu_{t},\nu)+2W_{2}(\mu_{t},\nu)W_{2}(\nu,\tilde{\nu})+W_{2}^{2}(\nu,\tilde{\nu}))\\
\Longleftrightarrow W_{2}^{2}(\mu_{t},\nu)W_{2}(\nu,\tilde{\nu})&\le W_{2}(\mu,\nu)(2W_{2}(\mu_{t},\nu)W_{2}(\nu,\tilde{\nu})+W_{2}^{2}(\nu,\tilde{\nu})).
\end{align*}
If $\tilde{\nu}=\nu$, the result follows trivially.  So, assume $W_{2}(\nu,\tilde{\nu})>0$.  Then the above reduces to 
\[W_{2}^{2}(\mu_{t},\nu)\le W_{2}(\mu,\nu)(2W_{2}(\mu_{t},\nu)+W_{2}(\nu,\tilde{\nu})).\]

The result follows by noting that $W_{2}(\mu_{t},\nu)\le W_{2}(\mu,\nu)$ and that $W_{2}(\nu,\tilde{\nu})\ge 0$.
\end{proof}

\section{Experimental Details}
For each of the experiments we report specific parameters used to generate the results. We also report the timings based on our (not necessarily optimal) code. 
\subsection{MNIST}
Specific parameter choices:
\begin{itemize}
    \item Atom initialization: Wasserstein K-Means++.
    \item Weight initialization: Uniform samples from the simplex.
    \item Optimizer: Adam with default parameters except for learning rate as 0.25.
    \item We use $L=250$ iterations for reasonable convergence.
    \item We use $L_s=50$ Sinkhorn iterations for both Wasserstein distance and barycenter computations. 
    \item Entropic transport computations were accelerated with Convolutional Wasserstein \citep{solomon2015convolutional}.
\end{itemize}
This experiment took 8 hours total using all cpu cores of an Apple M1 chip (no gpu). 

\subsection{NLP}
We plot the 1 standard deviation bars for the NLP experiments in Figure \ref{fig:nlp-std}. Specific parameter choices:
\begin{itemize}
    \item Atom initialization: Wasserstein K-Means++.
    \item Weight initialization: Each weight is initialized as a vector with uniform random samples and then normalized to lie on the simplex. This differs from uniform samples from the simplex, but in practice there were no performance differences. 
    \item Optimizer: Adam with default parameters except for learning rate as 0.25.
    \item We use $L=300$ iterations for reasonable convergence.
    \item We use $L_s=25$ Sinkhorn iterations for both Wasserstein distance and barycenter computations. 
\end{itemize}
Running one of the thirty trials of the experiment for all number of the references took at most 2 days on an HPC node using 2 cpu cores and one Nvidia gpu of type T4, RTX 6000, V100, or P100 (depending on node availability). 

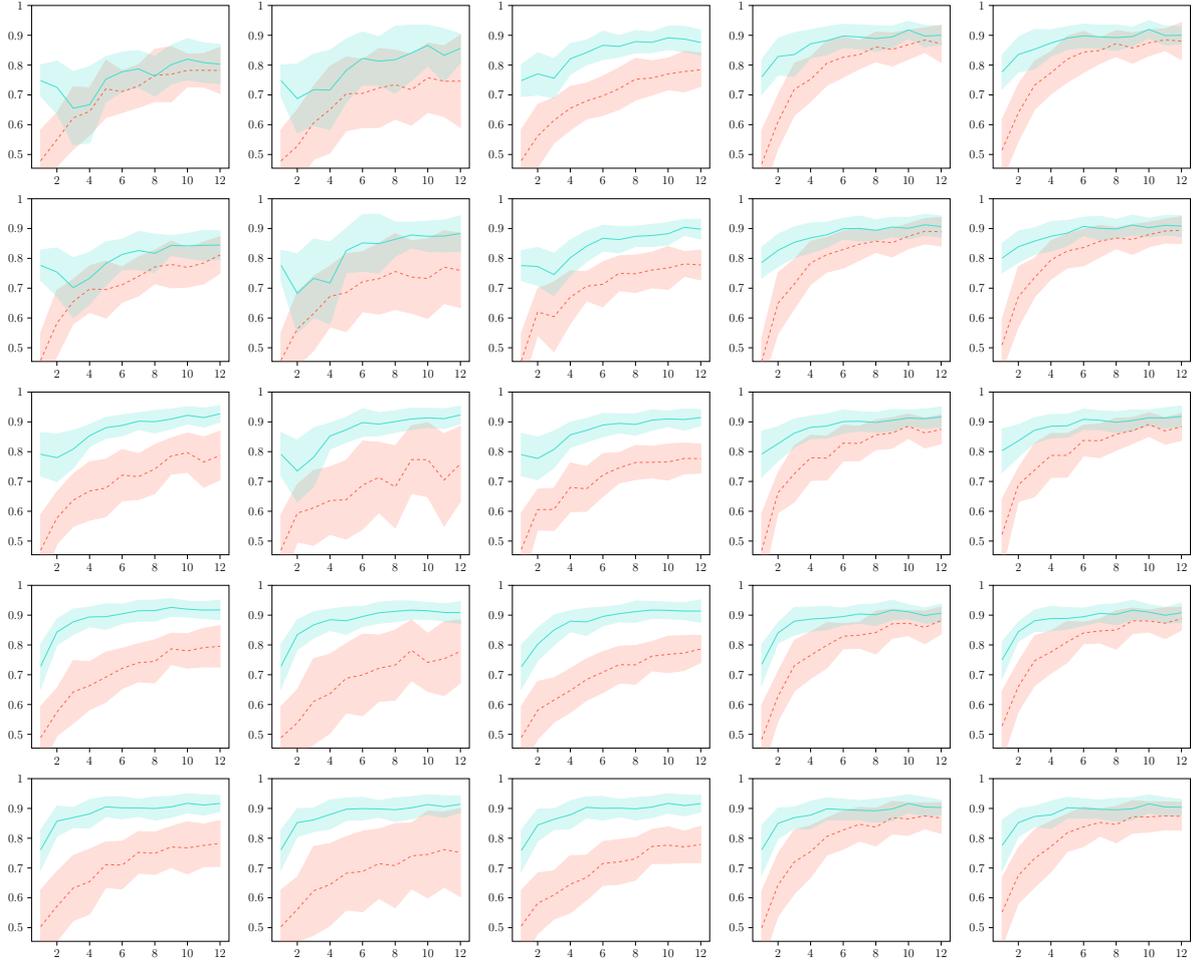
\begin{figure}
    \begin{subfigure}{0.19\linewidth}
    \centering
    \resizebox{\linewidth}{!}{
\begin{tikzpicture}

\definecolor{darkgray176}{RGB}{176,176,176}
\definecolor{tomato}{RGB}{255,99,71}
\definecolor{turquoise}{RGB}{64,224,208}

\begin{axis}[
tick align=outside,
tick pos=left,
x grid style={darkgray176},
xmin=0.45, xmax=12.55,
xtick style={color=black},
y grid style={darkgray176},
ymin=0.453999996185303, ymax=1,
ytick style={color=black}
]
\path [draw=tomato, fill=tomato, opacity=0.2]
(axis cs:1,0.58291220664978)
--(axis cs:1,0.374421149492264)
--(axis cs:2,0.459942579269409)
--(axis cs:3,0.517853677272797)
--(axis cs:4,0.564822614192963)
--(axis cs:5,0.62429141998291)
--(axis cs:6,0.640712440013885)
--(axis cs:7,0.657915949821472)
--(axis cs:8,0.677870512008667)
--(axis cs:9,0.676897585391998)
--(axis cs:10,0.727581024169922)
--(axis cs:11,0.726224184036255)
--(axis cs:12,0.704879343509674)
--(axis cs:12,0.859120547771454)
--(axis cs:12,0.859120547771454)
--(axis cs:11,0.839109063148499)
--(axis cs:10,0.837085604667664)
--(axis cs:9,0.861769020557404)
--(axis cs:8,0.853462815284729)
--(axis cs:7,0.801417350769043)
--(axis cs:6,0.781287610530853)
--(axis cs:5,0.816375017166138)
--(axis cs:4,0.723843991756439)
--(axis cs:3,0.727479517459869)
--(axis cs:2,0.640057444572449)
--(axis cs:1,0.58291220664978)
--cycle;

\path [draw=turquoise, fill=turquoise, opacity=0.2]
(axis cs:1,0.800351560115814)
--(axis cs:1,0.695648372173309)
--(axis cs:2,0.63706511259079)
--(axis cs:3,0.532669425010681)
--(axis cs:4,0.538745045661926)
--(axis cs:5,0.677102863788605)
--(axis cs:6,0.711091101169586)
--(axis cs:7,0.725179851055145)
--(axis cs:8,0.696245133876801)
--(axis cs:9,0.735746741294861)
--(axis cs:10,0.750917255878448)
--(axis cs:11,0.742152631282806)
--(axis cs:12,0.736867964267731)
--(axis cs:12,0.869132101535797)
--(axis cs:12,0.869132101535797)
--(axis cs:11,0.874514043331146)
--(axis cs:10,0.889082610607147)
--(axis cs:9,0.866919994354248)
--(axis cs:8,0.829754769802094)
--(axis cs:7,0.84882003068924)
--(axis cs:6,0.84424215555191)
--(axis cs:5,0.828230440616608)
--(axis cs:4,0.795254826545715)
--(axis cs:3,0.777997255325317)
--(axis cs:2,0.813601434230804)
--(axis cs:1,0.800351560115814)
--cycle;

\addplot [semithick, tomato, dashed]
table {%
1 0.478666663169861
2 0.550000011920929
3 0.622666597366333
4 0.644333302974701
5 0.720333218574524
6 0.711000025272369
7 0.729666650295258
8 0.765666663646698
9 0.769333302974701
10 0.782333314418793
11 0.782666623592377
12 0.781999945640564
};
\addplot [semithick, turquoise]
table {%
1 0.747999966144562
2 0.725333273410797
3 0.655333340167999
4 0.666999936103821
5 0.752666652202606
6 0.777666628360748
7 0.786999940872192
8 0.762999951839447
9 0.801333367824554
10 0.819999933242798
11 0.808333337306976
12 0.803000032901764
};
\end{axis}

\end{tikzpicture}}
\end{subfigure}
\begin{subfigure}{0.19\linewidth}
    \centering
    \resizebox{\linewidth}{!}{
\begin{tikzpicture}

\definecolor{darkgray176}{RGB}{176,176,176}
\definecolor{tomato}{RGB}{255,99,71}
\definecolor{turquoise}{RGB}{64,224,208}

\begin{axis}[
tick align=outside,
tick pos=left,
x grid style={darkgray176},
xmin=0.45, xmax=12.55,
xtick style={color=black},
y grid style={darkgray176},
ymin=0.453999996185303, ymax=1,
ytick style={color=black}
]
\path [draw=tomato, fill=tomato, opacity=0.2]
(axis cs:1,0.58291220664978)
--(axis cs:1,0.374421149492264)
--(axis cs:2,0.405683279037476)
--(axis cs:3,0.483143210411072)
--(axis cs:4,0.505350470542908)
--(axis cs:5,0.579890310764313)
--(axis cs:6,0.590386629104614)
--(axis cs:7,0.590711712837219)
--(axis cs:8,0.618554532527924)
--(axis cs:9,0.59826672077179)
--(axis cs:10,0.641279458999634)
--(axis cs:11,0.627573490142822)
--(axis cs:12,0.591223180294037)
--(axis cs:12,0.9007768034935)
--(axis cs:12,0.9007768034935)
--(axis cs:11,0.864426374435425)
--(axis cs:10,0.873387217521667)
--(axis cs:9,0.83573317527771)
--(axis cs:8,0.850112020969391)
--(axis cs:7,0.855955004692078)
--(axis cs:6,0.819613218307495)
--(axis cs:5,0.827442944049835)
--(axis cs:4,0.795982718467712)
--(axis cs:3,0.728856801986694)
--(axis cs:2,0.650316715240479)
--(axis cs:1,0.58291220664978)
--cycle;

\path [draw=turquoise, fill=turquoise, opacity=0.2]
(axis cs:1,0.800351560115814)
--(axis cs:1,0.695648372173309)
--(axis cs:2,0.572126507759094)
--(axis cs:3,0.597738265991211)
--(axis cs:4,0.582981765270233)
--(axis cs:5,0.676254570484161)
--(axis cs:6,0.733651638031006)
--(axis cs:7,0.731613278388977)
--(axis cs:8,0.705707252025604)
--(axis cs:9,0.747487783432007)
--(axis cs:10,0.797791361808777)
--(axis cs:11,0.743303656578064)
--(axis cs:12,0.807408511638641)
--(axis cs:12,0.903924882411957)
--(axis cs:12,0.903924882411957)
--(axis cs:11,0.922029733657837)
--(axis cs:10,0.934208631515503)
--(axis cs:9,0.933845520019531)
--(axis cs:8,0.930292665958405)
--(axis cs:7,0.895053267478943)
--(axis cs:6,0.911681771278381)
--(axis cs:5,0.887745201587677)
--(axis cs:4,0.84968489408493)
--(axis cs:3,0.836928486824036)
--(axis cs:2,0.803206920623779)
--(axis cs:1,0.800351560115814)
--cycle;

\addplot [semithick, tomato, dashed]
table {%
1 0.478666663169861
2 0.527999997138977
3 0.606000006198883
4 0.65066659450531
5 0.703666627407074
6 0.704999923706055
7 0.723333358764648
8 0.734333276748657
9 0.71699994802475
10 0.757333338260651
11 0.745999932289124
12 0.745999991893768
};
\addplot [semithick, turquoise]
table {%
1 0.747999966144562
2 0.687666714191437
3 0.717333376407623
4 0.716333329677582
5 0.781999886035919
6 0.822666704654694
7 0.81333327293396
8 0.817999958992004
9 0.840666651725769
10 0.86599999666214
11 0.83266669511795
12 0.855666697025299
};
\end{axis}

\end{tikzpicture}}
\end{subfigure}
\begin{subfigure}{0.19\linewidth}
    \centering
    \resizebox{\linewidth}{!}{
\begin{tikzpicture}

\definecolor{darkgray176}{RGB}{176,176,176}
\definecolor{tomato}{RGB}{255,99,71}
\definecolor{turquoise}{RGB}{64,224,208}

\begin{axis}[
tick align=outside,
tick pos=left,
x grid style={darkgray176},
xmin=0.45, xmax=12.55,
xtick style={color=black},
y grid style={darkgray176},
ymin=0.453999996185303, ymax=1,
ytick style={color=black}
]
\path [draw=tomato, fill=tomato, opacity=0.2]
(axis cs:1,0.583605766296387)
--(axis cs:1,0.375727564096451)
--(axis cs:2,0.45609176158905)
--(axis cs:3,0.53879326581955)
--(axis cs:4,0.581667959690094)
--(axis cs:5,0.619753837585449)
--(axis cs:6,0.631024539470673)
--(axis cs:7,0.650810539722443)
--(axis cs:8,0.682497799396515)
--(axis cs:9,0.69917094707489)
--(axis cs:10,0.717126905918121)
--(axis cs:11,0.710252106189728)
--(axis cs:12,0.728151142597198)
--(axis cs:12,0.840515434741974)
--(axis cs:12,0.840515434741974)
--(axis cs:11,0.846414625644684)
--(axis cs:10,0.823539674282074)
--(axis cs:9,0.814162254333496)
--(axis cs:8,0.822835624217987)
--(axis cs:7,0.79118937253952)
--(axis cs:6,0.7629753947258)
--(axis cs:5,0.739579439163208)
--(axis cs:4,0.728331983089447)
--(axis cs:3,0.690540134906769)
--(axis cs:2,0.667908191680908)
--(axis cs:1,0.583605766296387)
--cycle;

\path [draw=turquoise, fill=turquoise, opacity=0.2]
(axis cs:1,0.801459610462189)
--(axis cs:1,0.694540321826935)
--(axis cs:2,0.699473202228546)
--(axis cs:3,0.690157949924469)
--(axis cs:4,0.769448220729828)
--(axis cs:5,0.800836801528931)
--(axis cs:6,0.818737089633942)
--(axis cs:7,0.826474368572235)
--(axis cs:8,0.841459572315216)
--(axis cs:9,0.841544926166534)
--(axis cs:10,0.852228403091431)
--(axis cs:11,0.84450775384903)
--(axis cs:12,0.834261894226074)
--(axis cs:12,0.917738080024719)
--(axis cs:12,0.917738080024719)
--(axis cs:11,0.930158793926239)
--(axis cs:10,0.930438280105591)
--(axis cs:9,0.911121666431427)
--(axis cs:8,0.915873825550079)
--(axis cs:7,0.898859083652496)
--(axis cs:6,0.913929522037506)
--(axis cs:5,0.883829712867737)
--(axis cs:4,0.872551739215851)
--(axis cs:3,0.821842014789581)
--(axis cs:2,0.841860115528107)
--(axis cs:1,0.801459610462189)
--cycle;

\addplot [semithick, tomato, dashed]
table {%
1 0.479666650295258
2 0.561999976634979
3 0.614666700363159
4 0.654999971389771
5 0.679666638374329
6 0.696999967098236
7 0.720999956130981
8 0.752666711807251
9 0.756666600704193
10 0.770333290100098
11 0.778333365917206
12 0.784333288669586
};
\addplot [semithick, turquoise]
table {%
1 0.747999966144562
2 0.770666658878326
3 0.755999982357025
4 0.820999979972839
5 0.842333257198334
6 0.866333305835724
7 0.862666726112366
8 0.878666698932648
9 0.876333296298981
10 0.891333341598511
11 0.887333273887634
12 0.875999987125397
};
\end{axis}

\end{tikzpicture}}
\end{subfigure}
\begin{subfigure}{0.19\linewidth}
    \centering
    \resizebox{\linewidth}{!}{
\begin{tikzpicture}

\definecolor{darkgray176}{RGB}{176,176,176}
\definecolor{tomato}{RGB}{255,99,71}
\definecolor{turquoise}{RGB}{64,224,208}

\begin{axis}[
tick align=outside,
tick pos=left,
x grid style={darkgray176},
xmin=0.45, xmax=12.55,
xtick style={color=black},
y grid style={darkgray176},
ymin=0.453999996185303, ymax=1,
ytick style={color=black}
]
\path [draw=tomato, fill=tomato, opacity=0.2]
(axis cs:1,0.580415546894073)
--(axis cs:1,0.356917798519135)
--(axis cs:2,0.520641028881073)
--(axis cs:3,0.631459772586823)
--(axis cs:4,0.677330672740936)
--(axis cs:5,0.73184734582901)
--(axis cs:6,0.759046733379364)
--(axis cs:7,0.787936806678772)
--(axis cs:8,0.811010360717773)
--(axis cs:9,0.796892464160919)
--(axis cs:10,0.819484770298004)
--(axis cs:11,0.842760503292084)
--(axis cs:12,0.809294879436493)
--(axis cs:12,0.93403834104538)
--(axis cs:12,0.93403834104538)
--(axis cs:11,0.925239384174347)
--(axis cs:10,0.917848408222198)
--(axis cs:9,0.907774031162262)
--(axis cs:8,0.91098964214325)
--(axis cs:7,0.882730007171631)
--(axis cs:6,0.89361971616745)
--(axis cs:5,0.880819261074066)
--(axis cs:4,0.827336013317108)
--(axis cs:3,0.805873692035675)
--(axis cs:2,0.700025618076324)
--(axis cs:1,0.580415546894073)
--cycle;

\path [draw=turquoise, fill=turquoise, opacity=0.2]
(axis cs:1,0.818545579910278)
--(axis cs:1,0.701454401016235)
--(axis cs:2,0.766925692558289)
--(axis cs:3,0.761907994747162)
--(axis cs:4,0.824082434177399)
--(axis cs:5,0.834993004798889)
--(axis cs:6,0.858654379844666)
--(axis cs:7,0.853676319122314)
--(axis cs:8,0.84598034620285)
--(axis cs:9,0.85816478729248)
--(axis cs:10,0.889424622058868)
--(axis cs:11,0.861332058906555)
--(axis cs:12,0.869149923324585)
--(axis cs:12,0.93084990978241)
--(axis cs:12,0.93084990978241)
--(axis cs:11,0.933334589004517)
--(axis cs:10,0.946575343608856)
--(axis cs:9,0.930501818656921)
--(axis cs:8,0.932019650936127)
--(axis cs:7,0.934990286827087)
--(axis cs:6,0.936678886413574)
--(axis cs:5,0.929006934165955)
--(axis cs:4,0.918584167957306)
--(axis cs:3,0.908758461475372)
--(axis cs:2,0.890407681465149)
--(axis cs:1,0.818545579910278)
--cycle;

\addplot [semithick, tomato, dashed]
table {%
1 0.468666672706604
2 0.610333323478699
3 0.718666732311249
4 0.752333343029022
5 0.806333303451538
6 0.826333224773407
7 0.835333406925201
8 0.861000001430511
9 0.852333247661591
10 0.868666589260101
11 0.883999943733215
12 0.871666610240936
};
\addplot [semithick, turquoise]
table {%
1 0.759999990463257
2 0.828666687011719
3 0.835333228111267
4 0.871333301067352
5 0.881999969482422
6 0.89766663312912
7 0.894333302974701
8 0.888999998569489
9 0.894333302974701
10 0.917999982833862
11 0.897333323955536
12 0.899999916553497
};
\end{axis}

\end{tikzpicture}}
\end{subfigure}
\begin{subfigure}{0.19\linewidth}
    \centering
    \resizebox{\linewidth}{!}{
\begin{tikzpicture}

\definecolor{darkgray176}{RGB}{176,176,176}
\definecolor{tomato}{RGB}{255,99,71}
\definecolor{turquoise}{RGB}{64,224,208}

\begin{axis}[
tick align=outside,
tick pos=left,
x grid style={darkgray176},
xmin=0.45, xmax=12.55,
xtick style={color=black},
y grid style={darkgray176},
ymin=0.453999996185303, ymax=1,
ytick style={color=black}
]
\path [draw=tomato, fill=tomato, opacity=0.2]
(axis cs:1,0.616502702236176)
--(axis cs:1,0.412830650806427)
--(axis cs:2,0.544624209403992)
--(axis cs:3,0.651429116725922)
--(axis cs:4,0.702774286270142)
--(axis cs:5,0.742274582386017)
--(axis cs:6,0.775187313556671)
--(axis cs:7,0.805536210536957)
--(axis cs:8,0.828650772571564)
--(axis cs:9,0.806280016899109)
--(axis cs:10,0.831059277057648)
--(axis cs:11,0.848634958267212)
--(axis cs:12,0.81838321685791)
--(axis cs:12,0.942283391952515)
--(axis cs:12,0.942283391952515)
--(axis cs:11,0.921365022659302)
--(axis cs:10,0.918274223804474)
--(axis cs:9,0.908386588096619)
--(axis cs:8,0.916015803813934)
--(axis cs:7,0.891797006130219)
--(axis cs:6,0.912812530994415)
--(axis cs:5,0.8970587849617)
--(axis cs:4,0.845225691795349)
--(axis cs:3,0.813904225826263)
--(axis cs:2,0.735375881195068)
--(axis cs:1,0.616502702236176)
--cycle;

\path [draw=turquoise, fill=turquoise, opacity=0.2]
(axis cs:1,0.833194553852081)
--(axis cs:1,0.720138728618622)
--(axis cs:2,0.776787281036377)
--(axis cs:3,0.787087619304657)
--(axis cs:4,0.829464077949524)
--(axis cs:5,0.851962685585022)
--(axis cs:6,0.859021723270416)
--(axis cs:7,0.855210602283478)
--(axis cs:8,0.849851429462433)
--(axis cs:9,0.863644778728485)
--(axis cs:10,0.89036637544632)
--(axis cs:11,0.866906344890594)
--(axis cs:12,0.871472895145416)
--(axis cs:12,0.928527057170868)
--(axis cs:12,0.928527057170868)
--(axis cs:11,0.931093633174896)
--(axis cs:10,0.948966920375824)
--(axis cs:9,0.927688539028168)
--(axis cs:8,0.934815227985382)
--(axis cs:7,0.934122741222382)
--(axis cs:6,0.937644898891449)
--(axis cs:5,0.928703784942627)
--(axis cs:4,0.918535947799683)
--(axis cs:3,0.919579207897186)
--(axis cs:2,0.894546031951904)
--(axis cs:1,0.833194553852081)
--cycle;

\addplot [semithick, tomato, dashed]
table {%
1 0.514666676521301
2 0.64000004529953
3 0.732666671276093
4 0.773999989032745
5 0.819666683673859
6 0.843999922275543
7 0.848666608333588
8 0.872333288192749
9 0.857333302497864
10 0.874666750431061
11 0.884999990463257
12 0.880333304405212
};
\addplot [semithick, turquoise]
table {%
1 0.776666641235352
2 0.835666656494141
3 0.853333413600922
4 0.874000012874603
5 0.890333235263824
6 0.898333311080933
7 0.89466667175293
8 0.892333328723907
9 0.895666658878326
10 0.919666647911072
11 0.898999989032745
12 0.899999976158142
};
\end{axis}

\end{tikzpicture}}
\end{subfigure}

\begin{subfigure}{0.19\linewidth}
    \centering
    \resizebox{\linewidth}{!}{
\begin{tikzpicture}

\definecolor{darkgray176}{RGB}{176,176,176}
\definecolor{tomato}{RGB}{255,99,71}
\definecolor{turquoise}{RGB}{64,224,208}

\begin{axis}[
tick align=outside,
tick pos=left,
x grid style={darkgray176},
xmin=0.45, xmax=12.55,
xtick style={color=black},
y grid style={darkgray176},
ymin=0.453999996185303, ymax=1,
ytick style={color=black}
]
\path [draw=tomato, fill=tomato, opacity=0.2]
(axis cs:1,0.549573183059692)
--(axis cs:1,0.367093503475189)
--(axis cs:2,0.472121477127075)
--(axis cs:3,0.580733418464661)
--(axis cs:4,0.618957102298737)
--(axis cs:5,0.599894821643829)
--(axis cs:6,0.652413606643677)
--(axis cs:7,0.673988044261932)
--(axis cs:8,0.715086758136749)
--(axis cs:9,0.701212465763092)
--(axis cs:10,0.70372611284256)
--(axis cs:11,0.714458405971527)
--(axis cs:12,0.749872207641602)
--(axis cs:12,0.872794508934021)
--(axis cs:12,0.872794508934021)
--(axis cs:11,0.854874908924103)
--(axis cs:10,0.8356072306633)
--(axis cs:9,0.858120858669281)
--(axis cs:8,0.827579915523529)
--(axis cs:7,0.805345237255096)
--(axis cs:6,0.770919680595398)
--(axis cs:5,0.790771782398224)
--(axis cs:4,0.775042831897736)
--(axis cs:3,0.72926652431488)
--(axis cs:2,0.693211793899536)
--(axis cs:1,0.549573183059692)
--cycle;

\path [draw=turquoise, fill=turquoise, opacity=0.2]
(axis cs:1,0.827340304851532)
--(axis cs:1,0.725993096828461)
--(axis cs:2,0.672649323940277)
--(axis cs:3,0.601353943347931)
--(axis cs:4,0.645697474479675)
--(axis cs:5,0.709420084953308)
--(axis cs:6,0.762795984745026)
--(axis cs:7,0.762249052524567)
--(axis cs:8,0.752879917621613)
--(axis cs:9,0.810080587863922)
--(axis cs:10,0.800444066524506)
--(axis cs:11,0.793050944805145)
--(axis cs:12,0.796548008918762)
--(axis cs:12,0.892785310745239)
--(axis cs:12,0.892785310745239)
--(axis cs:11,0.894282400608063)
--(axis cs:10,0.883555948734283)
--(axis cs:9,0.876586139202118)
--(axis cs:8,0.881120026111603)
--(axis cs:7,0.890417397022247)
--(axis cs:6,0.863870680332184)
--(axis cs:5,0.855913162231445)
--(axis cs:4,0.820302486419678)
--(axis cs:3,0.80197936296463)
--(axis cs:2,0.835350692272186)
--(axis cs:1,0.827340304851532)
--cycle;

\addplot [semithick, tomato, dashed]
table {%
1 0.458333343267441
2 0.582666635513306
3 0.654999971389771
4 0.696999967098236
5 0.695333302021027
6 0.711666643619537
7 0.739666640758514
8 0.771333336830139
9 0.779666662216187
10 0.76966667175293
11 0.784666657447815
12 0.811333358287811
};
\addplot [semithick, turquoise]
table {%
1 0.776666700839996
2 0.754000008106232
3 0.701666653156281
4 0.732999980449677
5 0.782666623592377
6 0.813333332538605
7 0.826333224773407
8 0.816999971866608
9 0.84333336353302
10 0.842000007629395
11 0.843666672706604
12 0.844666659832001
};
\end{axis}

\end{tikzpicture}}
\end{subfigure}
\begin{subfigure}{0.19\linewidth}
    \centering
    \resizebox{\linewidth}{!}{
\begin{tikzpicture}

\definecolor{darkgray176}{RGB}{176,176,176}
\definecolor{tomato}{RGB}{255,99,71}
\definecolor{turquoise}{RGB}{64,224,208}

\begin{axis}[
tick align=outside,
tick pos=left,
x grid style={darkgray176},
xmin=0.45, xmax=12.55,
xtick style={color=black},
y grid style={darkgray176},
ymin=0.453999996185303, ymax=1,
ytick style={color=black}
]
\path [draw=tomato, fill=tomato, opacity=0.2]
(axis cs:1,0.549573183059692)
--(axis cs:1,0.367093503475189)
--(axis cs:2,0.439093708992004)
--(axis cs:3,0.490245014429092)
--(axis cs:4,0.57039749622345)
--(axis cs:5,0.553884088993073)
--(axis cs:6,0.62107926607132)
--(axis cs:7,0.612703263759613)
--(axis cs:8,0.628271758556366)
--(axis cs:9,0.61604905128479)
--(axis cs:10,0.59846043586731)
--(axis cs:11,0.648560643196106)
--(axis cs:12,0.634753346443176)
--(axis cs:12,0.884580016136169)
--(axis cs:12,0.884580016136169)
--(axis cs:11,0.892772674560547)
--(axis cs:10,0.866206169128418)
--(axis cs:9,0.858617663383484)
--(axis cs:8,0.883728086948395)
--(axis cs:7,0.850630104541779)
--(axis cs:6,0.822254002094269)
--(axis cs:5,0.816115915775299)
--(axis cs:4,0.774269223213196)
--(axis cs:3,0.739754855632782)
--(axis cs:2,0.686906218528748)
--(axis cs:1,0.549573183059692)
--cycle;

\path [draw=turquoise, fill=turquoise, opacity=0.2]
(axis cs:1,0.827340304851532)
--(axis cs:1,0.725993096828461)
--(axis cs:2,0.550237536430359)
--(axis cs:3,0.600669503211975)
--(axis cs:4,0.579795122146606)
--(axis cs:5,0.751569986343384)
--(axis cs:6,0.756682097911835)
--(axis cs:7,0.75023627281189)
--(axis cs:8,0.806488394737244)
--(axis cs:9,0.835254967212677)
--(axis cs:10,0.822834253311157)
--(axis cs:11,0.821646928787231)
--(axis cs:12,0.824007630348206)
--(axis cs:12,0.942659020423889)
--(axis cs:12,0.942659020423889)
--(axis cs:11,0.928353071212769)
--(axis cs:10,0.92516565322876)
--(axis cs:9,0.921411693096161)
--(axis cs:8,0.921511650085449)
--(axis cs:7,0.948430418968201)
--(axis cs:6,0.947318017482758)
--(axis cs:5,0.90109646320343)
--(axis cs:4,0.85487163066864)
--(axis cs:3,0.865330576896667)
--(axis cs:2,0.815095782279968)
--(axis cs:1,0.827340304851532)
--cycle;

\addplot [semithick, tomato, dashed]
table {%
1 0.458333343267441
2 0.562999963760376
3 0.614999949932098
4 0.672333359718323
5 0.685000002384186
6 0.721666634082794
7 0.731666684150696
8 0.75599992275238
9 0.737333357334137
10 0.732333302497864
11 0.770666658878326
12 0.759666681289673
};
\addplot [semithick, turquoise]
table {%
1 0.776666700839996
2 0.682666659355164
3 0.733000040054321
4 0.717333376407623
5 0.826333224773407
6 0.852000057697296
7 0.849333345890045
8 0.864000022411346
9 0.878333330154419
10 0.873999953269958
11 0.875
12 0.883333325386047
};
\end{axis}

\end{tikzpicture}}
\end{subfigure}
\begin{subfigure}{0.19\linewidth}
    \centering
    \resizebox{\linewidth}{!}{
\begin{tikzpicture}

\definecolor{darkgray176}{RGB}{176,176,176}
\definecolor{tomato}{RGB}{255,99,71}
\definecolor{turquoise}{RGB}{64,224,208}

\begin{axis}[
tick align=outside,
tick pos=left,
x grid style={darkgray176},
xmin=0.45, xmax=12.55,
xtick style={color=black},
y grid style={darkgray176},
ymin=0.453999996185303, ymax=1,
ytick style={color=black}
]
\path [draw=tomato, fill=tomato, opacity=0.2]
(axis cs:1,0.548968017101288)
--(axis cs:1,0.36436527967453)
--(axis cs:2,0.542585611343384)
--(axis cs:3,0.487398833036423)
--(axis cs:4,0.581330180168152)
--(axis cs:5,0.657583117485046)
--(axis cs:6,0.63760906457901)
--(axis cs:7,0.691572308540344)
--(axis cs:8,0.685466051101685)
--(axis cs:9,0.701325535774231)
--(axis cs:10,0.695379674434662)
--(axis cs:11,0.739666819572449)
--(axis cs:12,0.728006899356842)
--(axis cs:12,0.827993094921112)
--(axis cs:12,0.827993094921112)
--(axis cs:11,0.822999715805054)
--(axis cs:10,0.839953601360321)
--(axis cs:9,0.820674300193787)
--(axis cs:8,0.811200618743896)
--(axis cs:7,0.808427691459656)
--(axis cs:6,0.788390934467316)
--(axis cs:5,0.755750179290771)
--(axis cs:4,0.758003115653992)
--(axis cs:3,0.721267819404602)
--(axis cs:2,0.698081135749817)
--(axis cs:1,0.548968017101288)
--cycle;

\path [draw=turquoise, fill=turquoise, opacity=0.2]
(axis cs:1,0.826213359832764)
--(axis cs:1,0.725786685943604)
--(axis cs:2,0.708504557609558)
--(axis cs:3,0.674291789531708)
--(axis cs:4,0.742612302303314)
--(axis cs:5,0.789649307727814)
--(axis cs:6,0.823868453502655)
--(axis cs:7,0.818576455116272)
--(axis cs:8,0.840205371379852)
--(axis cs:9,0.839496672153473)
--(axis cs:10,0.845072388648987)
--(axis cs:11,0.877592742443085)
--(axis cs:12,0.864948272705078)
--(axis cs:12,0.93105161190033)
--(axis cs:12,0.93105161190033)
--(axis cs:11,0.931073844432831)
--(axis cs:10,0.920261025428772)
--(axis cs:9,0.913836658000946)
--(axis cs:8,0.907794415950775)
--(axis cs:7,0.907423496246338)
--(axis cs:6,0.910798251628876)
--(axis cs:5,0.892350733280182)
--(axis cs:4,0.861387670040131)
--(axis cs:3,0.817041456699371)
--(axis cs:2,0.836828827857971)
--(axis cs:1,0.826213359832764)
--cycle;

\addplot [semithick, tomato, dashed]
table {%
1 0.456666648387909
2 0.6203333735466
3 0.604333341121674
4 0.669666647911072
5 0.706666648387909
6 0.712999999523163
7 0.75
8 0.748333334922791
9 0.760999917984009
10 0.767666637897491
11 0.781333267688751
12 0.777999997138977
};
\addplot [semithick, turquoise]
table {%
1 0.776000022888184
2 0.772666692733765
3 0.74566662311554
4 0.801999986171722
5 0.841000020503998
6 0.867333352565765
7 0.862999975681305
8 0.873999893665314
9 0.876666665077209
10 0.882666707038879
11 0.904333293437958
12 0.897999942302704
};
\end{axis}

\end{tikzpicture}}
\end{subfigure}
\begin{subfigure}{0.19\linewidth}
    \centering
    \resizebox{\linewidth}{!}{
\begin{tikzpicture}

\definecolor{darkgray176}{RGB}{176,176,176}
\definecolor{tomato}{RGB}{255,99,71}
\definecolor{turquoise}{RGB}{64,224,208}

\begin{axis}[
tick align=outside,
tick pos=left,
x grid style={darkgray176},
xmin=0.45, xmax=12.55,
xtick style={color=black},
y grid style={darkgray176},
ymin=0.453999996185303, ymax=1,
ytick style={color=black}
]
\path [draw=tomato, fill=tomato, opacity=0.2]
(axis cs:1,0.532459318637848)
--(axis cs:1,0.37554070353508)
--(axis cs:2,0.547737538814545)
--(axis cs:3,0.631137430667877)
--(axis cs:4,0.712519288063049)
--(axis cs:5,0.753865301609039)
--(axis cs:6,0.767328143119812)
--(axis cs:7,0.793547809123993)
--(axis cs:8,0.820107698440552)
--(axis cs:9,0.804605424404144)
--(axis cs:10,0.832753896713257)
--(axis cs:11,0.848545134067535)
--(axis cs:12,0.841415822505951)
--(axis cs:12,0.937250912189484)
--(axis cs:12,0.937250912189484)
--(axis cs:11,0.933454811573029)
--(axis cs:10,0.915912747383118)
--(axis cs:9,0.900727927684784)
--(axis cs:8,0.894558906555176)
--(axis cs:7,0.901785552501678)
--(axis cs:6,0.892005205154419)
--(axis cs:5,0.870801270008087)
--(axis cs:4,0.857480645179749)
--(axis cs:3,0.796862542629242)
--(axis cs:2,0.751595675945282)
--(axis cs:1,0.532459318637848)
--cycle;

\path [draw=turquoise, fill=turquoise, opacity=0.2]
(axis cs:1,0.83703076839447)
--(axis cs:1,0.734969258308411)
--(axis cs:2,0.772915184497833)
--(axis cs:3,0.802098214626312)
--(axis cs:4,0.818375468254089)
--(axis cs:5,0.837238729000092)
--(axis cs:6,0.864977657794952)
--(axis cs:7,0.858439922332764)
--(axis cs:8,0.858513236045837)
--(axis cs:9,0.872456014156342)
--(axis cs:10,0.864767968654633)
--(axis cs:11,0.877734899520874)
--(axis cs:12,0.871011853218079)
--(axis cs:12,0.942321419715881)
--(axis cs:12,0.942321419715881)
--(axis cs:11,0.947598457336426)
--(axis cs:10,0.937231957912445)
--(axis cs:9,0.938210666179657)
--(axis cs:8,0.929486751556396)
--(axis cs:7,0.942226767539978)
--(axis cs:6,0.934355795383453)
--(axis cs:5,0.92142790555954)
--(axis cs:4,0.918957710266113)
--(axis cs:3,0.905901730060577)
--(axis cs:2,0.881084859371185)
--(axis cs:1,0.83703076839447)
--cycle;

\addplot [semithick, tomato, dashed]
table {%
1 0.453999996185303
2 0.649666607379913
3 0.71399998664856
4 0.784999966621399
5 0.812333285808563
6 0.829666674137115
7 0.847666680812836
8 0.857333302497864
9 0.852666676044464
10 0.874333322048187
11 0.890999972820282
12 0.889333367347717
};
\addplot [semithick, turquoise]
table {%
1 0.78600001335144
2 0.827000021934509
3 0.853999972343445
4 0.868666589260101
5 0.879333317279816
6 0.899666726589203
7 0.900333344936371
8 0.893999993801117
9 0.905333340167999
10 0.900999963283539
11 0.91266667842865
12 0.90666663646698
};
\end{axis}

\end{tikzpicture}}
\end{subfigure}
\begin{subfigure}{0.19\linewidth}
    \centering
    \resizebox{\linewidth}{!}{
\begin{tikzpicture}

\definecolor{darkgray176}{RGB}{176,176,176}
\definecolor{tomato}{RGB}{255,99,71}
\definecolor{turquoise}{RGB}{64,224,208}

\begin{axis}[
tick align=outside,
tick pos=left,
x grid style={darkgray176},
xmin=0.45, xmax=12.55,
xtick style={color=black},
y grid style={darkgray176},
ymin=0.453999996185303, ymax=1,
ytick style={color=black}
]
\path [draw=tomato, fill=tomato, opacity=0.2]
(axis cs:1,0.594507932662964)
--(axis cs:1,0.424158722162247)
--(axis cs:2,0.569856464862823)
--(axis cs:3,0.670822858810425)
--(axis cs:4,0.729979991912842)
--(axis cs:5,0.767749726772308)
--(axis cs:6,0.775054812431335)
--(axis cs:7,0.803995668888092)
--(axis cs:8,0.832435607910156)
--(axis cs:9,0.813461244106293)
--(axis cs:10,0.838353395462036)
--(axis cs:11,0.851573348045349)
--(axis cs:12,0.84894734621048)
--(axis cs:12,0.939719259738922)
--(axis cs:12,0.939719259738922)
--(axis cs:11,0.93175995349884)
--(axis cs:10,0.920979857444763)
--(axis cs:9,0.913871943950653)
--(axis cs:8,0.903564453125)
--(axis cs:7,0.911337673664093)
--(axis cs:6,0.897611856460571)
--(axis cs:5,0.881583631038666)
--(axis cs:4,0.859353303909302)
--(axis cs:3,0.800510406494141)
--(axis cs:2,0.772143423557281)
--(axis cs:1,0.594507932662964)
--cycle;

\path [draw=turquoise, fill=turquoise, opacity=0.2]
(axis cs:1,0.849661350250244)
--(axis cs:1,0.752338528633118)
--(axis cs:2,0.785867094993591)
--(axis cs:3,0.813634753227234)
--(axis cs:4,0.824339091777802)
--(axis cs:5,0.842099547386169)
--(axis cs:6,0.876379132270813)
--(axis cs:7,0.861315846443176)
--(axis cs:8,0.866424679756165)
--(axis cs:9,0.878027200698853)
--(axis cs:10,0.871579527854919)
--(axis cs:11,0.877849280834198)
--(axis cs:12,0.873485565185547)
--(axis cs:12,0.943181157112122)
--(axis cs:12,0.943181157112122)
--(axis cs:11,0.944150626659393)
--(axis cs:10,0.934420347213745)
--(axis cs:9,0.945306062698364)
--(axis cs:8,0.930908441543579)
--(axis cs:7,0.940684080123901)
--(axis cs:6,0.938287496566772)
--(axis cs:5,0.927233695983887)
--(axis cs:4,0.923660933971405)
--(axis cs:3,0.902365207672119)
--(axis cs:2,0.892132878303528)
--(axis cs:1,0.849661350250244)
--cycle;

\addplot [semithick, tomato, dashed]
table {%
1 0.509333312511444
2 0.670999944210052
3 0.735666632652283
4 0.794666647911072
5 0.824666678905487
6 0.836333334445953
7 0.857666671276093
8 0.868000030517578
9 0.863666594028473
10 0.8796666264534
11 0.891666650772095
12 0.894333302974701
};
\addplot [semithick, turquoise]
table {%
1 0.800999939441681
2 0.83899998664856
3 0.857999980449677
4 0.874000012874603
5 0.884666621685028
6 0.907333314418793
7 0.900999963283539
8 0.898666560649872
9 0.911666631698608
10 0.902999937534332
11 0.910999953746796
12 0.908333361148834
};
\end{axis}

\end{tikzpicture}}
\end{subfigure}

\begin{subfigure}{0.19\linewidth}
    \centering
    \resizebox{\linewidth}{!}{
\begin{tikzpicture}

\definecolor{darkgray176}{RGB}{176,176,176}
\definecolor{tomato}{RGB}{255,99,71}
\definecolor{turquoise}{RGB}{64,224,208}

\begin{axis}[
tick align=outside,
tick pos=left,
x grid style={darkgray176},
xmin=0.45, xmax=12.55,
xtick style={color=black},
y grid style={darkgray176},
ymin=0.453999996185303, ymax=1,
ytick style={color=black}
]
\path [draw=tomato, fill=tomato, opacity=0.2]
(axis cs:1,0.588031470775604)
--(axis cs:1,0.351301848888397)
--(axis cs:2,0.490501195192337)
--(axis cs:3,0.549370408058167)
--(axis cs:4,0.569936037063599)
--(axis cs:5,0.581674456596375)
--(axis cs:6,0.635360836982727)
--(axis cs:7,0.639318823814392)
--(axis cs:8,0.659452319145203)
--(axis cs:9,0.724833488464355)
--(axis cs:10,0.730629503726959)
--(axis cs:11,0.681103229522705)
--(axis cs:12,0.705076992511749)
--(axis cs:12,0.868923008441925)
--(axis cs:12,0.868923008441925)
--(axis cs:11,0.849563479423523)
--(axis cs:10,0.86270397901535)
--(axis cs:9,0.8458331823349)
--(axis cs:8,0.823881030082703)
--(axis cs:7,0.792681097984314)
--(axis cs:6,0.807305812835693)
--(axis cs:5,0.774325609207153)
--(axis cs:4,0.766064047813416)
--(axis cs:3,0.725296139717102)
--(axis cs:2,0.66483211517334)
--(axis cs:1,0.588031470775604)
--cycle;

\path [draw=turquoise, fill=turquoise, opacity=0.2]
(axis cs:1,0.864098131656647)
--(axis cs:1,0.718568623065948)
--(axis cs:2,0.699775218963623)
--(axis cs:3,0.745326638221741)
--(axis cs:4,0.816575586795807)
--(axis cs:5,0.845248758792877)
--(axis cs:6,0.852923691272736)
--(axis cs:7,0.874210000038147)
--(axis cs:8,0.862980663776398)
--(axis cs:9,0.875914990901947)
--(axis cs:10,0.893064796924591)
--(axis cs:11,0.882419288158417)
--(axis cs:12,0.89931333065033)
--(axis cs:12,0.954686641693115)
--(axis cs:12,0.954686641693115)
--(axis cs:11,0.946247398853302)
--(axis cs:10,0.950935065746307)
--(axis cs:9,0.942751586437225)
--(axis cs:8,0.938352525234222)
--(axis cs:7,0.930456638336182)
--(axis cs:6,0.923076212406158)
--(axis cs:5,0.915417730808258)
--(axis cs:4,0.890757739543915)
--(axis cs:3,0.87200665473938)
--(axis cs:2,0.85955810546875)
--(axis cs:1,0.864098131656647)
--cycle;

\addplot [semithick, tomato, dashed]
table {%
1 0.469666659832001
2 0.577666640281677
3 0.637333273887634
4 0.668000042438507
5 0.678000032901764
6 0.72133332490921
7 0.715999960899353
8 0.741666674613953
9 0.785333335399628
10 0.796666741371155
11 0.765333354473114
12 0.787000000476837
};
\addplot [semithick, turquoise]
table {%
1 0.791333377361298
2 0.779666662216187
3 0.80866664648056
4 0.853666663169861
5 0.880333244800568
6 0.887999951839447
7 0.902333319187164
8 0.90066659450531
9 0.909333288669586
10 0.921999931335449
11 0.914333343505859
12 0.926999986171722
};
\end{axis}

\end{tikzpicture}}
\end{subfigure}
\begin{subfigure}{0.19\linewidth}
    \centering
    \resizebox{\linewidth}{!}{
\begin{tikzpicture}

\definecolor{darkgray176}{RGB}{176,176,176}
\definecolor{tomato}{RGB}{255,99,71}
\definecolor{turquoise}{RGB}{64,224,208}

\begin{axis}[
tick align=outside,
tick pos=left,
x grid style={darkgray176},
xmin=0.45, xmax=12.55,
xtick style={color=black},
y grid style={darkgray176},
ymin=0.453999996185303, ymax=1,
ytick style={color=black}
]
\path [draw=tomato, fill=tomato, opacity=0.2]
(axis cs:1,0.588031470775604)
--(axis cs:1,0.351301848888397)
--(axis cs:2,0.497087836265564)
--(axis cs:3,0.485615134239197)
--(axis cs:4,0.522386908531189)
--(axis cs:5,0.506794512271881)
--(axis cs:6,0.540166020393372)
--(axis cs:7,0.595593512058258)
--(axis cs:8,0.544349789619446)
--(axis cs:9,0.660003364086151)
--(axis cs:10,0.648704767227173)
--(axis cs:11,0.550286650657654)
--(axis cs:12,0.632854223251343)
--(axis cs:12,0.885145783424377)
--(axis cs:12,0.885145783424377)
--(axis cs:11,0.859713315963745)
--(axis cs:10,0.897295236587524)
--(axis cs:9,0.886663258075714)
--(axis cs:8,0.819650173187256)
--(axis cs:7,0.831739723682404)
--(axis cs:6,0.835167288780212)
--(axis cs:5,0.771205365657806)
--(axis cs:4,0.74894642829895)
--(axis cs:3,0.737051486968994)
--(axis cs:2,0.690245509147644)
--(axis cs:1,0.588031470775604)
--cycle;

\path [draw=turquoise, fill=turquoise, opacity=0.2]
(axis cs:1,0.864098131656647)
--(axis cs:1,0.718568623065948)
--(axis cs:2,0.631751537322998)
--(axis cs:3,0.682661652565002)
--(axis cs:4,0.808571934700012)
--(axis cs:5,0.831287920475006)
--(axis cs:6,0.85108095407486)
--(axis cs:7,0.85367625951767)
--(axis cs:8,0.864863336086273)
--(axis cs:9,0.873612761497498)
--(axis cs:10,0.88124144077301)
--(axis cs:11,0.87755936384201)
--(axis cs:12,0.893816113471985)
--(axis cs:12,0.952850461006165)
--(axis cs:12,0.952850461006165)
--(axis cs:11,0.943773925304413)
--(axis cs:10,0.945425152778625)
--(axis cs:9,0.945720553398132)
--(axis cs:8,0.937136709690094)
--(axis cs:7,0.930990397930145)
--(axis cs:6,0.94425243139267)
--(axis cs:5,0.915378749370575)
--(axis cs:4,0.896094679832458)
--(axis cs:3,0.88000500202179)
--(axis cs:2,0.838915228843689)
--(axis cs:1,0.864098131656647)
--cycle;

\addplot [semithick, tomato, dashed]
table {%
1 0.469666659832001
2 0.593666672706604
3 0.611333310604095
4 0.63566666841507
5 0.638999938964844
6 0.687666654586792
7 0.713666617870331
8 0.681999981403351
9 0.773333311080933
10 0.773000001907349
11 0.704999983310699
12 0.75900000333786
};
\addplot [semithick, turquoise]
table {%
1 0.791333377361298
2 0.735333383083344
3 0.781333327293396
4 0.852333307266235
5 0.873333334922791
6 0.897666692733765
7 0.892333328723907
8 0.901000022888184
9 0.909666657447815
10 0.913333296775818
11 0.910666644573212
12 0.923333287239075
};
\end{axis}

\end{tikzpicture}}
\end{subfigure}
\begin{subfigure}{0.19\linewidth}
    \centering
    \resizebox{\linewidth}{!}{
\begin{tikzpicture}

\definecolor{darkgray176}{RGB}{176,176,176}
\definecolor{tomato}{RGB}{255,99,71}
\definecolor{turquoise}{RGB}{64,224,208}

\begin{axis}[
tick align=outside,
tick pos=left,
x grid style={darkgray176},
xmin=0.45, xmax=12.55,
xtick style={color=black},
y grid style={darkgray176},
ymin=0.453999996185303, ymax=1,
ytick style={color=black}
]
\path [draw=tomato, fill=tomato, opacity=0.2]
(axis cs:1,0.593979597091675)
--(axis cs:1,0.350687026977539)
--(axis cs:2,0.536916077136993)
--(axis cs:3,0.535307228565216)
--(axis cs:4,0.597705364227295)
--(axis cs:5,0.600512444972992)
--(axis cs:6,0.649647235870361)
--(axis cs:7,0.694901704788208)
--(axis cs:8,0.706222057342529)
--(axis cs:9,0.711768984794617)
--(axis cs:10,0.704605937004089)
--(axis cs:11,0.72587639093399)
--(axis cs:12,0.728006422519684)
--(axis cs:12,0.825326859951019)
--(axis cs:12,0.825326859951019)
--(axis cs:11,0.828790247440338)
--(axis cs:10,0.826060652732849)
--(axis cs:9,0.817564249038696)
--(axis cs:8,0.82111132144928)
--(axis cs:7,0.797098278999329)
--(axis cs:6,0.793685913085938)
--(axis cs:5,0.750154197216034)
--(axis cs:4,0.76229465007782)
--(axis cs:3,0.677359521389008)
--(axis cs:2,0.673750579357147)
--(axis cs:1,0.593979597091675)
--cycle;

\path [draw=turquoise, fill=turquoise, opacity=0.2]
(axis cs:1,0.861148238182068)
--(axis cs:1,0.718851804733276)
--(axis cs:2,0.705994725227356)
--(axis cs:3,0.746085703372955)
--(axis cs:4,0.813867628574371)
--(axis cs:5,0.832267463207245)
--(axis cs:6,0.850210785865784)
--(axis cs:7,0.863078653812408)
--(axis cs:8,0.855543494224548)
--(axis cs:9,0.871256947517395)
--(axis cs:10,0.883607685565948)
--(axis cs:11,0.872242212295532)
--(axis cs:12,0.887336134910583)
--(axis cs:12,0.941330552101135)
--(axis cs:12,0.941330552101135)
--(axis cs:11,0.943757772445679)
--(axis cs:10,0.936392247676849)
--(axis cs:9,0.940076231956482)
--(axis cs:8,0.927123308181763)
--(axis cs:7,0.926921188831329)
--(axis cs:6,0.928455948829651)
--(axis cs:5,0.911065876483917)
--(axis cs:4,0.898799002170563)
--(axis cs:3,0.868580877780914)
--(axis cs:2,0.848671913146973)
--(axis cs:1,0.861148238182068)
--cycle;

\addplot [semithick, tomato, dashed]
table {%
1 0.472333312034607
2 0.60533332824707
3 0.606333374977112
4 0.680000007152557
5 0.675333321094513
6 0.721666574478149
7 0.745999991893768
8 0.763666689395905
9 0.764666616916656
10 0.765333294868469
11 0.777333319187164
12 0.776666641235352
};
\addplot [semithick, turquoise]
table {%
1 0.790000021457672
2 0.777333319187164
3 0.807333290576935
4 0.856333315372467
5 0.871666669845581
6 0.889333367347717
7 0.894999921321869
8 0.891333401203156
9 0.905666589736938
10 0.909999966621399
11 0.907999992370605
12 0.914333343505859
};
\end{axis}

\end{tikzpicture}}
\end{subfigure}
\begin{subfigure}{0.19\linewidth}
    \centering
    \resizebox{\linewidth}{!}{
\begin{tikzpicture}

\definecolor{darkgray176}{RGB}{176,176,176}
\definecolor{tomato}{RGB}{255,99,71}
\definecolor{turquoise}{RGB}{64,224,208}

\begin{axis}[
tick align=outside,
tick pos=left,
x grid style={darkgray176},
xmin=0.45, xmax=12.55,
xtick style={color=black},
y grid style={darkgray176},
ymin=0.453999996185303, ymax=1,
ytick style={color=black}
]
\path [draw=tomato, fill=tomato, opacity=0.2]
(axis cs:1,0.592070579528809)
--(axis cs:1,0.34459599852562)
--(axis cs:2,0.59476900100708)
--(axis cs:3,0.631247103214264)
--(axis cs:4,0.703405320644379)
--(axis cs:5,0.704751014709473)
--(axis cs:6,0.768681943416595)
--(axis cs:7,0.7697913646698)
--(axis cs:8,0.806750059127808)
--(axis cs:9,0.81114274263382)
--(axis cs:10,0.845009624958038)
--(axis cs:11,0.812293350696564)
--(axis cs:12,0.826381087303162)
--(axis cs:12,0.923618793487549)
--(axis cs:12,0.923618793487549)
--(axis cs:11,0.913039982318878)
--(axis cs:10,0.926323592662811)
--(axis cs:9,0.914190590381622)
--(axis cs:8,0.905916690826416)
--(axis cs:7,0.885541796684265)
--(axis cs:6,0.888651311397552)
--(axis cs:5,0.851915717124939)
--(axis cs:4,0.855261266231537)
--(axis cs:3,0.820086300373077)
--(axis cs:2,0.729230880737305)
--(axis cs:1,0.592070579528809)
--cycle;

\path [draw=turquoise, fill=turquoise, opacity=0.2]
(axis cs:1,0.868806958198547)
--(axis cs:1,0.715193033218384)
--(axis cs:2,0.7692911028862)
--(axis cs:3,0.814265131950378)
--(axis cs:4,0.843512356281281)
--(axis cs:5,0.845605373382568)
--(axis cs:6,0.862105846405029)
--(axis cs:7,0.869686901569366)
--(axis cs:8,0.864636778831482)
--(axis cs:9,0.868351578712463)
--(axis cs:10,0.886588096618652)
--(axis cs:11,0.877873301506042)
--(axis cs:12,0.883413851261139)
--(axis cs:12,0.949919402599335)
--(axis cs:12,0.949919402599335)
--(axis cs:11,0.943459987640381)
--(axis cs:10,0.939411878585815)
--(axis cs:9,0.941648364067078)
--(axis cs:8,0.931363224983215)
--(axis cs:7,0.934313118457794)
--(axis cs:6,0.939894199371338)
--(axis cs:5,0.926394581794739)
--(axis cs:4,0.919820845127106)
--(axis cs:3,0.909734964370728)
--(axis cs:2,0.884042203426361)
--(axis cs:1,0.868806958198547)
--cycle;

\addplot [semithick, tomato, dashed]
table {%
1 0.468333303928375
2 0.661999940872192
3 0.725666701793671
4 0.779333293437958
5 0.778333365917206
6 0.828666627407074
7 0.827666580677032
8 0.856333374977112
9 0.862666666507721
10 0.885666608810425
11 0.862666666507721
12 0.874999940395355
};
\addplot [semithick, turquoise]
table {%
1 0.791999995708466
2 0.826666653156281
3 0.862000048160553
4 0.881666600704193
5 0.885999977588654
6 0.901000022888184
7 0.90200001001358
8 0.898000001907349
9 0.904999971389771
10 0.912999987602234
11 0.910666644573212
12 0.916666626930237
};
\end{axis}

\end{tikzpicture}}
\end{subfigure}
\begin{subfigure}{0.19\linewidth}
    \centering
    \resizebox{\linewidth}{!}{
\begin{tikzpicture}

\definecolor{darkgray176}{RGB}{176,176,176}
\definecolor{tomato}{RGB}{255,99,71}
\definecolor{turquoise}{RGB}{64,224,208}

\begin{axis}[
tick align=outside,
tick pos=left,
x grid style={darkgray176},
xmin=0.45, xmax=12.55,
xtick style={color=black},
y grid style={darkgray176},
ymin=0.453999996185303, ymax=1,
ytick style={color=black}
]
\path [draw=tomato, fill=tomato, opacity=0.2]
(axis cs:1,0.64172375202179)
--(axis cs:1,0.402276128530502)
--(axis cs:2,0.632481396198273)
--(axis cs:3,0.64881032705307)
--(axis cs:4,0.713337361812592)
--(axis cs:5,0.714478850364685)
--(axis cs:6,0.786427974700928)
--(axis cs:7,0.779492139816284)
--(axis cs:8,0.804700911045074)
--(axis cs:9,0.826885581016541)
--(axis cs:10,0.851402878761292)
--(axis cs:11,0.824831068515778)
--(axis cs:12,0.837973952293396)
--(axis cs:12,0.928692698478699)
--(axis cs:12,0.928692698478699)
--(axis cs:11,0.914502203464508)
--(axis cs:10,0.930597066879272)
--(axis cs:9,0.914447665214539)
--(axis cs:8,0.911965787410736)
--(axis cs:7,0.893174529075623)
--(axis cs:6,0.88890528678894)
--(axis cs:5,0.860854268074036)
--(axis cs:4,0.861329257488251)
--(axis cs:3,0.827856361865997)
--(axis cs:2,0.744851887226105)
--(axis cs:1,0.64172375202179)
--cycle;

\path [draw=turquoise, fill=turquoise, opacity=0.2]
(axis cs:1,0.876363039016724)
--(axis cs:1,0.730303645133972)
--(axis cs:2,0.782418668270111)
--(axis cs:3,0.82986718416214)
--(axis cs:4,0.845615923404694)
--(axis cs:5,0.848219752311707)
--(axis cs:6,0.874553918838501)
--(axis cs:7,0.873630702495575)
--(axis cs:8,0.863459527492523)
--(axis cs:9,0.86845201253891)
--(axis cs:10,0.887871265411377)
--(axis cs:11,0.878643751144409)
--(axis cs:12,0.882923662662506)
--(axis cs:12,0.953076183795929)
--(axis cs:12,0.953076183795929)
--(axis cs:11,0.947356104850769)
--(axis cs:10,0.939462065696716)
--(axis cs:9,0.940214693546295)
--(axis cs:8,0.93387371301651)
--(axis cs:7,0.937035977840424)
--(axis cs:6,0.942779421806335)
--(axis cs:5,0.925113439559937)
--(axis cs:4,0.925050675868988)
--(axis cs:3,0.91413277387619)
--(axis cs:2,0.890247881412506)
--(axis cs:1,0.876363039016724)
--cycle;

\addplot [semithick, tomato, dashed]
table {%
1 0.521999955177307
2 0.688666641712189
3 0.738333344459534
4 0.787333309650421
5 0.78766655921936
6 0.837666630744934
7 0.836333334445953
8 0.858333349227905
9 0.87066662311554
10 0.890999972820282
11 0.869666635990143
12 0.883333325386047
};
\addplot [semithick, turquoise]
table {%
1 0.803333342075348
2 0.836333274841309
3 0.871999979019165
4 0.885333299636841
5 0.886666595935822
6 0.908666670322418
7 0.905333340167999
8 0.898666620254517
9 0.904333353042603
10 0.913666665554047
11 0.912999927997589
12 0.917999923229218
};
\end{axis}

\end{tikzpicture}}
\end{subfigure}

\begin{subfigure}{0.19\linewidth}
    \centering
    \resizebox{\linewidth}{!}{
\begin{tikzpicture}

\definecolor{darkgray176}{RGB}{176,176,176}
\definecolor{tomato}{RGB}{255,99,71}
\definecolor{turquoise}{RGB}{64,224,208}

\begin{axis}[
tick align=outside,
tick pos=left,
x grid style={darkgray176},
xmin=0.45, xmax=12.55,
xtick style={color=black},
y grid style={darkgray176},
ymin=0.453999996185303, ymax=1,
ytick style={color=black}
]
\path [draw=tomato, fill=tomato, opacity=0.2]
(axis cs:1,0.59379369020462)
--(axis cs:1,0.384206265211105)
--(axis cs:2,0.49498051404953)
--(axis cs:3,0.537232875823975)
--(axis cs:4,0.582504272460938)
--(axis cs:5,0.608235120773315)
--(axis cs:6,0.653829336166382)
--(axis cs:7,0.676195383071899)
--(axis cs:8,0.672329664230347)
--(axis cs:9,0.734467566013336)
--(axis cs:10,0.722165465354919)
--(axis cs:11,0.726646602153778)
--(axis cs:12,0.725766479969025)
--(axis cs:12,0.864900052547455)
--(axis cs:12,0.864900052547455)
--(axis cs:11,0.856020033359528)
--(axis cs:10,0.837834477424622)
--(axis cs:9,0.839532434940338)
--(axis cs:8,0.817003726959229)
--(axis cs:7,0.804471254348755)
--(axis cs:6,0.788837194442749)
--(axis cs:5,0.7757648229599)
--(axis cs:4,0.744162321090698)
--(axis cs:3,0.747433662414551)
--(axis cs:2,0.654352843761444)
--(axis cs:1,0.59379369020462)
--cycle;

\path [draw=turquoise, fill=turquoise, opacity=0.2]
(axis cs:1,0.802138805389404)
--(axis cs:1,0.655194520950317)
--(axis cs:2,0.800075173377991)
--(axis cs:3,0.834780812263489)
--(axis cs:4,0.861241698265076)
--(axis cs:5,0.851788341999054)
--(axis cs:6,0.871990025043488)
--(axis cs:7,0.879754543304443)
--(axis cs:8,0.880126237869263)
--(axis cs:9,0.896451592445374)
--(axis cs:10,0.89334774017334)
--(axis cs:11,0.888942182064056)
--(axis cs:12,0.884689807891846)
--(axis cs:12,0.950643420219421)
--(axis cs:12,0.950643420219421)
--(axis cs:11,0.945057809352875)
--(axis cs:10,0.946652293205261)
--(axis cs:9,0.955548286437988)
--(axis cs:8,0.95054042339325)
--(axis cs:7,0.949578642845154)
--(axis cs:6,0.936676681041718)
--(axis cs:5,0.938211500644684)
--(axis cs:4,0.926758170127869)
--(axis cs:3,0.920552492141724)
--(axis cs:2,0.88659143447876)
--(axis cs:1,0.802138805389404)
--cycle;

\addplot [semithick, tomato, dashed]
table {%
1 0.488999992609024
2 0.574666678905487
3 0.642333269119263
4 0.663333296775818
5 0.691999971866608
6 0.721333265304565
7 0.740333318710327
8 0.744666695594788
9 0.787000000476837
10 0.779999971389771
11 0.791333317756653
12 0.79533326625824
};
\addplot [semithick, turquoise]
table {%
1 0.728666663169861
2 0.843333303928375
3 0.877666652202606
4 0.893999934196472
5 0.894999921321869
6 0.904333353042603
7 0.914666593074799
8 0.915333330631256
9 0.925999939441681
10 0.920000016689301
11 0.916999995708466
12 0.917666614055634
};
\end{axis}

\end{tikzpicture}}
\end{subfigure}
\begin{subfigure}{0.19\linewidth}
    \centering
    \resizebox{\linewidth}{!}{
\begin{tikzpicture}

\definecolor{darkgray176}{RGB}{176,176,176}
\definecolor{tomato}{RGB}{255,99,71}
\definecolor{turquoise}{RGB}{64,224,208}

\begin{axis}[
tick align=outside,
tick pos=left,
x grid style={darkgray176},
xmin=0.45, xmax=12.55,
xtick style={color=black},
y grid style={darkgray176},
ymin=0.453999996185303, ymax=1,
ytick style={color=black}
]
\path [draw=tomato, fill=tomato, opacity=0.2]
(axis cs:1,0.59379369020462)
--(axis cs:1,0.384206265211105)
--(axis cs:2,0.426758885383606)
--(axis cs:3,0.466195374727249)
--(axis cs:4,0.50342208147049)
--(axis cs:5,0.57227498292923)
--(axis cs:6,0.562372863292694)
--(axis cs:7,0.610277771949768)
--(axis cs:8,0.615016043186188)
--(axis cs:9,0.680532872676849)
--(axis cs:10,0.644034087657928)
--(axis cs:11,0.629667639732361)
--(axis cs:12,0.672435402870178)
--(axis cs:12,0.884231209754944)
--(axis cs:12,0.884231209754944)
--(axis cs:11,0.877665638923645)
--(axis cs:10,0.83863240480423)
--(axis cs:9,0.884133756160736)
--(axis cs:8,0.84965056180954)
--(axis cs:7,0.83305549621582)
--(axis cs:6,0.836293756961823)
--(axis cs:5,0.80439168214798)
--(axis cs:4,0.769244611263275)
--(axis cs:3,0.754471302032471)
--(axis cs:2,0.64990770816803)
--(axis cs:1,0.59379369020462)
--cycle;

\path [draw=turquoise, fill=turquoise, opacity=0.2]
(axis cs:1,0.802138805389404)
--(axis cs:1,0.655194520950317)
--(axis cs:2,0.784458816051483)
--(axis cs:3,0.82269424200058)
--(axis cs:4,0.848876655101776)
--(axis cs:5,0.835565268993378)
--(axis cs:6,0.861832559108734)
--(axis cs:7,0.872629940509796)
--(axis cs:8,0.877734839916229)
--(axis cs:9,0.88658332824707)
--(axis cs:10,0.886660039424896)
--(axis cs:11,0.879715383052826)
--(axis cs:12,0.871653616428375)
--(axis cs:12,0.945013105869293)
--(axis cs:12,0.945013105869293)
--(axis cs:11,0.938284575939178)
--(axis cs:10,0.942673146724701)
--(axis cs:9,0.947416663169861)
--(axis cs:8,0.947598397731781)
--(axis cs:7,0.943369925022125)
--(axis cs:6,0.928834021091461)
--(axis cs:5,0.927101314067841)
--(axis cs:4,0.92045658826828)
--(axis cs:3,0.911972343921661)
--(axis cs:2,0.884207904338837)
--(axis cs:1,0.802138805389404)
--cycle;

\addplot [semithick, tomato, dashed]
table {%
1 0.488999992609024
2 0.538333296775818
3 0.610333323478699
4 0.636333346366882
5 0.688333332538605
6 0.699333310127258
7 0.721666634082794
8 0.732333302497864
9 0.782333314418793
10 0.741333246231079
11 0.753666639328003
12 0.778333306312561
};
\addplot [semithick, turquoise]
table {%
1 0.728666663169861
2 0.83433336019516
3 0.867333292961121
4 0.884666621685028
5 0.881333291530609
6 0.895333290100098
7 0.907999932765961
8 0.912666618824005
9 0.916999995708466
10 0.914666593074799
11 0.908999979496002
12 0.908333361148834
};
\end{axis}

\end{tikzpicture}}
\end{subfigure}
\begin{subfigure}{0.19\linewidth}
    \centering
    \resizebox{\linewidth}{!}{
\begin{tikzpicture}

\definecolor{darkgray176}{RGB}{176,176,176}
\definecolor{tomato}{RGB}{255,99,71}
\definecolor{turquoise}{RGB}{64,224,208}

\begin{axis}[
tick align=outside,
tick pos=left,
x grid style={darkgray176},
xmin=0.45, xmax=12.55,
xtick style={color=black},
y grid style={darkgray176},
ymin=0.453999996185303, ymax=1,
ytick style={color=black}
]
\path [draw=tomato, fill=tomato, opacity=0.2]
(axis cs:1,0.594267189502716)
--(axis cs:1,0.38506618142128)
--(axis cs:2,0.483750998973846)
--(axis cs:3,0.540594279766083)
--(axis cs:4,0.573461413383484)
--(axis cs:5,0.6142418384552)
--(axis cs:6,0.639847338199615)
--(axis cs:7,0.671494245529175)
--(axis cs:8,0.668561398983002)
--(axis cs:9,0.713225126266479)
--(axis cs:10,0.707246422767639)
--(axis cs:11,0.714681506156921)
--(axis cs:12,0.741004228591919)
--(axis cs:12,0.832329034805298)
--(axis cs:12,0.832329034805298)
--(axis cs:11,0.831318259239197)
--(axis cs:10,0.829420208930969)
--(axis cs:9,0.811441421508789)
--(axis cs:8,0.798105299472809)
--(axis cs:7,0.795838952064514)
--(axis cs:6,0.777485907077789)
--(axis cs:5,0.753758192062378)
--(axis cs:4,0.72053861618042)
--(axis cs:3,0.689405620098114)
--(axis cs:2,0.677582323551178)
--(axis cs:1,0.594267189502716)
--cycle;

\path [draw=turquoise, fill=turquoise, opacity=0.2]
(axis cs:1,0.800303816795349)
--(axis cs:1,0.653696179389954)
--(axis cs:2,0.747833490371704)
--(axis cs:3,0.799553751945496)
--(axis cs:4,0.84428858757019)
--(axis cs:5,0.831928372383118)
--(axis cs:6,0.86429750919342)
--(axis cs:7,0.877613842487335)
--(axis cs:8,0.87574690580368)
--(axis cs:9,0.888935565948486)
--(axis cs:10,0.888133347034454)
--(axis cs:11,0.884960889816284)
--(axis cs:12,0.87629771232605)
--(axis cs:12,0.951035618782043)
--(axis cs:12,0.951035618782043)
--(axis cs:11,0.942372441291809)
--(axis cs:10,0.943866670131683)
--(axis cs:9,0.945731163024902)
--(axis cs:8,0.947586476802826)
--(axis cs:7,0.932386100292206)
--(axis cs:6,0.927035808563232)
--(axis cs:5,0.922738313674927)
--(axis cs:4,0.915044784545898)
--(axis cs:3,0.90044629573822)
--(axis cs:2,0.856833100318909)
--(axis cs:1,0.800303816795349)
--cycle;

\addplot [semithick, tomato, dashed]
table {%
1 0.489666670560837
2 0.580666661262512
3 0.614999949932098
4 0.647000014781952
5 0.684000015258789
6 0.708666622638702
7 0.733666598796844
8 0.733333349227905
9 0.762333273887634
10 0.768333315849304
11 0.772999882698059
12 0.786666631698608
};
\addplot [semithick, turquoise]
table {%
1 0.726999998092651
2 0.802333295345306
3 0.850000023841858
4 0.879666686058044
5 0.877333343029022
6 0.895666658878326
7 0.904999971389771
8 0.911666691303253
9 0.917333364486694
10 0.916000008583069
11 0.913666665554047
12 0.913666665554047
};
\end{axis}

\end{tikzpicture}}
\end{subfigure}
\begin{subfigure}{0.19\linewidth}
    \centering
    \resizebox{\linewidth}{!}{
\begin{tikzpicture}

\definecolor{darkgray176}{RGB}{176,176,176}
\definecolor{tomato}{RGB}{255,99,71}
\definecolor{turquoise}{RGB}{64,224,208}

\begin{axis}[
tick align=outside,
tick pos=left,
x grid style={darkgray176},
xmin=0.45, xmax=12.55,
xtick style={color=black},
y grid style={darkgray176},
ymin=0.453999996185303, ymax=1,
ytick style={color=black}
]
\path [draw=tomato, fill=tomato, opacity=0.2]
(axis cs:1,0.597813665866852)
--(axis cs:1,0.368852972984314)
--(axis cs:2,0.545479416847229)
--(axis cs:3,0.645379662513733)
--(axis cs:4,0.687264680862427)
--(axis cs:5,0.723189473152161)
--(axis cs:6,0.786993741989136)
--(axis cs:7,0.7889643907547)
--(axis cs:8,0.770175933837891)
--(axis cs:9,0.822126507759094)
--(axis cs:10,0.831134080886841)
--(axis cs:11,0.804080009460449)
--(axis cs:12,0.836635529994965)
--(axis cs:12,0.927364408969879)
--(axis cs:12,0.927364408969879)
--(axis cs:11,0.913920044898987)
--(axis cs:10,0.914865970611572)
--(axis cs:9,0.919873595237732)
--(axis cs:8,0.913157343864441)
--(axis cs:7,0.876368880271912)
--(axis cs:6,0.871672987937927)
--(axis cs:5,0.874810576438904)
--(axis cs:4,0.843401908874512)
--(axis cs:3,0.815286993980408)
--(axis cs:2,0.710520625114441)
--(axis cs:1,0.597813665866852)
--cycle;

\path [draw=turquoise, fill=turquoise, opacity=0.2]
(axis cs:1,0.8017857670784)
--(axis cs:1,0.66821414232254)
--(axis cs:2,0.806044161319733)
--(axis cs:3,0.833542108535767)
--(axis cs:4,0.846294820308685)
--(axis cs:5,0.851048111915588)
--(axis cs:6,0.866959750652313)
--(axis cs:7,0.872971713542938)
--(axis cs:8,0.86478066444397)
--(axis cs:9,0.883766174316406)
--(axis cs:10,0.887446641921997)
--(axis cs:11,0.872704684734344)
--(axis cs:12,0.875429272651672)
--(axis cs:12,0.93723726272583)
--(axis cs:12,0.93723726272583)
--(axis cs:11,0.924628555774689)
--(axis cs:10,0.935220003128052)
--(axis cs:9,0.950233697891235)
--(axis cs:8,0.937885880470276)
--(axis cs:7,0.935028374195099)
--(axis cs:6,0.923706829547882)
--(axis cs:5,0.928951740264893)
--(axis cs:4,0.927038490772247)
--(axis cs:3,0.925791144371033)
--(axis cs:2,0.874622523784637)
--(axis cs:1,0.8017857670784)
--cycle;

\addplot [semithick, tomato, dashed]
table {%
1 0.483333319425583
2 0.628000020980835
3 0.73033332824707
4 0.765333294868469
5 0.799000024795532
6 0.829333364963531
7 0.832666635513306
8 0.841666638851166
9 0.871000051498413
10 0.873000025749207
11 0.859000027179718
12 0.881999969482422
};
\addplot [semithick, turquoise]
table {%
1 0.73499995470047
2 0.840333342552185
3 0.8796666264534
4 0.886666655540466
5 0.88999992609024
6 0.895333290100098
7 0.904000043869019
8 0.901333272457123
9 0.916999936103821
10 0.911333322525024
11 0.898666620254517
12 0.906333267688751
};
\end{axis}

\end{tikzpicture}}
\end{subfigure}
\begin{subfigure}{0.19\linewidth}
    \centering
    \resizebox{\linewidth}{!}{
\begin{tikzpicture}

\definecolor{darkgray176}{RGB}{176,176,176}
\definecolor{tomato}{RGB}{255,99,71}
\definecolor{turquoise}{RGB}{64,224,208}

\begin{axis}[
tick align=outside,
tick pos=left,
x grid style={darkgray176},
xmin=0.45, xmax=12.55,
xtick style={color=black},
y grid style={darkgray176},
ymin=0.453999996185303, ymax=1,
ytick style={color=black}
]
\path [draw=tomato, fill=tomato, opacity=0.2]
(axis cs:1,0.644599437713623)
--(axis cs:1,0.412067174911499)
--(axis cs:2,0.577101945877075)
--(axis cs:3,0.664000451564789)
--(axis cs:4,0.70506489276886)
--(axis cs:5,0.741226494312286)
--(axis cs:6,0.804183840751648)
--(axis cs:7,0.806729376316071)
--(axis cs:8,0.78540575504303)
--(axis cs:9,0.837900221347809)
--(axis cs:10,0.843335330486298)
--(axis cs:11,0.820995330810547)
--(axis cs:12,0.849636912345886)
--(axis cs:12,0.926362991333008)
--(axis cs:12,0.926362991333008)
--(axis cs:11,0.925004601478577)
--(axis cs:10,0.917331278324127)
--(axis cs:9,0.924099624156952)
--(axis cs:8,0.914594173431396)
--(axis cs:7,0.887270629405975)
--(axis cs:6,0.876482725143433)
--(axis cs:5,0.879440009593964)
--(axis cs:4,0.847601890563965)
--(axis cs:3,0.830666244029999)
--(axis cs:2,0.74356484413147)
--(axis cs:1,0.644599437713623)
--cycle;

\path [draw=turquoise, fill=turquoise, opacity=0.2]
(axis cs:1,0.810379803180695)
--(axis cs:1,0.688953459262848)
--(axis cs:2,0.812527537345886)
--(axis cs:3,0.8450648188591)
--(axis cs:4,0.847389042377472)
--(axis cs:5,0.852578163146973)
--(axis cs:6,0.865930795669556)
--(axis cs:7,0.877160370349884)
--(axis cs:8,0.867050588130951)
--(axis cs:9,0.883001625537872)
--(axis cs:10,0.885078310966492)
--(axis cs:11,0.873212456703186)
--(axis cs:12,0.875157594680786)
--(axis cs:12,0.940842270851135)
--(axis cs:12,0.940842270851135)
--(axis cs:11,0.925454139709473)
--(axis cs:10,0.935588359832764)
--(axis cs:9,0.950331628322601)
--(axis cs:8,0.938282787799835)
--(axis cs:7,0.93483966588974)
--(axis cs:6,0.923402428627014)
--(axis cs:5,0.925421833992004)
--(axis cs:4,0.929277718067169)
--(axis cs:3,0.917601764202118)
--(axis cs:2,0.876139163970947)
--(axis cs:1,0.810379803180695)
--cycle;

\addplot [semithick, tomato, dashed]
table {%
1 0.528333306312561
2 0.660333395004272
3 0.747333347797394
4 0.776333391666412
5 0.810333251953125
6 0.84033328294754
7 0.847000002861023
8 0.849999964237213
9 0.88099992275238
10 0.880333304405212
11 0.872999966144562
12 0.887999951839447
};
\addplot [semithick, turquoise]
table {%
1 0.749666631221771
2 0.844333350658417
3 0.881333291530609
4 0.888333380222321
5 0.888999998569489
6 0.894666612148285
7 0.906000018119812
8 0.902666687965393
9 0.916666626930237
10 0.910333335399628
11 0.899333298206329
12 0.907999932765961
};
\end{axis}

\end{tikzpicture}}
\end{subfigure}

\begin{subfigure}{0.19\linewidth}
    \centering
    \resizebox{\linewidth}{!}{
\begin{tikzpicture}

\definecolor{darkgray176}{RGB}{176,176,176}
\definecolor{tomato}{RGB}{255,99,71}
\definecolor{turquoise}{RGB}{64,224,208}

\begin{axis}[
tick align=outside,
tick pos=left,
x grid style={darkgray176},
xmin=0.45, xmax=12.55,
xtick style={color=black},
y grid style={darkgray176},
ymin=0.453999996185303, ymax=1,
ytick style={color=black}
]
\path [draw=tomato, fill=tomato, opacity=0.2]
(axis cs:1,0.626630902290344)
--(axis cs:1,0.380035728216171)
--(axis cs:2,0.455430179834366)
--(axis cs:3,0.523545265197754)
--(axis cs:4,0.545271992683411)
--(axis cs:5,0.634646117687225)
--(axis cs:6,0.630043089389801)
--(axis cs:7,0.682645261287689)
--(axis cs:8,0.678333878517151)
--(axis cs:9,0.701837360858917)
--(axis cs:10,0.679949104785919)
--(axis cs:11,0.704580843448639)
--(axis cs:12,0.705701172351837)
--(axis cs:12,0.859632074832916)
--(axis cs:12,0.859632074832916)
--(axis cs:11,0.847419083118439)
--(axis cs:10,0.855384171009064)
--(axis cs:9,0.839495956897736)
--(axis cs:8,0.820332765579224)
--(axis cs:7,0.822687923908234)
--(axis cs:6,0.789956867694855)
--(axis cs:5,0.788020431995392)
--(axis cs:4,0.763394594192505)
--(axis cs:3,0.742454767227173)
--(axis cs:2,0.687903165817261)
--(axis cs:1,0.626630902290344)
--cycle;

\path [draw=turquoise, fill=turquoise, opacity=0.2]
(axis cs:1,0.826919972896576)
--(axis cs:1,0.695746719837189)
--(axis cs:2,0.806891560554504)
--(axis cs:3,0.835606873035431)
--(axis cs:4,0.833810269832611)
--(axis cs:5,0.872246861457825)
--(axis cs:6,0.865009784698486)
--(axis cs:7,0.872769296169281)
--(axis cs:8,0.860849320888519)
--(axis cs:9,0.870126247406006)
--(axis cs:10,0.88496333360672)
--(axis cs:11,0.877321779727936)
--(axis cs:12,0.889842391014099)
--(axis cs:12,0.943490862846375)
--(axis cs:12,0.943490862846375)
--(axis cs:11,0.945344746112823)
--(axis cs:10,0.949703276157379)
--(axis cs:9,0.940540432929993)
--(axis cs:8,0.938483893871307)
--(axis cs:7,0.930563986301422)
--(axis cs:6,0.938990235328674)
--(axis cs:5,0.938419699668884)
--(axis cs:4,0.929522931575775)
--(axis cs:3,0.903059661388397)
--(axis cs:2,0.907775044441223)
--(axis cs:1,0.826919972896576)
--cycle;

\addplot [semithick, tomato, dashed]
table {%
1 0.503333330154419
2 0.571666657924652
3 0.633000016212463
4 0.654333293437958
5 0.711333274841309
6 0.709999978542328
7 0.752666592597961
8 0.749333322048187
9 0.770666658878326
10 0.767666637897491
11 0.775999963283539
12 0.782666623592377
};
\addplot [semithick, turquoise]
table {%
1 0.761333346366882
2 0.857333302497864
3 0.869333267211914
4 0.881666600704193
5 0.905333280563354
6 0.90200001001358
7 0.901666641235352
8 0.899666607379913
9 0.905333340167999
10 0.91733330488205
11 0.91133326292038
12 0.916666626930237
};
\end{axis}

\end{tikzpicture}}
\end{subfigure}
\begin{subfigure}{0.19\linewidth}
    \centering
    \resizebox{\linewidth}{!}{
\begin{tikzpicture}

\definecolor{darkgray176}{RGB}{176,176,176}
\definecolor{tomato}{RGB}{255,99,71}
\definecolor{turquoise}{RGB}{64,224,208}

\begin{axis}[
tick align=outside,
tick pos=left,
x grid style={darkgray176},
xmin=0.45, xmax=12.55,
xtick style={color=black},
y grid style={darkgray176},
ymin=0.453999996185303, ymax=1,
ytick style={color=black}
]
\path [draw=tomato, fill=tomato, opacity=0.2]
(axis cs:1,0.626630902290344)
--(axis cs:1,0.380035728216171)
--(axis cs:2,0.4532550573349)
--(axis cs:3,0.474421560764313)
--(axis cs:4,0.506450653076172)
--(axis cs:5,0.558942019939423)
--(axis cs:6,0.553536236286163)
--(axis cs:7,0.598144173622131)
--(axis cs:8,0.566739678382874)
--(axis cs:9,0.630621790885925)
--(axis cs:10,0.600000858306885)
--(axis cs:11,0.635716259479523)
--(axis cs:12,0.60429573059082)
--(axis cs:12,0.899704217910767)
--(axis cs:12,0.899704217910767)
--(axis cs:11,0.888283789157867)
--(axis cs:10,0.891332268714905)
--(axis cs:9,0.850711584091187)
--(axis cs:8,0.849260330200195)
--(axis cs:7,0.831189036369324)
--(axis cs:6,0.823797047138214)
--(axis cs:5,0.805724561214447)
--(axis cs:4,0.78221583366394)
--(axis cs:3,0.772245109081268)
--(axis cs:2,0.669411659240723)
--(axis cs:1,0.626630902290344)
--cycle;

\path [draw=turquoise, fill=turquoise, opacity=0.2]
(axis cs:1,0.826919972896576)
--(axis cs:1,0.695746719837189)
--(axis cs:2,0.805620610713959)
--(axis cs:3,0.821964681148529)
--(axis cs:4,0.830783724784851)
--(axis cs:5,0.859922826290131)
--(axis cs:6,0.861192584037781)
--(axis cs:7,0.870038866996765)
--(axis cs:8,0.856127619743347)
--(axis cs:9,0.865009784698486)
--(axis cs:10,0.879870176315308)
--(axis cs:11,0.869365692138672)
--(axis cs:12,0.886381804943085)
--(axis cs:12,0.941618025302887)
--(axis cs:12,0.941618025302887)
--(axis cs:11,0.942634224891663)
--(axis cs:10,0.94612979888916)
--(axis cs:9,0.938990235328674)
--(axis cs:8,0.935205698013306)
--(axis cs:7,0.9266277551651)
--(axis cs:6,0.937474012374878)
--(axis cs:5,0.934743702411652)
--(axis cs:4,0.928549528121948)
--(axis cs:3,0.900701820850372)
--(axis cs:2,0.899712741374969)
--(axis cs:1,0.826919972896576)
--cycle;

\addplot [semithick, tomato, dashed]
table {%
1 0.503333330154419
2 0.561333358287811
3 0.623333334922791
4 0.644333243370056
5 0.682333290576935
6 0.688666641712189
7 0.714666604995728
8 0.708000004291534
9 0.740666687488556
10 0.745666563510895
11 0.762000024318695
12 0.751999974250793
};
\addplot [semithick, turquoise]
table {%
1 0.761333346366882
2 0.852666676044464
3 0.861333250999451
4 0.8796666264534
5 0.897333264350891
6 0.899333298206329
7 0.898333311080933
8 0.895666658878326
9 0.90200001001358
10 0.912999987602234
11 0.905999958515167
12 0.913999915122986
};
\end{axis}

\end{tikzpicture}}
\end{subfigure}
\begin{subfigure}{0.19\linewidth}
    \centering
    \resizebox{\linewidth}{!}{
\begin{tikzpicture}

\definecolor{darkgray176}{RGB}{176,176,176}
\definecolor{tomato}{RGB}{255,99,71}
\definecolor{turquoise}{RGB}{64,224,208}

\begin{axis}[
tick align=outside,
tick pos=left,
x grid style={darkgray176},
xmin=0.45, xmax=12.55,
xtick style={color=black},
y grid style={darkgray176},
ymin=0.453999996185303, ymax=1,
ytick style={color=black}
]
\path [draw=tomato, fill=tomato, opacity=0.2]
(axis cs:1,0.625369846820831)
--(axis cs:1,0.385963499546051)
--(axis cs:2,0.480056047439575)
--(axis cs:3,0.528533101081848)
--(axis cs:4,0.549689590930939)
--(axis cs:5,0.592918395996094)
--(axis cs:6,0.641767382621765)
--(axis cs:7,0.645514726638794)
--(axis cs:8,0.659544706344604)
--(axis cs:9,0.715419352054596)
--(axis cs:10,0.715059757232666)
--(axis cs:11,0.718388855457306)
--(axis cs:12,0.717898011207581)
--(axis cs:12,0.840101957321167)
--(axis cs:12,0.840101957321167)
--(axis cs:11,0.823611080646515)
--(axis cs:10,0.838273406028748)
--(axis cs:9,0.829247176647186)
--(axis cs:8,0.802455306053162)
--(axis cs:7,0.795818567276001)
--(axis cs:6,0.788899302482605)
--(axis cs:5,0.743081569671631)
--(axis cs:4,0.741643846035004)
--(axis cs:3,0.690800070762634)
--(axis cs:2,0.683277249336243)
--(axis cs:1,0.625369846820831)
--cycle;

\path [draw=turquoise, fill=turquoise, opacity=0.2]
(axis cs:1,0.82733541727066)
--(axis cs:1,0.689997851848602)
--(axis cs:2,0.790270149707794)
--(axis cs:3,0.828779339790344)
--(axis cs:4,0.8320032954216)
--(axis cs:5,0.868191242218018)
--(axis cs:6,0.860464215278625)
--(axis cs:7,0.871833443641663)
--(axis cs:8,0.861090660095215)
--(axis cs:9,0.869069933891296)
--(axis cs:10,0.883974313735962)
--(axis cs:11,0.874977469444275)
--(axis cs:12,0.88800984621048)
--(axis cs:12,0.943990170955658)
--(axis cs:12,0.943990170955658)
--(axis cs:11,0.944355726242065)
--(axis cs:10,0.95002555847168)
--(axis cs:9,0.940263390541077)
--(axis cs:8,0.936242580413818)
--(axis cs:7,0.930166602134705)
--(axis cs:6,0.940202355384827)
--(axis cs:5,0.939142107963562)
--(axis cs:4,0.925330102443695)
--(axis cs:3,0.898553967475891)
--(axis cs:2,0.897729814052582)
--(axis cs:1,0.82733541727066)
--cycle;

\addplot [semithick, tomato, dashed]
table {%
1 0.505666673183441
2 0.581666648387909
3 0.609666585922241
4 0.645666718482971
5 0.667999982833862
6 0.715333342552185
7 0.720666646957397
8 0.731000006198883
9 0.772333264350891
10 0.776666581630707
11 0.77099996805191
12 0.778999984264374
};
\addplot [semithick, turquoise]
table {%
1 0.758666634559631
2 0.843999981880188
3 0.863666653633118
4 0.878666698932648
5 0.90366667509079
6 0.900333285331726
7 0.901000022888184
8 0.898666620254517
9 0.904666662216187
10 0.916999936103821
11 0.90966659784317
12 0.916000008583069
};
\end{axis}

\end{tikzpicture}}
\end{subfigure}
\begin{subfigure}{0.19\linewidth}
    \centering
    \resizebox{\linewidth}{!}{
\begin{tikzpicture}

\definecolor{darkgray176}{RGB}{176,176,176}
\definecolor{tomato}{RGB}{255,99,71}
\definecolor{turquoise}{RGB}{64,224,208}

\begin{axis}[
tick align=outside,
tick pos=left,
x grid style={darkgray176},
xmin=0.45, xmax=12.55,
xtick style={color=black},
y grid style={darkgray176},
ymin=0.453999996185303, ymax=1,
ytick style={color=black}
]
\path [draw=tomato, fill=tomato, opacity=0.2]
(axis cs:1,0.621269822120667)
--(axis cs:1,0.37739685177803)
--(axis cs:2,0.538211584091187)
--(axis cs:3,0.61274266242981)
--(axis cs:4,0.668482899665833)
--(axis cs:5,0.743489325046539)
--(axis cs:6,0.763303279876709)
--(axis cs:7,0.790422320365906)
--(axis cs:8,0.774036884307861)
--(axis cs:9,0.814205706119537)
--(axis cs:10,0.80991119146347)
--(axis cs:11,0.831549644470215)
--(axis cs:12,0.815958976745605)
--(axis cs:12,0.919374227523804)
--(axis cs:12,0.919374227523804)
--(axis cs:11,0.918450355529785)
--(axis cs:10,0.919422090053558)
--(axis cs:9,0.923794329166412)
--(axis cs:8,0.900629758834839)
--(axis cs:7,0.90357768535614)
--(axis cs:6,0.892030000686646)
--(axis cs:5,0.868510663509369)
--(axis cs:4,0.842183709144592)
--(axis cs:3,0.827923893928528)
--(axis cs:2,0.748455047607422)
--(axis cs:1,0.621269822120667)
--cycle;

\path [draw=turquoise, fill=turquoise, opacity=0.2]
(axis cs:1,0.842594027519226)
--(axis cs:1,0.679405927658081)
--(axis cs:2,0.798471570014954)
--(axis cs:3,0.83344042301178)
--(axis cs:4,0.831006944179535)
--(axis cs:5,0.861832082271576)
--(axis cs:6,0.861002445220947)
--(axis cs:7,0.861026108264923)
--(axis cs:8,0.852748334407806)
--(axis cs:9,0.8590127825737)
--(axis cs:10,0.886220455169678)
--(axis cs:11,0.873186945915222)
--(axis cs:12,0.878624975681305)
--(axis cs:12,0.92737489938736)
--(axis cs:12,0.92737489938736)
--(axis cs:11,0.936812996864319)
--(axis cs:10,0.946446061134338)
--(axis cs:9,0.936987102031708)
--(axis cs:8,0.931251585483551)
--(axis cs:7,0.926307141780853)
--(axis cs:6,0.930997490882874)
--(axis cs:5,0.935501158237457)
--(axis cs:4,0.924326360225677)
--(axis cs:3,0.90455961227417)
--(axis cs:2,0.901528358459473)
--(axis cs:1,0.842594027519226)
--cycle;

\addplot [semithick, tomato, dashed]
table {%
1 0.499333322048187
2 0.643333315849304
3 0.720333278179169
4 0.755333304405212
5 0.805999994277954
6 0.827666640281677
7 0.847000002861023
8 0.83733332157135
9 0.869000017642975
10 0.864666640758514
11 0.875
12 0.867666602134705
};
\addplot [semithick, turquoise]
table {%
1 0.760999977588654
2 0.849999964237213
3 0.869000017642975
4 0.877666652202606
5 0.898666620254517
6 0.89599996805191
7 0.893666625022888
8 0.891999959945679
9 0.897999942302704
10 0.916333258152008
11 0.904999971389771
12 0.902999937534332
};
\end{axis}

\end{tikzpicture}}
\end{subfigure}
\begin{subfigure}{0.19\linewidth}
    \centering
    \resizebox{\linewidth}{!}{
\begin{tikzpicture}

\definecolor{darkgray176}{RGB}{176,176,176}
\definecolor{tomato}{RGB}{255,99,71}
\definecolor{turquoise}{RGB}{64,224,208}

\begin{axis}[
tick align=outside,
tick pos=left,
x grid style={darkgray176},
xmin=0.45, xmax=12.55,
xtick style={color=black},
y grid style={darkgray176},
ymin=0.453999996185303, ymax=1,
ytick style={color=black}
]
\path [draw=tomato, fill=tomato, opacity=0.2]
(axis cs:1,0.668038010597229)
--(axis cs:1,0.435961961746216)
--(axis cs:2,0.581790924072266)
--(axis cs:3,0.635142207145691)
--(axis cs:4,0.68700909614563)
--(axis cs:5,0.755411267280579)
--(axis cs:6,0.771685302257538)
--(axis cs:7,0.807275950908661)
--(axis cs:8,0.784996867179871)
--(axis cs:9,0.81751811504364)
--(axis cs:10,0.820428848266602)
--(axis cs:11,0.828335404396057)
--(axis cs:12,0.827095746994019)
--(axis cs:12,0.92023754119873)
--(axis cs:12,0.92023754119873)
--(axis cs:11,0.921664595603943)
--(axis cs:10,0.922237753868103)
--(axis cs:9,0.92648184299469)
--(axis cs:8,0.907003045082092)
--(axis cs:7,0.898724019527435)
--(axis cs:6,0.904981315135956)
--(axis cs:5,0.881255507469177)
--(axis cs:4,0.856990814208984)
--(axis cs:3,0.829524278640747)
--(axis cs:2,0.768209099769592)
--(axis cs:1,0.668038010597229)
--cycle;

\path [draw=turquoise, fill=turquoise, opacity=0.2]
(axis cs:1,0.860384583473206)
--(axis cs:1,0.690948724746704)
--(axis cs:2,0.805305540561676)
--(axis cs:3,0.839149355888367)
--(axis cs:4,0.832089304924011)
--(axis cs:5,0.867131114006042)
--(axis cs:6,0.867729544639587)
--(axis cs:7,0.866133213043213)
--(axis cs:8,0.856940269470215)
--(axis cs:9,0.861459612846375)
--(axis cs:10,0.885537087917328)
--(axis cs:11,0.87351381778717)
--(axis cs:12,0.878189027309418)
--(axis cs:12,0.92981094121933)
--(axis cs:12,0.92981094121933)
--(axis cs:11,0.936486124992371)
--(axis cs:10,0.945129573345184)
--(axis cs:9,0.935873746871948)
--(axis cs:8,0.93372631072998)
--(axis cs:7,0.927866816520691)
--(axis cs:6,0.932270407676697)
--(axis cs:5,0.937535524368286)
--(axis cs:4,0.924577236175537)
--(axis cs:3,0.906850576400757)
--(axis cs:2,0.899361073970795)
--(axis cs:1,0.860384583473206)
--cycle;

\addplot [semithick, tomato, dashed]
table {%
1 0.551999986171722
2 0.675000011920929
3 0.732333242893219
4 0.771999955177307
5 0.818333387374878
6 0.838333308696747
7 0.852999985218048
8 0.845999956130981
9 0.871999979019165
10 0.871333301067352
11 0.875
12 0.873666644096375
};
\addplot [semithick, turquoise]
table {%
1 0.775666654109955
2 0.852333307266235
3 0.872999966144562
4 0.878333270549774
5 0.902333319187164
6 0.899999976158142
7 0.897000014781952
8 0.895333290100098
9 0.898666679859161
10 0.915333330631256
11 0.904999971389771
12 0.903999984264374
};
\end{axis}

\end{tikzpicture}}
\end{subfigure}
\caption{Fraction of test documents classified correctly vs number of representative documents. Solid turquoise line corresponds and dashed tomato line correspond to learned and random reference documents respectively. Top to bottom: each row corresponds to $\rho$ increasing in $\{0.0, 0.01, 0.1, 1.0, 10.0\}$. Left to right: each column corresponds to the methods \{1NN, MAD, MBL, MBL-QP, MC\}. We observe that learned documents outperform random documents in every experiment for all levels of $\rho$. The smaller variance of the learned documents is explained by the fact that the learned documents were trained with more documents than were used.}
\label{fig:nlp-std}
\end{figure}

\end{document}